\newtheorem{theorem}{Theorem}
\newtheorem{proposition}{Proposition}
\newtheorem{lemma}{Lemma}
\newtheorem{definition}{Definition}
\newtheorem{remark}{Remark}
\newtheorem*{theorem*}{Theorem}
\newtheorem*{proposition*}{Proposition}
\newtheorem*{lemma*}{Lemma}
\newtheorem*{corollary*}{Corollary}
\newtheorem*{example*}{Example}
\def\vzero{{\bm{0}}}
\def\0{\bm{0}}
\def\1{\bm{1}}
\def\va{{\bm{a}}}
\def\vc{{\bm{c}}}
\def\vu{{\bm{u}}}
\def\vv{{\bm{v}}}
\def\vw{{\bm{w}}}
\def\vx{{\bm{x}}}
\def\vU{{\bm{U}}}
\def\vW{{\bm{W}}}
\DeclareMathAlphabet{\mathsfit}{\encodingdefault}{\sfdefault}{m}{sl}
\SetMathAlphabet{\mathsfit}{bold}{\encodingdefault}{\sfdefault}{bx}{n}
\def\sN{{\mathbb{N}}}
\def\sR{{\mathbb{R}}}
\def\E{\displaystyle \mathbb{E}}
\def\Var{\displaystyle \mathrm{Var}}
\newcommand{\loss}{\mathcal{L}}
\newcommand\op[1]{\operatorname{#1}}
\newcommand{\norm}[1]{\left\lVert#1\right\rVert}
\DeclareMathOperator*{\argmax}{arg\,max}
\DeclareMathOperator*{\argmin}{arg\,min}
\newcommand{\eg}{\textit{e.g., }}
\newcommand{\ie}{\textit{i.e., }}
\newcommand\ours{\textsf{SSEM}}
\newcommand{\sumi}{\sum_{i\in[m]}}
\definecolor{mydarkblue}{rgb}{0,0.08,0.45}
\begin{document}


\runningtitle{A Theoretical Framework for Preventing Class Collapse in Supervised Contrastive Learning}


\runningauthor{Chungpa Lee, Jeongheon Oh, Kibok Lee,  Jy-yong Sohn}

\twocolumn[

\aistatstitle{A Theoretical Framework for Preventing Class Collapse \\ in Supervised Contrastive Learning}

\aistatsauthor{ Chungpa Lee \And Jeongheon Oh \And Kibok Lee \And  Jy-yong Sohn$^\dag$ }

\aistatsaddress{ Yonsei University \And  Bank of Korea \\ Yonsei University$^*$ \And Yonsei University \And Yonsei University }
]

\begin{abstract}
Supervised contrastive learning (SupCL) has emerged as a prominent approach in representation learning, leveraging both supervised and self-supervised losses. However, achieving an optimal balance between these losses is challenging; failing to do so can lead to class collapse, reducing discrimination among individual embeddings in the same class. In this paper, we present theoretically grounded guidelines for SupCL to prevent class collapse in learned representations. Specifically, we introduce the Simplex-to-Simplex Embedding Model (\ours{}), a theoretical framework that models various embedding structures, including all embeddings that minimize the supervised contrastive loss. Through \ours{}, we analyze how hyperparameters affect learned representations, offering practical guidelines for hyperparameter selection to mitigate the risk of class collapse. Our theoretical findings are supported by empirical results across synthetic and real-world datasets.
\end{abstract}

\vspace{8pt}

\section{INTRODUCTION}

Contrastive learning (CL) has recently demonstrated significant advancements in self-supervised representation learning, where data representations are learned by comparing views generated by augmentations of the training data~\citep{chen2020simple, he2020momentum, radford2021clip}. Specifically, CL maximizes the similarity between positive pairs, derived from different views of the same instance, while simultaneously minimizing the similarity between negative pairs, derived from views of distinct instances. While self-supervised CL does not leverage available supervision, such as class labels in classification tasks, incorporating such information can be advantageous for representation learning. To take advantage of this, \citet{khosla2020supervised} introduced supervised contrastive learning (SupCL), which extends self-supervised CL by treating views from different instances within the same class as positive pairs. Furthermore, \citet{islam2021broad} empirically analyzed the effectiveness of SupCL through experiments on the transferability of representations learned both with and without supervision.

However, recent studies have pointed out that optimizing the supervised contrastive loss often results in \emph{class collapse}, where the embedding vectors of all instances within the same class converge into the same point in the embedding space~\citep{graf2021dissecting, papyan2020prevalence}.
Class collapse significantly degrades the generalizability of learned representations by eliminating within-class variance that is potentially crucial for effective transfer learning \citep{islam2021broad, chen2022perfectly}. While several follow-up studies have explored the conditions under which class collapse occurs and proposed strategies to prevent it~\citep{feng2021rethinking, chen2022perfectly, wang2023opera, xue2023features}, their analyses are often limited to specific conditions, such as assumptions about data distribution. The precise mechanism underlying class collapse in SupCL is not yet fully understood.

To this end, we provide a theoretical analysis of the behavior of embeddings that minimize the supervised contrastive loss (SupCL loss), offering guidelines on how to avoid class collapse. Our analysis applies to a broad range of data configurations, including varying numbers of classes, instances, and augmentations, whenever the SupCL loss is given as a convex combination of the supervised and self-supervised contrastive losses.
Our contributions are summarized as follows:
\begin{itemize}
    \item
    In Sec.~\ref{sec:optimal:structure}, we propose the Simplex-to-Simplex Embedding Model (\ours{}), a theoretical framework for modeling diverse embedding structures. We prove that embeddings $\vU^\star$ that minimize the SupCL loss can only be found within \ours{}. This result establishes \ours{} as a fundamental tool for analyzing embeddings learned in SupCL.

    \item
    In Sec.~\ref{sec:emb:var}, we provide guidelines for designing the SupCL loss to mitigate class collapse in learned embeddings. In particular, we derive a mathematical expression for the variance of embeddings $\vU^\star$, a key metric for assessing class collapse when the within-class variance is zero. Furthermore, we characterize the relationship between embedding variances and the hyperparameters.

    \item
    In Sec.~\ref{sec:experiment}, we present experiments on both synthetic and real-world datasets, demonstrating that our theoretical findings hold in practice. Specifically, the variance of the learned embeddings aligns with our theoretical predictions, allowing us to identify optimal hyperparameters in SupCL that balance within-class and between-class variance, leading to improved transfer learning performance.
\end{itemize}

\section{RELATED WORK}

\paragraph{SupCL Methods.}
SupCL leverages supervised information by jointly utilizing the supervised contrastive loss and the InfoNCE-based self-supervised loss \citep{khosla2020supervised}, demonstrating superior performance compared to existing CL methods that only use self-supervised loss \citep{islam2021broad, gunel2020supervised}. To further enhance SupCL, \citet{chen2022perfectly} replace the InfoNCE-based self-supervised loss with the class-conditional InfoNCE loss. Meanwhile, \citet{feng2021rethinking} refine the supervised contrastive loss by selecting only the $k$ nearest neighbors within each class as positive pairs, while \citet{wang2023opera} introduce hierarchical supervision to balance instance-level and class-level information. Building on these approaches, \citet{oh2024effectiveness} incorporate supervision into asymmetric non-CL methods \citep{grill2020bootstrap, chen2021exploring} to further improve representation learning.

Despite these advancements, most existing methods require extensive hyperparameter tuning to achieve optimal performance. Rather than proposing a new SupCL method, this paper conducts a theoretical analysis on the role of hyperparameters in the SupCL loss function, ensuring that searching within a narrower region is sufficient and thus enabling more efficient hyperparameter optimization.

\paragraph{Analysis on SupCL.}

Although considerable theoretical research has been conducted on CL \citep{arora2019theoretical, parulekar2023infonce, wen2021toward, yang2023understanding}, the understanding of SupCL remains underexplored. Notably, \citet{xue2023features} showed that the bias of gradient descent can cause subclass representation collapse and suppress harder class-relevant features. However, their analysis is specific to spectral contrastive loss \citep{haochen2021provable}, which is rarely used in practice. Moreover, most existing studies rely on strict assumptions about data distribution, limiting their applicability to real-world datasets. Instead of focusing on a specialized loss function or imposing such assumptions, we directly optimize the widely adopted SupCL loss \citep{khosla2020supervised, islam2021broad, oh2024effectiveness}, ensuring broader applicability across diverse datasets.

\paragraph{Understanding SupCL Through Embedding Structures.}

Several previous works aim at understanding the optimal embedding structures minimizing the supervised loss and/or contrastive loss. In the supervised learning setup, one well-known phenomenon of optimal embeddings is neural collapse \citep{papyan2020prevalence}, where the embeddings collapse to a simplex Equiangular Tight Frame (ETF). Similarly, in the case of CL, \citet{lu2022neural} showed that the optimal embeddings minimizing the softmax-based contrastive loss construct the simplex ETF, and \citet{lee2024analysis} generalized the result to the cases of minimizing other CL losses including the sigmoid-based loss \citep{zhai2023sigmoid}.
For SupCL, the optimal embeddings that minimize the supervised contrastive loss result in class collapse, where embeddings of the same class collapse to a single point of simplex ETF \citep{graf2021dissecting}.

Notably, \citet{chen2022perfectly} introduced the class-conditional InfoNCE loss~\citep{oord2018representation} to spread out the embedding vectors within each class, and combined it with the supervised contrastive loss. They demonstrated that non-collapsed embeddings can achieve the lower value of the combined loss than collapsed ones in the specific case where the number of classes is two or three, without identifying the optimal embeddings. Although \citet{chen2022perfectly} provided a meaningful direction of designing the SupCL loss in a way that the optimal embeddings do not suffer from class collapse, this work cannot be extended to general cases when the number of classes is more than three. In addition, none of existing works examined the behavior of the optimal embeddings in the SupCL setup, specifically regarding how to construct a loss function that consistently avoids class collapse.

\vspace{16pt}
Building on prior works, this paper specifies the optimal embeddings that minimize a convex combination of supervised and self-supervised losses, providing guidelines for selecting hyperparameters to avoid class collapse in SupCL.

\section{PROBLEM FORMULATION}
\label{sec:formulation}

We consider the problem of training an encoder $f$ that maps the feature $\vx$ into the embedding $\vu = f(\vx)$ by using SupCL. The training sample is categorized into $m$ classes, and the sample size for each class is $n$. Every instance is augmented, \ie generating similar instances by data augmentation techniques; the number of augmentation for each instance is denoted by $p$. We use the notation $\vx_{i,j,k}$ to represent the feature of $k$-th augmentation of $j$-th instance in $i$-th class for $i\in [m], j \in [n]$ and $k \in [p]$, where we define $[m]:=\{1, 2, \cdots, m\}$ for positive integer $m$.

The output of the encoder $f$, also referred to \textit{embedding}, is denoted as $\vu_{i,j,k} = f(\vx_{i,j,k}) \in \sR^d$, where $d$ is the embedding dimension. To streamline the notation, we define several sets of embedding vectors:
\begin{itemize}
    \item \textit{Same-instance embedding set}: \\ This is the set of embeddings for the $j$-th instance in $i$-th class, denoted by $\vU_{i,j} = \{\vu_{i,j,k} \}_{k \in[p]}$ for all $i\in[m]$ and $j\in[n]$.
    \item \textit{Same-class embedding set}: \\ This set contains the embeddings of all instances in the $i$-th class, denoted by $\vU_{i} = \cup_{j\in[n]} \vU_{i,j}$ for all $i\in[m]$.
    \item \textit{Entire embedding set}: \\ This set includes the embeddings of all instances, represented as $\vU = \cup_{i\in[m]} \vU_{i}$.
\end{itemize}

Note that the number of embeddings in each set is $|\vU|=mnp$ and $|\vU_i|=np$ for all $i\in[m]$. Throughout the paper, we assume that the encoder is normalized, \ie $\lVert f(\vx) \rVert_2 = 1$ for all input $\vx$, which is widely used in related works \citep{wang2017normface, wu2018unsupervised, tian2020contrastive, wang2020understanding, zimmermann2021contrastive, sreenivasan2023mini, lee2024analysis}.

The encoder is trained by optimizing the SupCL loss denoted as 
\begin{equation}
    \label{eq:loss}
    \loss (\vU):=(1-\alpha)\; \loss_{\op{Sup}}(\vU)+\alpha\; \loss_{\op{Self}}(\vU),
\end{equation}
where $\loss_{\op{Sup}}(\vU)$ is the supervised contrastive loss that considers the supervision (class information of each instance), $\loss_{\op{Self}}(\vU)$ is the self-supervised contrastive loss that does not make use of the class information, and $\alpha \in [0,1]$ is the coefficient for combining two losses. 
Here, each loss term is defined as 
\begin{align}
\loss_{\op{Sup}}(\vU) 
& = \label{eq:loss-sup}
- \frac{1}{mn(n-1)p^2}
\\& \nonumber
\sum_{\substack{i\in[m] \\ j\ne j'\in[n]}}
\sum_{\vu \in \vU_{i,j}}\sum_{\vv \in \vU_{i,j'}}
\log \frac{\exp(\vu^\top \vv / \tau)}{\sum_{\vw \in \vU}\exp(\vu^\top \vw / \tau)}
\end{align}
and
\begin{align}
\loss_{\op{Self}}(\vU)
& = \label{eq:loss-cl}
-\frac{1}{mnp^2}
\\& \nonumber
\sum_{\substack{i\in[m] \\ j\in[n]}}
\sum_{\vu \in \vU_{i,j}}\sum_{\vv \in \vU_{i,j}}
\log \frac{\exp(\vu^\top \vv / \tau)}{ \sum_{\vw \in \vU}\exp(\vu^\top \vw / \tau)},
\end{align}
where $j \ne j' \in [n]$ is the simplified notation representing $j\in[n]$ and $j'\in[n] \setminus \{j\}$. 
Note that $\loss_{\op{Sup}}(\vU)$ in \eqref{eq:loss-sup} is slightly different from what was proposed in the original supervised contrastive learning paper \citep{khosla2020supervised} which does not include the condition $j\ne j'$ in the first summation of \eqref{eq:loss-sup}.
We add this condition to make sure that the positive pairs ($\vu$ and $\vv$) counted in $\loss_{\op{Sup}}(\vU)$ and $\loss_{\op{Self}}(\vU)$ do not overlap; the augmented entities $\vu, \vv \in \vU_{i,j}$ of the same instance are counted in  $\loss_{\op{Self}}(\vU)$, while the augmented entities ($\vu \in \vU_{i,j}$ and $\vv \in \vU_{i,j'}$) of different instances in the same class are counted in  $\loss_{\op{Sup}}(\vU)$.

To make our key findings easier to understand, we first consider the simplest case where $p=1$, \ie 
each instance has only one augmentation.
In such a case, the index $k$ for the augmentation (in the embedding vector $\vu_{i,j,k}$) disappears, and the loss terms in \eqref{eq:loss-sup} and \eqref{eq:loss-cl} reduce to 
{%
\thinmuskip=1mu 
\medmuskip=1mu plus 2mu minus 4mu 
\thickmuskip=1mu plus 5mu
\begin{align}
\loss_{\op{Sup}}(\vU) 
&  = \label{eq:loss:sup:assume}
-\frac{1}{mn(n-1)}
\sum_{\substack{i\in[m] \\ j\ne j'\in[n]}}
\log \frac{\exp(\vu_{i,j}^\top \vu_{i,j'} / \tau)}{\sum_{\vw \in \vU}\exp(\vu_{i,j}^\top \vw / \tau)}
\end{align}
and
\begin{align}
\loss_{\op{Self}}(\vU)
&  = \label{eq:loss:cl:assume}
-\frac{1}{mn}
\sum_{\substack{i\in[m] \\ j\in[n]}}
\log \frac{\exp(\vu_{i,j}^\top \vu_{i,j} / \tau)}{\sum_{\vw\in\vU}\exp(\vu_{i,j}^\top \vw / \tau)}.
\end{align}
}

\begin{figure*}[t]
    \centering
    \begin{tabular}{ccc}
    \includegraphics[width=0.30\textwidth]{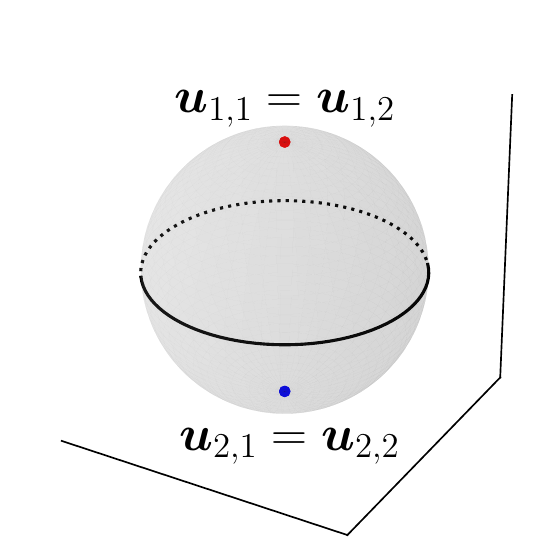}
    &\includegraphics[width=0.30\textwidth]{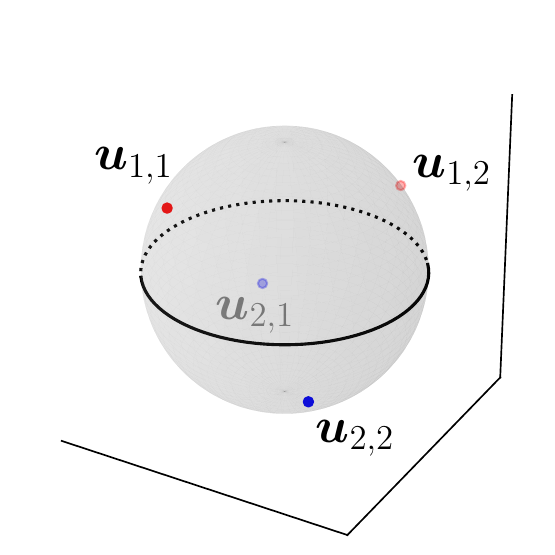}
    &\includegraphics[width=0.33\textwidth]{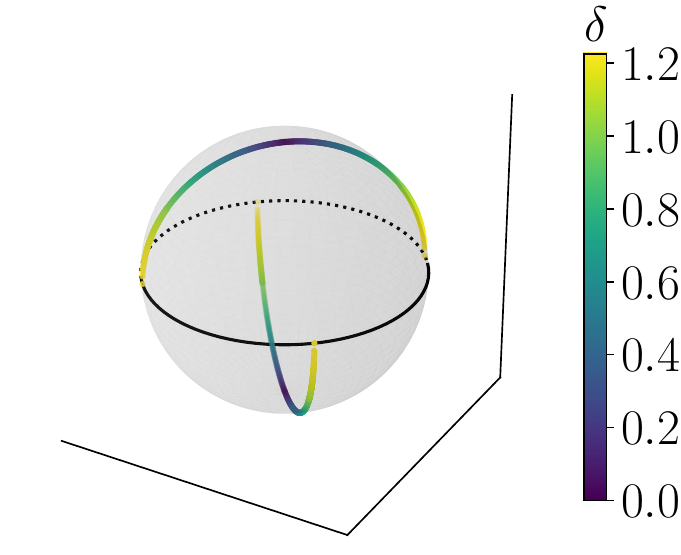} \\
    (a) $\delta=0$ & (b) $\delta=1$ & (c) $\delta\in \big[0, \sqrt{1.5}\big]$ \\
    \end{tabular}
    \caption{
    Illustration of the proposed Simplex-to-Simplex Embedding Model (SSEM) in Def.~\ref{def:model}, where both the number of classes ($m$) and the number of instances per class ($n$) are set to 2. The set of embedding vectors in \ours{} is denoted by $\vU = \{ \vu_{1,1}, \vu_{1,2}, \vu_{2,1}, \vu_{2,2} \}$, where the superscript $\delta$ in~\eqref{eq:model:entire:set} is omitted for simplicity. Each embedding's first subscript index indicates its class, with embeddings of class 1 drawn in red and those of class 2 in blue. The embeddings are visualized for different values of $\delta$:
    (a) When $\delta=0$, \ours{} is equal to $1$-simplex ETF, which is when class collapse happens. (b) When $\delta=1$, \ours{} is equal to $3$-simplex ETF, where every embedding is equidistant. (c) When $\delta$ varies in the range of $\big[0, \sqrt{1.5}\big]$, we visualize the trajectory of $\vu_{1,1}$ and $\vu_{1,2}$ in the upper arc, and the trajectory of $\vu_{2,1}$ and $\vu_{2,2}$ in the lower arc, where the color in the trajectory transits from purple to yellow as $\delta$ increases.
    }
    \label{fig:ssem}
\end{figure*}

Under the above setting, we focus on understanding the optimal embedding set 
\begin{equation}
    \label{eq:optimal:embedding:general}
    \vU^\star := \argmin_{\vU} \loss (\vU)
\end{equation}
that minimizes the SupCL loss in \eqref{eq:loss}. 
In Sec.~\ref{sec:optimal:structure} and Sec.~\ref{sec:emb:var}, we provide our theoretical results on the optimal embedding set $\vU^{\star}$ when $p=1$. These results are extended to general $p>1$ case in Appendix~\ref{appendix:proof}.

\section{OPTIMAL EMBEDDING 
}\label{sec:optimal:structure}

In this section, we first define Simplex-to-Simplex Embedding Model (\ours{}), a framework of embedding sets that models different types of geometric embedding vectors. 
Then, we show that the optimal embedding set $\vU^{\star}$ that minimizes the SupCL loss is only included in \ours{}; note that this result is helpful for analyzing the properties of optimal embeddings in the following sections.

\subsection{Simplex-to-Simplex Embedding Model}

Before defining our proposed \ours{}, we recall an embedding set called 
simplex equi-angular tight frame (ETF), where each vector is equally spaced from every other vector:
\begin{definition}[Simplex ETF]
\label{def:simplex}
A set of $n$ vectors $\vU$ on the $d$-dimensional unit sphere is called 
$(n-1)$-simplex ETF, if
\begin{align*}
&\lVert\vu\rVert_2^2=1 
\; \text{and} \;
\vu^\top\vv=-\frac{1}{n-1}, &\forall \vu \in \vU,  \vv \in \vU\setminus\{\vu\} 
.
\end{align*}
Note that $(n-1)$-simplex ETF exists when $d \ge n-1$. 
\end{definition}

Recall that our goal is to find the optimal embedding set $\vU^{\star}$ that minimizes the SupCL loss 
\begin{align*}
\loss(\vU) =  (1-\alpha)\; \loss_{\op{Sup}}(\vU)  + \alpha\; \loss_{\op{Self}}(\vU)
\end{align*}
in~\eqref{eq:loss}, which is a convex combination of $\loss_{\op{Self}}(\vU)$ and $\loss_{\op{Sup}}(\vU)$. According to recent works, the optimal embedding set that minimizes $\loss_{\op{Self}}(\vU)$ follows the $(mn-1)$-simplex ETF \citep{lu2022neural, lee2024analysis}, while the optimal embedding set that minimizes $\loss_{\op{Sup}}(\vU)$ follows the $(m-1)$-simplex ETF~\citep{graf2021dissecting}.
This means that we already know the solution $\vU^{\star}$ when $\alpha=0$ or $\alpha=1$, but not for the case of $0 < \alpha < 1$.
To find the solution $\vU^{\star}$ for all $\alpha$ values, we propose \ours{}, a framework that models the \textit{transition} from $(nm-1)$-simplex ETF to $(m-1)$-simplex ETF.
To be specific, \ours{} uses a single parameter $\delta$ that effectively controls the shift between two embedding sets, \ie $(nm-1)$-simplex ETF  and $(m-1)$-simplex ETF. 
Note that \ours{} stands for ``Simplex-to-Simplex Embedding Model'', due to its ability to explain the transition from one simplex ETF to another simplex ETF by changing $\delta$.
We formally define \ours{} as below:

\begin{definition}[Simplex-to-Simplex Embedding Model]
\label{def:model}
Let positive integers $m$,$n$, and a real value $\delta \in \Big[0,\sqrt\frac{mn-1}{m(n-1)}\Big]$ be given.  
We define Simplex-to-Simplex Embedding Model, denoted by ($m,n,\delta$)-\ours{}, as the set of $mn$ vectors
\begin{align}
    \label{eq:model:entire:set}
    \vU^\delta = 
    \big\{\vu_{i,j}^{\delta}\big\}_{i\in[m], j \in [n]}
\end{align}
satisfying the following:

For all $i \ne i'\in [m]$ and $j\ne j'\in[n]$,
\begin{align}
    \lVert\vu^\delta\rVert_2^2
    &\label{eq:model:same}
    = 1 
    \qquad\qquad\qquad\qquad\qquad
    \forall \vu^\delta \in \vU_{i,j}^\delta,
\end{align}
\begin{align}
    (\vu^\delta)^\top \vv^\delta
    &\label{eq:model:same:class}
    = 1 - \delta^2 \frac{mn}{mn-1}  
    \;
    \forall \vu^\delta \in \vU_{i,j}^\delta, \vv^\delta \in \vU_{i,j'}^\delta,
    \\
    (\vu^\delta)^\top \vv^\delta
    &\nonumber= 
    - \frac{1}{m-1} + \delta^2 \frac{m(n-1)}{(m-1)(mn-1)}
    \\&\label{eq:model:different} 
    \qquad\qquad\qquad\qquad
    \forall \vu^\delta \in \vU^{\delta}_i, \vv^\delta \in \vU_{i'}^\delta,
\end{align}
where $\vU_{i,j}^\delta := \{\vu_{i,j}^\delta \}$ and  $\vU^{\delta}_i: = \cup_{j\in[n]} \vU_{i,j}^\delta$.
\end{definition}

Let $\vu^\delta$ and $\vv^\delta$ be two distinct embedding vectors chosen from the set of $mn$ vectors forming ($m,n,\delta$)-\ours{} in Def.~\ref{def:model}. If $\vu^\delta$ and $\vv^\delta$ are from different instances ($j, j'$) of the same class $i$, their cosine similarity decreases as the parameter $\delta$ increases, as shown in \eqref{eq:model:same:class}. On the other hand, if $\vu^\delta$ and $\vv^\delta$ are from instances of different classes $i, i'$, their cosine similarity increases as $\delta$ gets larger, as stated in \eqref{eq:model:different}. Every embedding in ($m,n,\delta$)-\ours{} has unit norm, as described in \eqref{eq:model:same}.
As an example, Fig.~\ref{fig:ssem} visualizes the $(m,n,\delta)-$\ours{} for $m=n=2$ and various $\delta$ values, where the embedding dimension is set to $d=3$.

The below proposition shows the existence of \ours{} when the embedding dimension is sufficiently large, the proof of which is in Appendix~\ref{appendix:properties:ssem}.

\begin{proposition}[Existence of \ours{}]
    \label{thm:exist:model}
    Suppose $mn\geq 2$ and $d\geq mn-1$ hold.
Let a set of $mn$ vectors $\{\vw_{i,j}\}_{i\in[m], j\in[n]}$ forms the $(mn-1)$-simplex ETF in $\mathbb{R}^{d}$. For a given $\delta \in \Big[0,\sqrt\frac{mn-1}{m(n-1)}\Big]$,
define the set of $mn$ vectors $\vU^\delta := \big\{\vu_{i,j}^{\delta}\big\}_{i\in[m], j \in [n]}$ as
\begin{align}
    &\nonumber
    \vu_{i,j}^{\delta} := \delta \vw_{i,j} + h(\delta) \sum_{j'\in[n]} \vw_{i,j'}    \in \mathbb{R}^{d}
    \quad\forall i\in[m], j \in [n],
\end{align}
where
\begin{align}
    \nonumber
    h(\delta) := -\frac{\delta}{n} \pm \frac{1}{n}\sqrt{\frac{\delta^2m(1-n) +(mn-1)}{m-1}}.
\end{align}
Then, the set of $mn$ vectors $\vU^\delta$ constructs ($m,n,\delta$)-\ours{}.
\end{proposition}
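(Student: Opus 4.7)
The plan is to verify Definition~\ref{def:model} by direct computation, exploiting the simplex ETF identities satisfied by the $\vw_{i,j}$. First I would introduce the shorthand $\vs_i := \sum_{j'\in[n]} \vw_{i,j'}$, so that $\vu_{i,j}^\delta = \delta \vw_{i,j} + h(\delta)\vs_i$, and pre-compute the four inner products that will repeatedly appear: $\|\vw_{i,j}\|^2 = 1$, $\vw_{i,j}^\top \vw_{i',j'} = -\tfrac{1}{mn-1}$ for distinct indices, and consequently
\begin{align*}
\|\vs_i\|^2 &= \tfrac{n^2(m-1)}{mn-1}, & \vw_{i,j}^\top \vs_i &= \tfrac{n(m-1)}{mn-1},\\
\vs_i^\top \vs_{i'} &= -\tfrac{n^2}{mn-1}, & \vw_{i,j}^\top \vs_{i'} &= -\tfrac{n}{mn-1},
\end{align*}
where $i\ne i'$. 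These follow by splitting the sums according to whether the indexed pairs coincide and applying the ETF inner products.

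Next I would verify the unit-norm condition \eqref{eq:model:same}: expanding $\|\vu_{i,j}^\delta\|^2 = \delta^2 + 2\delta h(\delta) \vw_{i,j}^\top \vs_i + h(\delta)^2 \|\vs_i\|^2$ and substituting the expressions above yields a quadratic equation in $h(\delta)$. Solving this quadratic reproduces exactly the formula given for $h(\delta)$, and the discriminant is nonnegative precisely when $\delta^2 \le \tfrac{mn-1}{m(n-1)}$, which matches the stated range. A convenient byproduct I would record is $2\delta h(\delta) + n\, h(\delta)^2 = \tfrac{(1-\delta^2)(mn-1)}{n(m-1)}$, to be reused later.

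For the same-class condition \eqref{eq:model:same:class}, I would expand $(\vu_{i,j}^\delta)^\top \vu_{i,j'}^\delta$ for $j\ne j'$ into the three terms $\delta^2 \vw_{i,j}^\top \vw_{i,j'}$, $\delta h(\delta)\bigl(\vw_{i,j}^\top \vs_i + \vw_{i,j'}^\top \vs_i\bigr)$, and $h(\delta)^2\|\vs_i\|^2$. Plugging in the pre-computed quantities and invoking the byproduct identity collapses the sum cleanly to $1 - \delta^2\tfrac{mn}{mn-1}$, as required.

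For the between-class condition \eqref{eq:model:different}, expanding $(\vu_{i,j}^\delta)^\top \vu_{i',j'}^\delta$ with $i\ne i'$ gives $-\tfrac{1}{mn-1}\bigl(\delta^2 + 2\delta n\, h(\delta) + n^2 h(\delta)^2\bigr) = -\tfrac{(\delta + n h(\delta))^2}{mn-1}$. The key observation — and the one place that needs care — is that the defining formula can be rewritten as $\delta + n h(\delta) = \pm\sqrt{\tfrac{\delta^2 m(1-n) + (mn-1)}{m-1}}$, so squaring gives $(\delta + n h(\delta))^2 = \tfrac{mn-1 - \delta^2 m(n-1)}{m-1}$; dividing by $-(mn-1)$ yields exactly $-\tfrac{1}{m-1} + \delta^2\tfrac{m(n-1)}{(m-1)(mn-1)}$. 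This last algebraic manipulation is the main (though still mild) obstacle, since it is what ties the two sign choices of $h(\delta)$ together and shows both branches produce valid SSEM configurations.
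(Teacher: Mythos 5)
Your proposal is correct and follows essentially the same route as the paper's proof: introduce the per-class sum vector, pre-compute the four ETF-derived inner products, and verify the three defining conditions of SSEM by direct expansion, using the identity $n\,h(\delta)^2 + 2\delta h(\delta) = \tfrac{(mn-1)(1-\delta^2)}{n(m-1)}$ extracted from the unit-norm computation. Your handling of the between-class case via the perfect square $-\tfrac{(\delta+n h(\delta))^2}{mn-1}$ is a slightly cleaner reorganization than the paper's (which factors out $-\tfrac{1}{m-1}$ and reuses the norm identity), but it is the same calculation in substance.
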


The dimensionality assumption $d\geq mn-1$ for \ours{} in Proposition~\ref{thm:exist:model} is necessary because the simplex ETF defined in Def.~\ref{def:simplex} exists only when the dimension is sufficiently large.
Investigating the existence of \ours{} for $d<mn-1$ is closely related to the Thomson problem \citep{thomson1904xxiv}, which has been extensively studied in prior work \citep{sustik2007existence,fickus2012steiner, fickus2015tables, azarija2018there}. This challenge aligns with a common assumption in theoretical studies, where embeddings learned through supervised learning \citep{graf2021dissecting} and self-supervised learning \citep{lu2022neural, lee2024analysis} are typically analyzed under the condition that the dimension is sufficiently large. While our mathematical results in the following sections hold strictly for $d\geq mn-1$, our experimental findings in Sec.~\ref{sec:experiment} indicate that the properties of the optimal embedding discussed in Sec.~\ref{sec:emb:var} remain valid even when $d < mn-1$.

\subsection{Optimal Embeddings are in \ours{}}

Now we show that the optimal embedding set $\vU^\star$ in \eqref{eq:optimal:embedding:general} that minimizes the SupCL loss is only included in \ours{}, the proof of which is in Appendix~\ref{appendix:proof:ssem}.

\begin{theorem}[Optimality of \ours{}]
    \label{thm:optimal:embedding}
    Suppose $mn\geq 2$ and $d\geq mn-1$ hold.
    Then, all embedding sets $\vU^{\star}$ that minimize the loss $\loss(\vU)$ in \eqref{eq:loss} are included in the \ours{} in Def.~\ref{def:model}, \ie
\begin{align}
    \label{eq:main:theorem:existence}
    \forall \vU^\star \in \argmin_{\vU} \loss (\vU),
    \exists! \delta \in [0, 1] \text{ such that }  \vU^{\delta} = \vU^{\star}.
\end{align}
\end{theorem}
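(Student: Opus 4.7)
The plan is to reduce the minimization of $\loss(\vU)$ to a one-dimensional convex problem in the SSEM parameter $\delta$. First, I would use $\lVert\vu_{i,j}\rVert_2=1$ to collapse the self-supervised alignment into a constant, rewriting
\begin{align*}
\loss(\vU) = -\tfrac{\alpha}{\tau} -\tfrac{1-\alpha}{\tau mn(n-1)}\!\sum_{i, j\ne j'}\!\vu_{i,j}^\top\vu_{i,j'} +\tfrac{1}{mn}\sum_{i,j}\log\sum_{\vw\in\vU}\!\exp(\vu_{i,j}^\top\vw/\tau),
\end{align*}
so that $\loss$ depends only on the Gram matrix of $\vU$. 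Applying Jensen's inequality (convexity of $\exp$) inside each log-sum-exp, grouping the $n-1$ same-class and $n(m-1)$ different-class neighbors of each anchor separately, yields
\begin{align*}
\log\sum_{\vw\in\vU}\exp(\vu_{i,j}^\top\vw/\tau) \;\geq\; F(\bar{a}_{i,j},\bar{b}_{i,j}),
\end{align*}
where $F(a,b):=\log\!\big[e^{1/\tau}+(n-1)e^{a/\tau}+n(m-1)e^{b/\tau}\big]$ and $\bar{a}_{i,j},\bar{b}_{i,j}$ are the local averages of the same-class and different-class inner products at the anchor $\vu_{i,j}$. A second Jensen across $(i,j)$, using the joint convexity of the log-sum-exp $F$, gives $\tfrac{1}{mn}\sum_{i,j}F(\bar{a}_{i,j},\bar{b}_{i,j})\geq F(a,b)$ for the global averages $a$ and $b$. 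Combining, $\loss(\vU)\geq L(a,b):=-\tfrac{\alpha}{\tau}-\tfrac{(1-\alpha)a}{\tau}+F(a,b)$, with equality iff every same-class inner product equals $a$ and every different-class one equals $b$, which is exactly the SSEM conditions \eqref{eq:model:same:class}--\eqref{eq:model:different} in Def.~\ref{def:model}.

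I would then minimize $L(a,b)$ over its feasible region. The essential constraint is $0\leq\lVert\sum_{i,j}\vu_{i,j}\rVert_2^2=mn+mn(n-1)a+m(m-1)n^2b$, i.e., $(n-1)a+(m-1)nb\geq -1$. Since $\partial L/\partial b>0$ everywhere, $L(a,b)$ is minimized over feasibility by saturating this inequality, forcing $\sum_{i,j}\vu_{i,j}=\vzero$---precisely the linear identity built into every SSEM. On that boundary line, the SSEM formulas give an injective affine parametrization $s:=\delta^2\mapsto(a(s),b(s))$, reducing $L$ to a strictly convex one-variable function $L(s)$. A direct derivative computation at $s=1$ (where $a=b$) uses the identity $\partial_a F(a,a)/(n-1)=\partial_b F(a,a)/(n(m-1))$: the two partials cancel against the SSEM slopes $da/ds$ and $db/ds$, leaving $\tfrac{dL}{ds}\big|_{s=1}=\tfrac{(1-\alpha)mn}{\tau(mn-1)}\geq 0$. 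Strict convexity then pins the unique minimizer $s^\star\in[0,1]$.

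Proposition~\ref{thm:exist:model} provides an explicit SSEM realizing each admissible $\delta$ when $d\geq mn-1$, so the master lower bound $L(a(\delta^\star),b(\delta^\star))$ is actually attained. Hence every minimizer $\vU^\star$ must realize the SSEM inner-product structure, and $\delta\in[0,1]$ is uniquely recovered through, e.g., $\delta^2=(1-a)(mn-1)/(mn)$. I expect the main obstacle to be the simultaneous bookkeeping of the two Jensen equality conditions: one must verify that attaining the combined lower bound forces the full SSEM symmetry (a single same-class value $a$ and a single different-class value $b$ across every anchor and every class pair), not merely the weaker per-anchor invariance that either Jensen alone would yield. The subsidiary delicate point is the cancellation identity at $s=1$, which is what pins the optimum inside $[0,1]$ uniformly in $\alpha\in[0,1]$.
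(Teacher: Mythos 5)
Your proposal is correct and follows essentially the same route as the paper's proof: Jensen's inequality applied to the log-partition term split by class membership, the constraint $\lVert\sum_{\vu\in\vU}\vu\rVert_2^2\geq 0$ to force the centroid to the origin and hence the SSEM line in the $(a,b)$-plane, and a one-dimensional optimization in $\delta^2$ whose derivative at $\delta=1$ (your cancellation identity is exactly the paper's verification that $h\big(\tfrac{mn}{mn-1};m,n,\tau,\alpha\big)\geq 0$) pins the minimizer in $[0,1]$. The differences are organizational rather than substantive---you track the two averaged inner products $(a,b)$ and use monotonicity of $L$ in $b$ where the paper packages the same facts as Lemmas~\ref{thm:lemma:1} and~\ref{thm:lemma:2}, and you treat only $p=1$, which is all the main-text statement requires (the paper's appendix version additionally handles general $p$).
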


Theorem~\ref{thm:optimal:embedding} implies that we can identify the optimal embedding set as follows: we first represent the SupCL loss $\loss(\vU)$ in terms of $\delta$ (by substituting the inner products $\vu^T \vv$ and $\vu^T \vw$ in \eqref{eq:loss-sup} and \eqref{eq:loss-cl} with the right-hand sides of \eqref{eq:model:same}, \eqref{eq:model:same:class}, and \eqref{eq:model:different}), and then find the optimal $\delta^{\star}\in[0,1]$ that minimizes the loss. 

Note that the range of the optimal $\delta^\star$ in Theorem~\ref{thm:optimal:embedding} is $[0,1]$, which is a subset of $\Big[0,\sqrt\frac{mn-1}{m(n-1)}\Big]$ in Def.~\ref{def:model}. The following proposition and remark provide some implications obtained from the distinct ranges of $\delta$. 

\begin{proposition}
    \label{thm:ssem:similarity}
    Let $\vU^\delta$ be a set of embedding vectors forming \ours{}, as defined in Def.~\ref{def:model}, where $m$ and $n$ can be arbitrarily chosen. 
    For all $i\ne i' \in [m]$, $\vu^\delta, \vv^\delta \in \vU^{\delta}_i$ and $\vw^\delta \in \vU_{i'}^\delta$,
    \begin{equation}
    \nonumber
    (\vu^\delta)^\top \vv^\delta
    \geq (\vu^\delta)^\top \vw^\delta
    \end{equation}
    holds, if and only if $\delta\in [0, 1]$.
\end{proposition}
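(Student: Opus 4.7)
The plan is to reduce the inequality to a clean algebraic condition on $\delta$ by directly substituting the explicit inner product formulas given in Def.~\ref{def:model}. Since $p=1$, each $\vU_{i,j}^\delta$ is a singleton, so $\vU_i^\delta$ consists of $n$ distinct vectors. There are two sub-cases to handle on the left-hand side: $\vu^\delta = \vv^\delta$ and $\vu^\delta \ne \vv^\delta$.

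First I would dispose of the trivial case $\vu^\delta = \vv^\delta$: here $(\vu^\delta)^\top \vv^\delta = 1$ by \eqref{eq:model:same}, while by \eqref{eq:model:different} the right-hand side $(\vu^\delta)^\top \vw^\delta$ is bounded above by $1$ for any $\delta$ in the admissible range $\big[0,\sqrt{(mn-1)/(m(n-1))}\big]$, so the inequality holds unconditionally. Hence the equivalence reduces to the case $\vu^\delta \ne \vv^\delta$, where $\vu^\delta \in \vU_{i,j}^\delta$ and $\vv^\delta \in \vU_{i,j'}^\delta$ with $j \ne j'$.

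Next, I would substitute the same-class expression \eqref{eq:model:same:class} and the different-class expression \eqref{eq:model:different}, so that the inequality $(\vu^\delta)^\top \vv^\delta \geq (\vu^\delta)^\top \vw^\delta$ becomes
\begin{equation*}
1 - \delta^2 \frac{mn}{mn-1} \;\geq\; -\frac{1}{m-1} + \delta^2 \frac{m(n-1)}{(m-1)(mn-1)}.
\end{equation*}
Rearranging, the left side becomes $\frac{m}{m-1}$, and the coefficients of $\delta^2$ on the right combine to
\begin{equation*}
\frac{mn(m-1) + m(n-1)}{(m-1)(mn-1)} \;=\; \frac{m\bigl(n(m-1) + (n-1)\bigr)}{(m-1)(mn-1)} \;=\; \frac{m(mn-1)}{(m-1)(mn-1)} \;=\; \frac{m}{m-1},
\end{equation*}
using the identity $n(m-1) + (n-1) = mn-1$. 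Dividing through by the positive factor $\frac{m}{m-1}$ reduces the inequality to $1 \geq \delta^2$, i.e.\ $\delta^2 \leq 1$.

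Finally, since $\delta \geq 0$ by Def.~\ref{def:model}, the condition $\delta^2 \leq 1$ is equivalent to $\delta \in [0,1]$, yielding the stated equivalence. There is no real obstacle here beyond tracking the algebra; the only delicate point is the identity $n(m-1)+(n-1) = mn-1$, which makes the $\delta^2$ coefficients on both sides match up exactly and produces the clean threshold at $\delta = 1$.
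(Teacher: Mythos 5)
Your proposal is correct and follows essentially the same route as the paper's proof: both reduce to the binding case $\vu^\delta\ne\vv^\delta$ (the paper via a $\min$ over the two same-class inner-product values, you via an explicit case split) and then show the difference of inner products equals $\frac{m}{m-1}(1-\delta^2)$, giving the threshold at $\delta=1$. The algebra, including the key identity $n(m-1)+(n-1)=mn-1$, checks out.
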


Proposition~\ref{thm:ssem:similarity} implies that for the optimal embedding set that minimizes the SupCL loss, the embeddings of the instances from the same class are always closer to each other, compared with the embeddings of the instances from different classes, which is desired. The proof of this proposition is in Appendix~\ref{appendix:properties:ssem}.

\begin{remark}
\citet{chen2022perfectly} propose a class-conditional version of the InfoNCE loss $\loss_{\op{cNCE}}(\vU)$, where negative pairs are restricted within each class, and combine it with $\loss_{\op{Sup}}(\vU)$ in \eqref{eq:loss-sup}, thus defining the loss as $(1-\alpha)\; \loss_{\op{Sup}}(\vU)+\alpha\; \loss_{\op{cNCE}}(\vU)$ for some $\alpha\in[0,1]$. In Appendix~\ref{appendix:proof:optimality:other}, we show that \ours{} with $\delta=\sqrt\frac{mn-1}{m(n-1)} \; (>1)$ is one of optimal embedding sets that minimize $\loss_{\op{cNCE}}(\vU)$, implying that the embedding vectors minimizing $\loss_{\op{cNCE}}(\vU)$ satisfy  $(\vu^\delta)^\top \vv^\delta < (\vu^\delta)^\top \vw^\delta$ for all $i\ne i' \in [m]$, $\vu^\delta, \vv^\delta \in \vU^{\delta}_i$ and $\vw^\delta \in \vU_{i'}^\delta$ from Proposition~\ref{thm:ssem:similarity}. In other words, training with the loss proposed by \citet{chen2022perfectly} may incur undesired result, where embeddings of instances in different classes are closer to each other, compared with embeddings of instances in the same class. Therefore, we recommend relying on the traditional self-supervised contrastive loss $\loss_{\op{Self}} (\vU)$ as in \eqref{eq:loss} instead of $\loss_{\op{cNCE}}(\vU)$, to avoid such issue.
\end{remark}

\section{PREVENTING 
EMBEDDINGS FROM CLASS COLLAPSE}
\label{sec:emb:var}

In this section, we investigate the optimal embedding sets from the perspective of variance~\citep{fisher1936use, rao1948utilization} and identify the conditions on the training settings in order to prevent the \textit{class collapse} where all embeddings of the instances in the same class collapses to a single vector.

\subsection{Variance of Embeddings}\label{sec:var-embedding}

For a given embedding set, we define two types of variances, within-class variance and between-class variance:

\begin{definition}[Variance of Embeddings]
    \label{def:emb:var}
    Let $\vU$ be a set of embedding vectors for $mn$ instances, and $\vU_i$ be the subset of $\vU$ corresponding to the embeddings for instances in $i$-th class, as in Sec.~\ref{sec:formulation}. 
    For all $i \in [m]$, the $i$-th \emph{within-class variance} of $\vU$ is defined as
    \begin{equation}
        \label{eq:emb:var:within}
        \Var[\vU_i] 
        := 
        \frac{1}{n}\sum_{\vu\in\vU_i} \norm{ \vu - \E [\vU_i]}_2^2,
    \end{equation}
    where $\E [\vU_i] :=\frac{1}{n} \sum_{\vu\in\vU_i} \vu$ is the expectation of the embedding vectors in $\vU_i$. We also define the \emph{between-class variance} of $\vU$ as
    \begin{equation}
        \label{eq:emb:var:between}
        \Var^{\op{Btwn}}[\vU] := \frac{1}{m}\sum_{i\in [m]}\norm{ \E [\vU_i]- \E [\vU]}_2^2.
    \end{equation}
\end{definition}

The variances in Def.~\ref{def:emb:var} are important metrics that capture the behavior of embedding vectors. When the $i$-th within-class variance $\Var[\vU_i]$ in \eqref{eq:emb:var:within} is zero, 
we cannot distinguish the instances in the same class, which is known as the \textit{class collapse}.  
When the between-class variance $\Var^{\op{Btwn}}[\vU]$ in \eqref{eq:emb:var:between} is zero, we cannot separate different classes. 
Therefore, we want both metrics to be large enough so that different embedding vectors are separable.
However, Proposition~\ref{thm:bounded:var} shows that the sum of these variances, which is also known as the total variance, is bounded, the proof of which is in Appendix~\ref{appendix:proof:var}.
\begin{proposition}[Bounded Variance]
    \label{thm:bounded:var}
    Let a set of embedding vectors $\vU$ in Sec.~\ref{sec:formulation} lie on the $d$-dimensional unit sphere, \ie $\|\vu\|_2^2 = 1$ for all $\vu\in\vU$. Then, the sum of all within-class variances and the between-class variance is bounded as
    \begin{equation}
        \label{eq:var:maximum}
         \frac{1}{m}\sum_{i\in [m]} \Var[\vU_i]+ \Var^{\op{Btwn}}[\vU] \leq 1.
    \end{equation}
    Here, the maximum is achieved when the centroid of the embedding vectors is at the origin, \ie $\E[\vU]=\vzero$.
\end{proposition}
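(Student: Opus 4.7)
The plan is to prove the inequality by expressing both variance terms via the standard identity $\|\vx-\vy\|_2^2=\|\vx\|_2^2-2\vx^\top\vy+\|\vy\|_2^2$, which is essentially the law of total variance adapted to vector-valued quantities on the sphere.

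First I would simplify each within-class variance. Expanding $\|\vu-\E[\vU_i]\|_2^2$ and averaging over $\vu\in\vU_i$, the cross term becomes $-2\,\E[\vU_i]^\top\E[\vU_i]$ by definition of $\E[\vU_i]$, leaving
\begin{equation*}
\Var[\vU_i]=\frac{1}{n}\sum_{\vu\in\vU_i}\|\vu\|_2^2-\|\E[\vU_i]\|_2^2 = 1-\|\E[\vU_i]\|_2^2,
\end{equation*}
where the last equality uses the unit-sphere hypothesis $\|\vu\|_2^2=1$. Averaging over $i\in[m]$ gives $\frac{1}{m}\sum_{i\in[m]}\Var[\vU_i]=1-\frac{1}{m}\sum_{i\in[m]}\|\E[\vU_i]\|_2^2$.

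Next I would perform the same expansion for the between-class variance. The key observation here is that $\frac{1}{m}\sum_{i\in[m]}\E[\vU_i]=\frac{1}{mn}\sum_{\vu\in\vU}\vu=\E[\vU]$, so the cross term collapses in exactly the same way, yielding
\begin{equation*}
\Var^{\op{Btwn}}[\vU]=\frac{1}{m}\sum_{i\in[m]}\|\E[\vU_i]\|_2^2-\|\E[\vU]\|_2^2.
\end{equation*}

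Adding the two expressions, the $\frac{1}{m}\sum_{i}\|\E[\vU_i]\|_2^2$ terms cancel, leaving $1-\|\E[\vU]\|_2^2 \le 1$, with equality precisely when $\E[\vU]=\vzero$. There is no real obstacle here beyond bookkeeping; the whole argument is a two-line application of the bias-variance style decomposition, and the unit-norm assumption is what converts the within-class computation into a clean constant minus $\|\E[\vU_i]\|_2^2$. The equality condition follows immediately from the fact that $\|\E[\vU]\|_2^2$ is the only nonpositive contribution on the right-hand side.
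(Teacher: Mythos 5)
Your proof is correct and follows essentially the same route as the paper's: both rest on the law-of-total-variance decomposition, with the unit-norm hypothesis reducing everything to $1-\|\E[\vU]\|_2^2\leq 1$ and equality exactly when $\E[\vU]=\vzero$. The only cosmetic difference is that the paper first verifies the cross-term orthogonality to write total variance as within plus between and then bounds the total, whereas you compute the two pieces separately and let the $\frac{1}{m}\sum_{i}\|\E[\vU_i]\|_2^2$ terms cancel, which is the same identity in a different order.
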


We found that the embeddings forming \ours{} achieves the upper bound in~\eqref{eq:var:maximum}, as formally stated below. The proof of this proposition is in Appendix~\ref{appendix:proof:var}.

\begin{proposition}[Variance of \ours{}]
    \label{thm:ssem:var}
    Let $\vU^\delta$ be the embedding set forming $(m,n,\delta)$-\ours{} in Def.~\ref{def:model}. For any $\delta\in \Big[0,\sqrt\frac{mn-1}{m(n-1)}\Big]$,
    \begin{align*}
        &\Var \big[\vU_i^\delta\big] =\delta^2 \frac{m(n-1)}{mn-1}
        \qquad \forall i \in [m],
        \\&
        \Var^{\op{Btwn}} \big[\vU^\delta\big] =1 - \delta^2 \frac{m(n-1)}{mn-1}
    \end{align*}
    hold. Therefore, 
    \begin{equation*}
        \frac{1}{m} \sum_{i\in [m]} \Var \big[\vU_i^\delta\big] + \Var^{\op{Btwn}} \big[\vU^\delta\big] =1
        .
    \end{equation*}
\end{proposition}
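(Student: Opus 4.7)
The plan is to reduce each variance to a computation involving only pairwise inner products of the embeddings, which are fully prescribed by Definition~\ref{def:model}. Since every $\vu\in\vU^\delta$ has unit norm, expanding the square in~\eqref{eq:emb:var:within} gives the standard identity
\begin{equation*}
    \Var\!\big[\vU_i^\delta\big] = \frac{1}{n}\sum_{\vu \in \vU_i^\delta}\|\vu\|_2^2 - \|\E[\vU_i^\delta]\|_2^2 = 1 - \|\E[\vU_i^\delta]\|_2^2,
\end{equation*}
so the task for the within-class part reduces to computing $\|\E[\vU_i^\delta]\|_2^2$.

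For that, I would write $\|\E[\vU_i^\delta]\|_2^2 = \frac{1}{n^2}\sum_{j,j'\in[n]} (\vu_{i,j}^\delta)^\top \vu_{i,j'}^\delta$ and split the sum into diagonal entries ($j=j'$) and off-diagonal entries ($j\ne j'$). Substituting $\|\vu_{i,j}^\delta\|^2 = 1$ from~\eqref{eq:model:same} for the diagonal, and the same-class, different-instance inner product $1-\delta^2 mn/(mn-1)$ from~\eqref{eq:model:same:class} for the off-diagonal, the arithmetic collapses to $\|\E[\vU_i^\delta]\|_2^2 = 1 - \delta^2\, m(n-1)/(mn-1)$, which immediately yields the claimed expression for $\Var[\vU_i^\delta]$.

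For the between-class variance, the cleanest route is to first show $\E[\vU^\delta] = \vzero$. I would again compute $\|\E[\vU^\delta]\|_2^2 = \frac{1}{(mn)^2}\sum_{(i,j),(i',j')} (\vu_{i,j}^\delta)^\top \vu_{i',j'}^\delta$ by partitioning the index pairs into three regimes (same instance, same class but distinct instance, distinct classes) and plugging in~\eqref{eq:model:same},~\eqref{eq:model:same:class},~\eqref{eq:model:different}. The contributions involving $\delta^2$ cancel because the multiplicities $mn(n-1)$ and $m(m-1)n^2$ scale the two $\delta^2$-coefficients into exact opposites, and the constant terms also cancel; hence $\E[\vU^\delta]=\vzero$. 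Given this, $\Var^{\op{Btwn}}[\vU^\delta] = \frac{1}{m}\sum_{i\in[m]} \|\E[\vU_i^\delta]\|_2^2$, and since $\|\E[\vU_i^\delta]\|_2^2$ is independent of $i$ by the class-symmetry of Definition~\ref{def:model}, the between-class variance equals $1 - \delta^2\, m(n-1)/(mn-1)$. Summing the two formulas produces the total variance $1$, matching the equality case of Proposition~\ref{thm:bounded:var}.

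No step appears genuinely difficult; the main bookkeeping hurdle is the triple cancellation in $\|\E[\vU^\delta]\|_2^2$, where one must be careful to count off-diagonal index pairs as $m(m-1)n^2$ across distinct classes and $mn(n-1)$ within a class, and verify that the resulting $\delta^2$-coefficients coming from~\eqref{eq:model:same:class} and~\eqref{eq:model:different} have equal magnitude and opposite sign. As a consistency check, the derived formula for $\Var[\vU_i^\delta]$ is nonnegative and bounded by $1$ exactly when $\delta^2 \le (mn-1)/(m(n-1))$, which is precisely the admissible range of $\delta$ in Definition~\ref{def:model}; this also serves as a sanity check on the algebra.
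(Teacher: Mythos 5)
Your proposal is correct and follows essentially the same route as the paper's proof: both reduce each variance to $1-\|\E[\vU_i^\delta]\|_2^2$ via the unit-norm identity, compute $\|\E[\vU_i^\delta]\|_2^2$ from the prescribed inner products in Def.~\ref{def:model}, and establish $\E[\vU^\delta]=\vzero$ by the same three-regime cancellation before evaluating $\Var^{\op{Btwn}}[\vU^\delta]$. The only cosmetic difference is that the paper verifies $\E[\vU^\delta]=\vzero$ class-by-class via $\big(\sum_{\vu\in\vU_i^\delta}\vu\big)^\top\big(\sum_{\vu\in\vU^\delta}\vu\big)=0$, whereas you expand the full double sum at once; the arithmetic is identical.
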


\begin{remark}\label{remark:var}
The left-hand-side of \eqref{eq:var:maximum} is the sum of the within-class variance and the between-class variance of embeddings $U$. The within-class variance reflects the diversity of embeddings within each class, while the between-class variance corresponds to the separability between different classes.

Recall that SupCL has two objectives: (i) increasing the separation between instances from different classes and (ii) encouraging diversity among embeddings of instances within the same class. In this context, maximizing both variances is desirable, meaning the total variance should reach its maximum value. This maximum is attained when equality holds in \eqref{eq:var:maximum}.

Notably, by combining Theorem~\ref{thm:optimal:embedding} and Proposition~\ref{thm:ssem:var}, we conclude that the optimal embedding set $\vU^\star$ in \eqref{eq:optimal:embedding:general}, which minimizes the SupCL loss, always achieves the maximum variance in \eqref{eq:var:maximum}.
\end{remark}

\subsection{Preventing Class Collapse}

In this section, we discuss how to prevent the optimal embedding set (that minimizes the loss $\loss(\vU)$ in \eqref{eq:loss}) from the class collapse, by using the theoretical results provided in Sec.~\ref{sec:optimal:structure} and Sec.~\ref{sec:var-embedding}. Recall that as shown in Proposition~\ref{thm:ssem:var} and Remark~\ref{remark:var}, the optimal embedding set $\vU^{\star}$ minimizing the SupCL loss has the maximum variance (which is desired), and the balance between the within-class  variance $\Var\big[\vU_i^{\delta}\big]$ and the between-class variance $\Var^{\op{Btwn}} \big[\vU^\delta\big]$ is controlled by the parameter $\delta$; one can easily check that the class-collapse occurs when $\delta=0$. The below theorem provides the conditions on the loss-combining coefficient $\alpha$ and the temperature $\tau$, in order to avoid the class collapse; see Appendix~\ref{appendix:proof:class:collapse} for the proof.

\begin{theorem}[Preventing Class Collapse]
    \label{thm:prevent:collapse:alpha}
    Let $\vU^{\star}$ be the set of optimal embedding vectors that minimizes the loss $\loss(\vU)$ in \eqref{eq:loss}.
    Then, the class collapse does not happen, \ie $\Var[\vU_i^{\star}]>0$ for all $i\in[m]$, if and only if 
    the loss-combining coefficient $\alpha$ satisfies
    \begin{equation}
        \label{eq:alpha:prevent:collapse}
        \alpha
        \in \left(
        \frac{mn-1 + \exp\big( \frac{m}{m-1} / \tau\big)}{mn-n + n\cdot\exp\big( \frac{m}{m-1} / \tau\big)}
        , 1 \right]
    \end{equation}
    for a given temperature $\tau >0$.
    This necessary and sufficient condition for preventing class collapse can be re-written as 
    \begin{equation}
        \label{eq:temperature:non:collapse}
        \tau
        \in
        \left(0, 
        \frac{1}{
            \left(1-\frac{1}{m}\right) \cdot\log\left( \frac{mn-1 -\alpha (m-1)n}{ \alpha n - 1} \right)
        }
        \right)
    \end{equation}
    for a given $\alpha \in \big(\frac{1}{n},1\big]$.
\end{theorem}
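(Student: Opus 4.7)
The plan is to reduce the statement to a one-dimensional optimization in the \ours{} parameter $\delta$ and then analyze the sign of a derivative at zero. By Theorem~\ref{thm:optimal:embedding}, every optimal set satisfies $\vU^\star = \vU^{\delta^\star}$ for some $\delta^\star \in [0,1]$. By Proposition~\ref{thm:ssem:var}, $\Var[\vU_i^\star] = (\delta^\star)^2 \cdot m(n-1)/(mn-1)$ for every $i$, so class collapse is avoided \emph{if and only if} $\delta^\star > 0$. The whole theorem thus reduces to deciding when the scalar function $g(\delta):=\loss(\vU^\delta)$ fails to attain its minimum at the endpoint $\delta=0$.

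First I would substitute the three inner-product values from Def.~\ref{def:model} into the loss expressions~\eqref{eq:loss:sup:assume}-\eqref{eq:loss:cl:assume}. Because every embedding sees the same normalizer $D(\delta) := e^{1/\tau} + (n-1)\,e^{(1-mn\delta^2/(mn-1))/\tau} + (m-1)n\,e^{(-1/(m-1)+m(n-1)\delta^2/((m-1)(mn-1)))/\tau}$, the sums in $\loss_{\op{Sup}}$ and $\loss_{\op{Self}}$ telescope to the closed form $g(\delta) = \log D(\delta) - 1/\tau + (1-\alpha)\, mn\delta^2/((mn-1)\tau)$. Writing $u := \delta^2 \in [0,1]$, $g$ becomes a log-sum-exp of three affine functions of $u$ plus an affine term, hence \emph{strictly} convex (the three exponential slopes $0$, $-mn/((mn-1)\tau)$, and $m(n-1)/((m-1)(mn-1)\tau)$ are pairwise distinct whenever $m,n\geq 2$). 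Strict convexity is what upgrades the first-order condition from necessary to necessary-and-sufficient: one has $u^\star>0 \iff g'(0)<0$.

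Next I would compute $g'(0)$ explicitly. Multiplying numerator and denominator of $D'(0)/D(0)$ by $e^{1/((m-1)\tau)}$ and using $1/\tau + 1/((m-1)\tau) = m/((m-1)\tau)$ produces $g'(0)=\frac{m}{(mn-1)\tau}\bigl[\frac{(n-1)(1-E)}{E+m-1}+(1-\alpha)n\bigr]$ with $E:=e^{m/((m-1)\tau)}$. Since the prefactor and $E+m-1$ are positive, $g'(0)<0$ is equivalent to $E(n\alpha-1) > mn-1 - n\alpha(m-1)$. Because the right-hand side is positive for every $\alpha \leq 1$, the inequality forces $\alpha>1/n$; under this bound one may divide to recover~\eqref{eq:temperature:non:collapse}, and isolating $\alpha$ gives~\eqref{eq:alpha:prevent:collapse}.

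The main obstacle is the convexity argument: without strict convexity of $g$ as a function of $u$, the computation at $\delta=0$ would only show that $g'(0)<0$ implies $\delta^\star>0$, not the converse. Verifying that the three slopes inside the exponentials are pairwise distinct (so that log-sum-exp is strictly convex, and therefore $g'$ is strictly increasing on $[0,1]$) closes this gap. Everything else is algebraic rearrangement and the standard translation between the two equivalent forms of the stated condition.
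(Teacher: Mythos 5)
Your proposal is correct and follows essentially the same route as the paper: reduce to the one-dimensional restricted loss on \ours{} via Theorem~\ref{thm:optimal:embedding} and Proposition~\ref{thm:ssem:var}, characterize $\delta^\star>0$ by the sign of the derivative at the boundary, and rearrange. The paper justifies the necessity-and-sufficiency of the first-order condition through the monotonicity of the function $h(\tilde\delta;m,n,\tau,\alpha)$ established in the proof of the optimality theorem (with $\tilde\delta=\delta^2\frac{mn}{mn-1}$), which is exactly your strict-convexity-of-log-sum-exp argument in the variable $u=\delta^2$, and your computed $g'(0)<0$ condition $E(\alpha n-1)>mn-1-\alpha(m-1)n$ coincides with the paper's $h(0)<0$.
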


The above theorem provides practical guidance for selecting appropriate hyperparameters ($\alpha$ and $\tau$) to guarantee that the embeddings trained with the loss do not suffer from class collapse. According to \eqref{eq:alpha:prevent:collapse}, the minimum value of $\alpha$ that prevents class collapse increases monotonically with respect to $\tau > 0$. For example, when $\tau=0.5$ and $m = 10$, with sufficiently large $n$, class collapse is prevented if $\alpha \in [0.549, 1]$. On the other hand, for $\tau=0.9$ and $m = 10$ with sufficiently large $n$, class collapse is avoided when using $\alpha \in [0.804, 1]$. This relationship is also illustrated in the red line in the top plot of Fig.~\ref{fig:withinvar}. Consequently, when a smaller $\tau$ is used, a wider range of $\alpha$ avoids class collapse. Furthermore, the minimum $\alpha$ required to prevent class collapse converges to $\frac{1}{n}$ as $\tau$ goes to zero, indicating that $\alpha$ must always be greater than $\frac{1}{n}$.

How about the condition on $\tau$ for a given $\alpha \in (\frac{1}{n},1]$? According to \eqref{eq:temperature:non:collapse}, the maximum temperature parameter $\tau$ (to avoid class collapse) converges to $\big((1-\frac{1}{m})\cdot\log(1+ \frac{1-\alpha}{\alpha}m )\big)^{-1}$, in the asymptotic regime of large $n$. In the standard setting of $\alpha=0.5$, this condition reduces to $\tau \lesssim \frac{1}{\log m}$. This suggests that the convention $\tau=0.1$ (\eg{} \citet{khosla2020supervised}) is a reasonable choice, unless the number of classes is extremely large, \ie $m \ge \exp(10)$.

All in all, the results in Theorem~\ref{thm:prevent:collapse:alpha} imply that to guarantee the learned embeddings do not suffer from class collapse, it is necessary to satisfy $\alpha \ge \frac{1}{n}$ and $\tau \le \frac{1}{\log m}$.

\section{EXPERIMENTS}
\label{sec:experiment}

In this section, we provide experimental results showing that our theoretical analysis on the within-class variance of learned embeddings (given in Sec.~\ref{sec:optimal:structure} and Sec.~\ref{sec:emb:var}) 
provides practical guidelines on configurations of supervised contrastive learning to avoid class collapse. We run experiments on synthetic datasets and real image datasets, where the source code is given in \url{https://github.com/leechungpa/ssem-supcl}.

\subsection{Experiments on Synthetic Data}
\label{sec:experiment:synthetic}

\begin{figure}[t]
    \centering
    \includegraphics[width=0.48\textwidth]{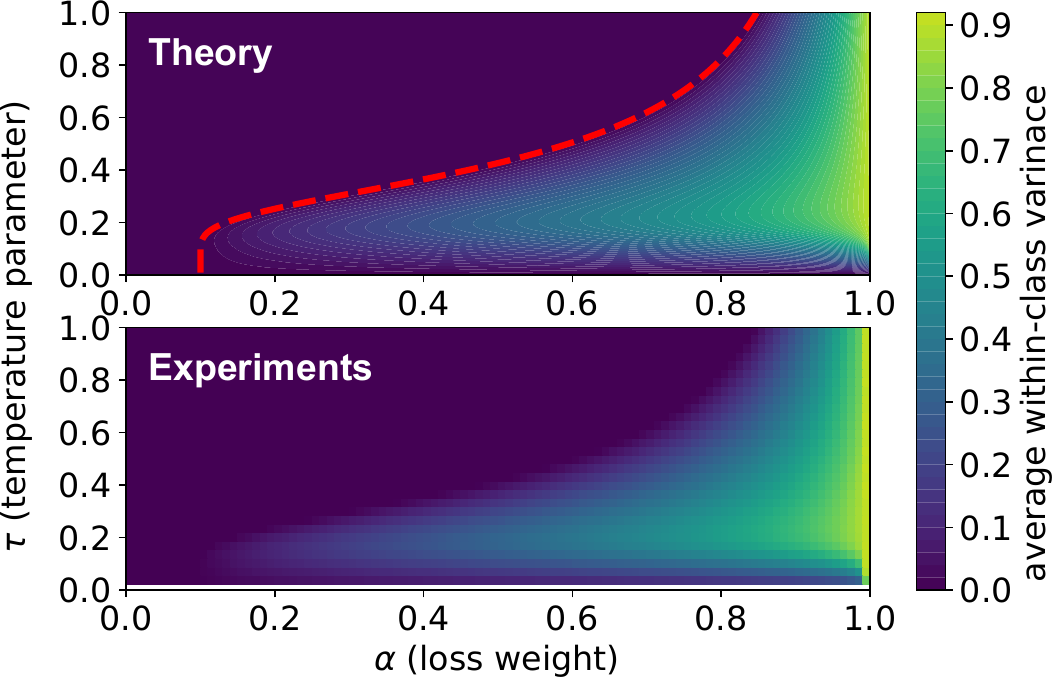}
    \caption{
    The within-class variance (averaged over different classes) of the learned embedding set ($\frac{1}{m} \sum_{i\in [m]} \Var \big[\vU_i\big]$ in~\eqref{eq:emb:var:within}), for various loss-combining coefficient $\alpha$ and temperature $\tau$. (Top): Derived from theoretical results in Sec.~\ref{sec:emb:var}, (Bottom): Computed from the experiments on synthetic datasets in Sec.~\ref{sec:experiment:synthetic}.
    One can confirm that both results (shown at the top and the bottom figures) are well aligned. 
    Here, the red dashed line at the top figure indicates the boundary of regions having zero within-class variance, \ie when class collapse happens.
    }
    \label{fig:withinvar}
\end{figure}

\begin{figure*}[t]
    \centering
    \begin{tabular}{ccc}
    \includegraphics[width=0.3\textwidth]{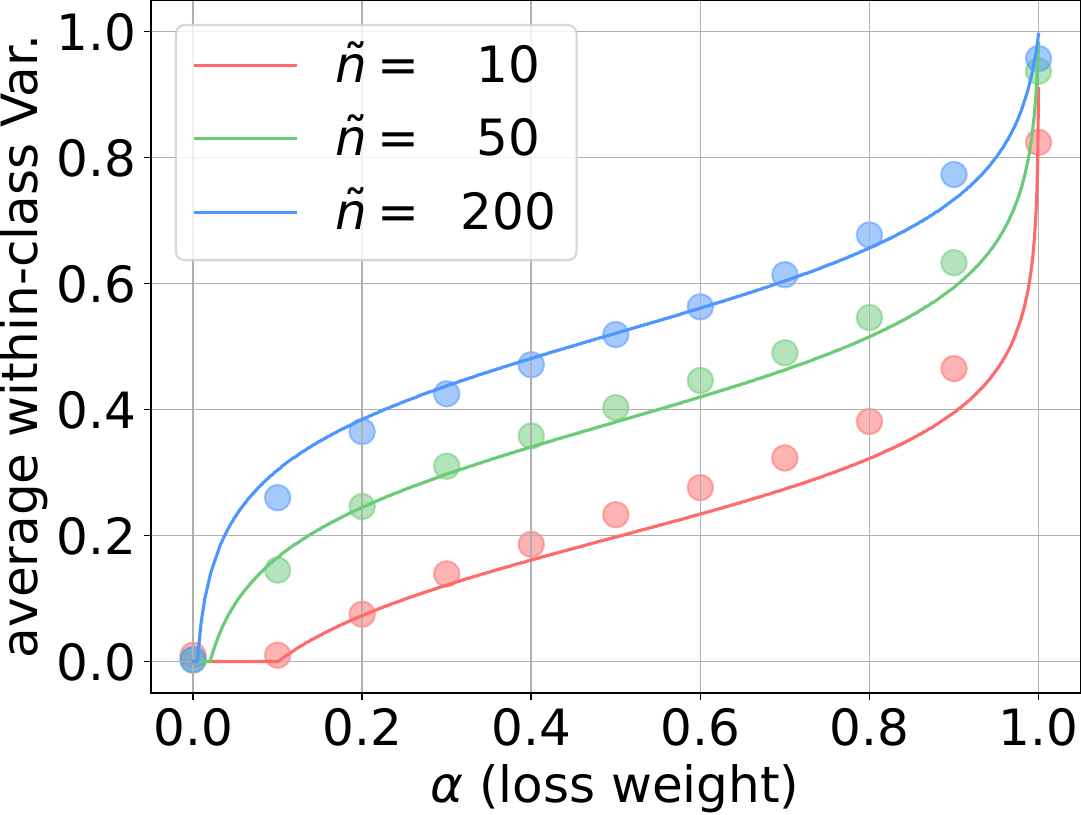} &
    \includegraphics[width=0.3\textwidth]{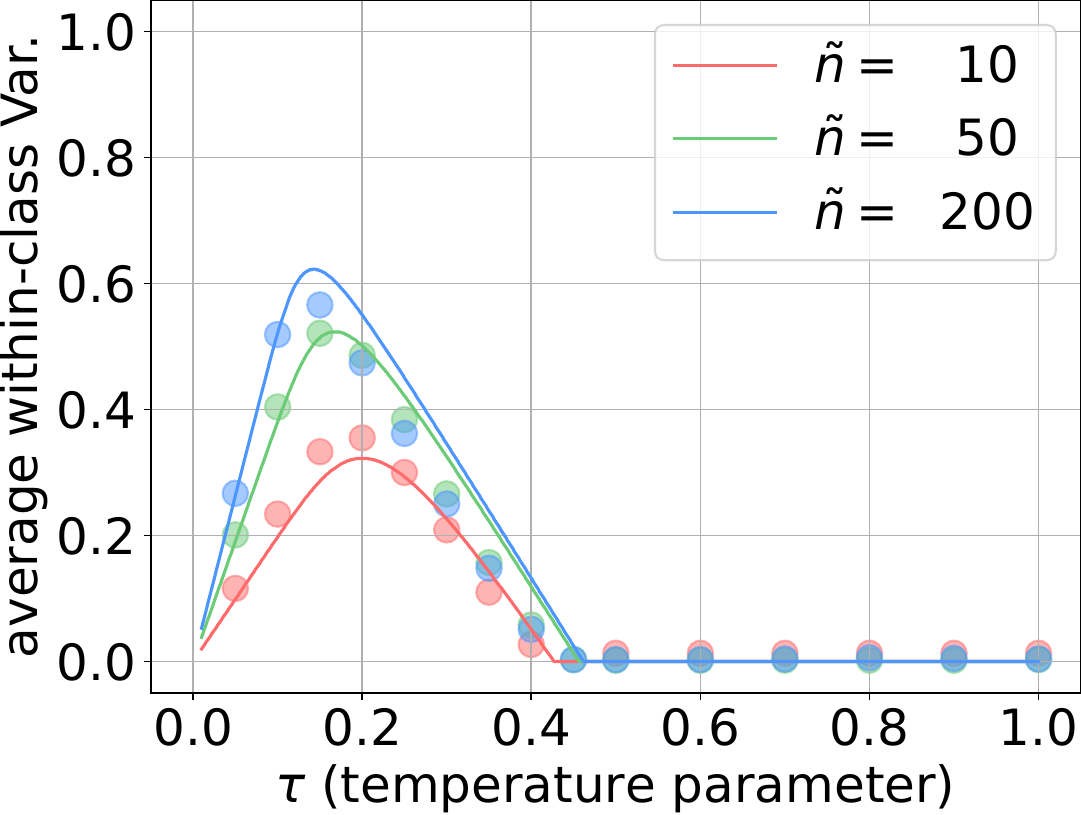} & 
    \includegraphics[width=0.31\textwidth]{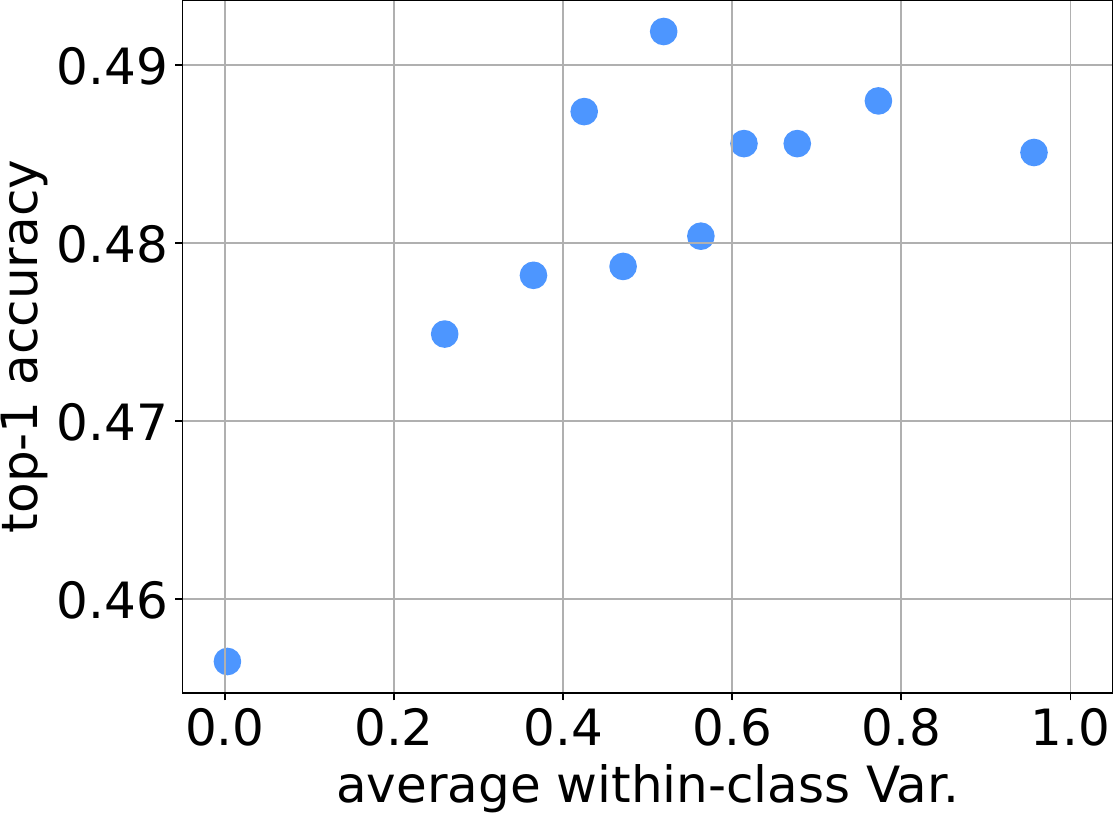} \\
     (a)  & (b) & (c) \\
    \end{tabular}
    \caption{
    Average within-class variance of the learned embeddings obtained in theory (lines) and by experiments (dots), measured on CIFAR-10 dataset when ResNet-18 encoder is used. 
    (a), (b): Dependency of the within-class variance on $\alpha$ and $\tau$, for various per-class batch sizes $\tilde{n}=10, 50, 200$. The values obtained in experiments match with those computed from our theoretical results. (c): Relationship between the within-class variance of the learned embeddings and the transfer learning performance (when transferred to CIFAR-100) of the CIFAR-10 trained ResNet-18 encoder. We set $\tau=0.1$ and $\tilde{n}=200$, and run experiments on various $\alpha=0.0, 0.1, \cdots, 1.0$. Note that embeddings having a moderate amount of within-class variances achieves the highest performance.
    }
    \label{fig:cifar}
    \vspace{8pt}
\end{figure*}

\paragraph{Setup.}
Following previous works that reported experimental results on CL using synthetic datasets~\citep{sreenivasan2023mini,lee2024analysis}, we consider the scenario of \textit{directly} optimizing the embedding set $\vU$, instead of training an encoder that maps from each data to its embedding vector. Initially, all embedding vectors in $\vU$ are randomly sampled from the $100$-dimensional standard Gaussian distribution and then normalized to ensure unit norm. We then optimize these vectors to minimize the SupCL loss $\loss(\vU)$ in \eqref{eq:loss} over $1,000$ epochs using the Adam optimizer, where the learning rate is set to $0.5$. Note that at each training step, the updated embeddings are projected back onto the unit sphere to ensure that they maintain unit norm.
Here, we have $\lvert \vU \rvert = mnp=200$ embeddings, which are categorized as $m=10$ classes, $n=10$ instances per class, and $p=2$ augmentations per instance.

\paragraph{Within-Class Variance of Learned Embeddings.}
Fig.~\ref{fig:withinvar} shows the average within-class variance 
($\frac{1}{m} \sum_{i\in [m]} \Var \big[\vU_i\big]$ in~\eqref{eq:emb:var:within}) of the learned embedding set that minimizes the SupCL loss in~\eqref{eq:loss}, for various $\alpha$ and $\tau$. The top figure presents the average within-class variance derived from theoretical results (based on Theorem~\ref{thm:optimal:embedding} and Proposition~\ref{thm:ssem:var}), whereas the bottom figure shows the average within-class variance computed from embeddings trained on our synthetic datasets. At each $(\alpha, \tau)$ pair, the average within-class variance obtained in theory well matches with those computed in experiments. These results show that our analysis on the variance of learned embeddings in Sec.~\ref{sec:emb:var} is valid for synthetic datasets.

\subsection{Experiments on Real Data}
\label{sec:experiment:real}

\paragraph{Setup.}

We run experiments on two image datasets: CIFAR-10~\citep{krizhevsky2009cifar} and ImageNet-100~\citep{deng2009imagenet}.
We use ResNet architecture~\citep{he2016deep} for the encoder, and 2-layer MLP for the projector, where the output dimension of the projector is set to $d=128$.

After training embeddings using the SupCL loss $\mathcal{L}(\vU)$ in~\eqref{eq:loss}, we measure the average within-class variance ($\frac{1}{m} \sum_{i\in [m]} \Var \big[\vU_i\big]$ in~\eqref{eq:emb:var:within}) of the embeddings (normalized output of the projector) of the train data, without any augmentation. 
We then remove the projector head and evaluate the performance of pretrained encoder on various downstream classification tasks by using linear probing. Following recent works~\citep{kornblith2019better, lee2021improving, oh2024effectiveness}, we evaluate the top-1 accuracy or the mean per class accuracy, depending on the downstream tasks. Details of the datasets, as well as the training and evaluation processes, are provided in Appendix~\ref{appendix:real:exp}.

\begin{table*}[t]
\caption{
Transfer learning performance evaluated on various downstream tasks. We first train a ResNet-50 encoder on ImageNet-100 by minimizing the SupCL loss $\mathcal{L}(\mathbf{U})$ in \eqref{eq:loss} with $\tau=0.1$ for various $\alpha$ values.
Then, we evaluate the pretrained encoder on multiple downstream classification tasks using linear probing, measuring the performance via top-1 accuracy or mean per-class accuracy. 
Here, for evaluating the second column of the table, we compute the within-class variance of each class using the embeddings of the original training data (without any augmentations), and take average of the within-class variances. 
The best result for each downstream dataset is highlighted in bold. Notably, the model with a moderate amount of average within-class variance shows the highest average performance.
}
\label{table:transfer:full}
\begin{center}
\resizebox{\textwidth}{!}{%
\begin{tabular}{c|c|c|ccccccccccc}
\toprule
$\alpha$ & Avg. within-class var. & Avg. accuracy & \multicolumn{1}{l}{CIFAR10} & \multicolumn{1}{l}{CIFAR100} & \multicolumn{1}{l}{Caltech101} & \multicolumn{1}{l}{CUB200} & \multicolumn{1}{l}{Dog} & \multicolumn{1}{l}{DTD} & \multicolumn{1}{l}{Flowers102} & \multicolumn{1}{l}{Food101} & \multicolumn{1}{l}{MIT67} & \multicolumn{1}{l}{Pets} & \multicolumn{1}{l}{SUN397} \\
\midrule
0.0 & 0.014 & 68.00 & 88.34 & 68.25 & \textbf{88.87} & 36.05 & 62.04 & 65.64 & 88.46 & 57.98 & 63.13 & 79.05 & 50.19 \\
0.2 & 0.021 & 68.16 & 88.70 & 68.96 & 87.35 & 35.45 & 62.10 & 65.11 & 88.46 & 59.26 & 64.40 & 80.09 & 49.89 \\
0.4 & 0.078 & 68.42 & 88.91 & 69.08 & 87.60 & 37.35 & 62.39 & 66.33 & 88.19 & 59.78 & 63.28 & 79.56 & 50.20 \\
0.5 & 0.133 & \textbf{69.06} & \textbf{89.43} & 69.45 & 88.35 & \textbf{38.48} & \textbf{62.78} & 66.33 & \textbf{89.49} & 60.36 & 63.43 & \textbf{80.68} & 50.88 \\
0.6 & 0.192 & 68.72 & 89.34 & \textbf{70.07} & 87.50 & 35.99 & 61.56 & 66.33 & 87.82 & 61.03 & \textbf{65.45} & 79.75 & \textbf{51.13} \\
0.8 & 0.325 & 68.36 & 88.13 & 68.42 & 86.06 & 36.46 & 60.73 & \textbf{66.49} & 89.29 & \textbf{62.72} & 64.55 & 78.48 & 50.60 \\
1.0 & 0.830 & 63.06 & 84.15 & 63.17 & 78.64 & 30.40 & 46.19 & 65.00 & 85.71 & 62.02 & 62.91 & 67.60 & 47.89 \\
\bottomrule
\end{tabular}%
}
\end{center}
\end{table*}

\paragraph{Within-Class Variance of Learned Embeddings.}

Fig.~\ref{fig:cifar} shows the average within-class variance measured for CIFAR-10 dataset. To be specific, Fig.~\ref{fig:cifar}a shows the dependency on the loss-combining coefficient $\alpha$, when temperature is set to $\tau=0.1$, while Fig.~\ref{fig:cifar}b shows the dependency on $\tau$, when $\alpha=0.5$.

We test on three different batch sizes: the per-class batch size (\ie the number of instances from the same class and contained in the same batch) is set to $\tilde{n}=10, 50, 200$. Here, we use balanced batches, where the number of instances per class is equal in every batch. Since CIFAR-10 datasets are categorized into $m=10$ classes, we have batch sizes of $m\tilde{n} = 100, 500, 2000$, respectively.

In Fig.\ref{fig:cifar}a and Fig.\ref{fig:cifar}b, we compare two types of within-class variances: one computed based on the theoretical results in Sec.\ref{sec:emb:var} (shown as solid lines) and the other obtained from experiments (shown as dots). Note that in order to reflect the batch effect, we use $\tilde{n}$ instead of $n$ in the expressions given in Sec.\ref{sec:emb:var} when plotting the solid line representing the theoretical results.

One can observe that for each $\tilde{n}$, the solid line (derived from theory) aligns with the dots (obtained from experiments), confirming that our analysis on the variance of learned embeddings in Sec.~\ref{sec:emb:var} is valid for real datasets.

\paragraph{Relationship between Within-Class Variance and Transfer Learning Performance.}

Now we provide experimental results showing that the within-class variance of the learned embeddings is a good indicator of the transfer learning performance. Fig.~\ref{fig:cifar}c shows the relationship between the within-class variance and the transfer learning performance, when the embeddings are used to classify CIFAR-100 datasets. Here, we set $\tau=0.1$ and $\tilde{n}=200$, and evaluate on various $\alpha$ values ranging from 0 to 1. 

The result in Fig.~\ref{fig:cifar}c indicates that embeddings suffering from class collapse, which have zero within-class variance, exhibit the lowest top-1 accuracy on CIFAR-100 classification. In contrast, embeddings with a moderate amount of within-class variance (roughly between $0.4$ and $0.8$) achieve the highest top-1 accuracy on CIFAR-100 classification.

This trend, where the best performance occurs when embeddings maintain a moderate amount of within-class variance, is also observed in more complex architectures and datasets. 
In Table~\ref{table:transfer:full}, we report the relationship between the transfer learning performance and the within-class variance for the embeddings learned on ImageNet-100 dataset. 
Specifically, we train a ResNet-50 encoder on ImageNet-100 using the SupCL loss $\mathcal{L}(\mathbf{U})$ in \eqref{eq:loss} with $\tau=0.1$ for various $\alpha$ values, resulting in each trained encoder exhibiting a different within-class variance. We then evaluate its transfer learning performance across different downstream datasets, as well as the within-class variance of the trained model. 
In addition, we run experiments on  other transfer learning tasks including object detection and few-shot learning tasks, results of which are provided in Appendix~\ref{sec:appendix:transfer}. 
Across all these tasks, the findings consistently demonstrate that embeddings with a moderate amount of within-class variance yield the best performance.

\section{CONCLUSION}
This paper explores the behavior of embeddings trained with supervised contrastive learning, from the perspective of variance. First, we prove that Simplex-to-Simplex Embedding Model (\ours{}), a class of embedding sets defined by us, contains the optimal embedding set that minimizes the supervised contrastive loss. Then, we provide theoretical analysis on the behaviors of optimal embeddings from the perspective of within-class and between-class variances and offer guidelines on training configurations to prevent the learned embeddings from class collapse, \ie when the within-class variance is zero. This theoretical result aligns well with our experimental findings on synthetic and real datasets. Specifically, the within-class variance of the learned embeddings (computed in our experiments) matches the mathematical expressions derived in theory. Moreover, our theory-driven guideline for preventing class collapse is empirically validated, showing consistency with conventional practices.

\subsubsection*{Acknowledgements}

This work was supported by the National Research Foundation of Korea (NRF) grant funded by the Korea government (MSIT) (RS-2024-00341749, RS-2024-00345351, RS-2024-00408003), the MSIT (Ministry of Science and ICT), Korea, under the ICAN (ICT Challenge and Advanced Network of HRD) support program (RS-2023-00259934) supervised by the IITP (Institute for Information \& Communications Technology Planning \& Evaluation), the Yonsei University Research Fund (2024-22-0124, 2024-22-0148).

\clearpage
\bibliography{__ref}

\clearpage
\section*{Checklist}

 \begin{enumerate}

 \item For all models and algorithms presented, check if you include:
 \begin{enumerate}
   \item A clear description of the mathematical setting, assumptions, algorithm, and/or model. [Yes] See Sec~\ref{sec:formulation}.
   \item An analysis of the properties and complexity (time, space, sample size) of any algorithm. [Not Applicable]
 \end{enumerate}

 \item For any theoretical claim, check if you include:
 \begin{enumerate}
   \item Statements of the full set of assumptions of all theoretical results. [Yes]
   Following prior works, we use a normalized encoder $f$, \ie $\lVert f(\vx) \rVert_2 = 1$ for all input $\vx$, as described in Sec~\ref{sec:formulation}. To streamline the notation, we present our theoretical results in the main paper for the case of a single augmentation ($p=1$). These results are extended to the general case ($p>1$) in Appendix~\ref{appendix:proof}.

   \item Complete proofs of all theoretical results. [Yes] See Appendix~\ref{appendix:proof}.
   \item Clear explanations of any assumptions. [Yes] See Sec~\ref{sec:formulation}.
 \end{enumerate}

 \item For all figures and tables that present empirical results, check if you include:
 \begin{enumerate}
   \item The code, data, and instructions needed to reproduce the main experimental results (either in the supplemental material or as a URL). [Yes] See Appendix~\ref{appendix:real:exp}.
   \item All the training details (e.g., data splits, hyperparameters, how they were chosen). [Yes] See Appendix~\ref{appendix:real:exp}.
   \item A clear definition of the specific measure or statistics and error bars (e.g., with respect to the random seed after running experiments multiple times). [Yes] See Appendix~\ref{appendix:real:exp}.
   \item A description of the computing infrastructure used. (e.g., type of GPUs, internal cluster, or cloud provider). [Yes] See Appendix~\ref{appendix:real:exp}.
 \end{enumerate}

 \item If you are using existing assets (e.g., code, data, models) or curating/releasing new assets, check if you include:
 \begin{enumerate}
   \item Citations of the creator If your work uses existing assets. [Yes] See Appendix~\ref{appendix:real:exp}.
   \item The license information of the assets, if applicable. [Not Applicable]
   \item New assets either in the supplemental material or as a URL, if applicable. [Not Applicable]
   \item Information about consent from data providers/curators. [Not Applicable]
   \item Discussion of sensible content if applicable, e.g., personally identifiable information or offensive content. [Not Applicable]
 \end{enumerate}

 \item If you used crowdsourcing or conducted research with human subjects, check if you include:
 \begin{enumerate}
   \item The full text of instructions given to participants and screenshots. [Not Applicable]
   \item Descriptions of potential participant risks, with links to Institutional Review Board (IRB) approvals if applicable. [Not Applicable]
   \item The estimated hourly wage paid to participants and the total amount spent on participant compensation. [Not Applicable]
 \end{enumerate}

 \end{enumerate}

\newpage
\appendix
\numberwithin{table}{section}
\numberwithin{figure}{section}
\numberwithin{equation}{section}
\onecolumn

\clearpage
\section{PROOFS}
\label{appendix:proof}

\subsection{Proposition About Embedding Vectors on the Unit Sphere}
\label{appendix:proof:var}

The definitions of within-class and between-class variances are reformulated as follows:
\begin{definition}[Restatement of Def.~\ref{def:emb:var}]
    Let $\vU$ be a set of embedding vectors, and $\vU_i$ be the subset of $\vU$ corresponding to the embeddings for instances in $i$-th class, as in Sec.~\ref{sec:formulation}. 
    For all $i \in [m]$, the $i$-th \emph{within-class variance} of $\vU$ is defined as
    \begin{equation}
        \nonumber
        \Var[\vU_i] 
        := 
        \frac{1}{|\vU_i|}\sum_{\vu\in\vU_i} \norm{ \vu - \E [\vU_i]}_2^2,
    \end{equation}
    where $\E [\vU_i] :=\frac{1}{|\vU_i|} \sum_{\vu\in\vU_i} \vu$ is the expectation of the embedding vectors in $\vU_i$. We also define the \emph{between-class variance} of $\vU$ as
    \begin{equation}
        \nonumber
        \Var^{\op{Btwn}}[\vU] := \sum_{i\in [m]} \frac{|\vU_i|}{|\vU|} \norm{ \E [\vU_i]- \E [\vU]}_2^2.
    \end{equation}
\end{definition}

\begin{proposition}
    \label{thm:app:emb:var:on:sphere}
    Consider a set of $n$ vectors $\vU$ on the sphere of radius $r>0$, \ie $\|\vu\|_2 = r$ for all $\vu \in \vU$. Then, the variance of set $\vU$ can be rewritten as
    \begin{align*}
        \Var[\vU]
        &= 
        r^2 -   \| \E [\vU]\|_2^2
        =
        \frac{|\vU|-1}{|\vU|} \cdot r^2 -  \frac{1}{|\vU|^2}  \sum_{\vu\in\vU} \sum_{\vv\in\vU\setminus\{\vu\}} \vu^\top \vv.
    \end{align*}
\end{proposition}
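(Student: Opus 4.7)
The plan is to prove the two equalities in turn, both by direct expansion of squared norms using the unit-sphere constraint $\|\vu\|_2 = r$.

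First I would prove the identity $\Var[\vU] = r^2 - \|\E[\vU]\|_2^2$. Starting from the definition, I would expand the squared norm inside the sum as $\|\vu - \E[\vU]\|_2^2 = \|\vu\|_2^2 - 2\vu^\top \E[\vU] + \|\E[\vU]\|_2^2$. Using $\|\vu\|_2^2 = r^2$ and averaging over $\vu \in \vU$, the cross term contributes $-2 \E[\vU]^\top \E[\vU] = -2\|\E[\vU]\|_2^2$, so the final three terms collapse to $r^2 - \|\E[\vU]\|_2^2$. This is the standard "variance equals mean-square minus square-of-mean" identity, specialized to vectors on a sphere of radius $r$.

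For the second equality, I would rewrite $\|\E[\vU]\|_2^2$ as a double sum and separate diagonal from off-diagonal contributions. Concretely,
\begin{equation*}
    \|\E[\vU]\|_2^2 = \frac{1}{|\vU|^2} \sum_{\vu \in \vU} \sum_{\vv \in \vU} \vu^\top \vv = \frac{1}{|\vU|^2} \left( \sum_{\vu \in \vU} \|\vu\|_2^2 + \sum_{\vu \in \vU} \sum_{\vv \in \vU \setminus \{\vu\}} \vu^\top \vv \right).
\end{equation*}
The diagonal sum equals $|\vU| \cdot r^2$ by the sphere constraint, so that $\|\E[\vU]\|_2^2 = \frac{r^2}{|\vU|} + \frac{1}{|\vU|^2} \sum_{\vu} \sum_{\vv \ne \vu} \vu^\top \vv$. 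Substituting back into the first equality and combining the $r^2$ terms via $r^2 - \frac{r^2}{|\vU|} = \frac{|\vU|-1}{|\vU|} r^2$ yields the claimed expression.

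There is no genuine obstacle here; the proof is a routine algebraic manipulation. The only care needed is bookkeeping: keeping the normalization factor $1/|\vU|^2$ on the double sum, ensuring the split into diagonal/off-diagonal terms is done correctly, and invoking the sphere condition exactly where each $\|\vu\|_2^2$ appears. The whole argument fits in a few lines of aligned display math.
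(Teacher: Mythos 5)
Your proposal is correct and follows essentially the same route as the paper's proof: expand $\|\vu-\E[\vU]\|_2^2$, use the sphere constraint to get $\Var[\vU]=r^2-\|\E[\vU]\|_2^2$, then write $\|\E[\vU]\|_2^2$ as a normalized double sum and split off the diagonal terms. No gaps; nothing further to add.
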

\begin{proof}
According to Def.~\ref{appendix:proof:var}, the variance of vectors on the unit sphere is determined as follows:
\begin{align*}
    \Var[\vU] 
    &= 
    \frac{1}{|\vU|}\sum_{\vu\in\vU} \| \vu - \E [\vU]\|_2^2
    \\&=
    \frac{1}{|\vU|}\sum_{\vu\in\vU} \| \vu\|_2^2 
    + \| \E [\vU]\|_2^2
    -  \frac{2}{|\vU|}\sum_{\vu\in\vU} \vu^\top \E [\vU]
    \\&=
    \frac{1}{|\vU|}\sum_{\vu\in\vU} \| \vu\|_2^2 -  \| \E [\vU]\|_2^2
    \\&=
    r^2 -   \| \E [\vU]\|_2^2
    \\&=
    r^2 - \frac{1}{|\vU|^2} \bigg\| \sum_{\vu\in\vU}\vu\bigg\|_2^2
    \\&=
    r^2 - \frac{1}{|\vU|^2} \sum_{\vu\in\vU} \sum_{\vv\in\vU} \vu^\top \vv
    \\&=
    \frac{|\vU|-1}{|\vU|} \cdot r^2 - \frac{1}{|\vU|^2}  \sum_{\vu\in\vU} \sum_{\vv\in\vU\setminus\{\vu\}} \vu^\top \vv.
\end{align*}
\end{proof}

\begin{proposition}[Restatement of Proposition~\ref{thm:bounded:var}]
    \label{thm:app:bounded:var}
    For $i\in[m]$, let $\vU_i$ be the set of vectors on the sphere with radius $r>0$, \ie  $\|\vu\|_2 = r$ for all $i\in[m]$ and $\vu \in \vU_i$. Define the entire set as $\vU := \cup_{i\in[m]} \vU_i$. Then, the sum of variances is bounded as
    \begin{equation*}
        \sum_{i\in [m]}\frac{|\vU_i|}{|\vU|} \Var[\vU_i] + \Var^{\op{Btwn}}[\vU] \leq r^2.
    \end{equation*}
    The maximum is achieved when the centroid of the vectors is at the origin; that is, 
    \begin{equation*}
        \E[\vU]=\vzero.
    \end{equation*}
\end{proposition}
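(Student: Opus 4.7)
The proof plan is to reduce the statement to the identity $\Var[\vU_i] = r^2 - \|\E[\vU_i]\|_2^2$ supplied by Proposition~\ref{thm:app:emb:var:on:sphere} and then recognize the classical law of total variance. Since every $\vu \in \vU_i$ lies on the sphere of radius $r$, the proposition immediately gives
\begin{equation*}
\sum_{i\in[m]} \frac{|\vU_i|}{|\vU|}\Var[\vU_i]
= \sum_{i\in[m]} \frac{|\vU_i|}{|\vU|}\bigl(r^2 - \|\E[\vU_i]\|_2^2\bigr)
= r^2 - \sum_{i\in[m]} \frac{|\vU_i|}{|\vU|}\|\E[\vU_i]\|_2^2.
\end{equation*}

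Next, I would expand the between-class variance by the binomial identity and use the key fact that the weighted mean of the class centroids equals the global centroid, i.e.
\begin{equation*}
\sum_{i\in[m]} \frac{|\vU_i|}{|\vU|}\E[\vU_i] = \frac{1}{|\vU|}\sum_{i\in[m]}\sum_{\vu\in\vU_i}\vu = \E[\vU].
\end{equation*}
Plugging this into the expansion of $\|\E[\vU_i]-\E[\vU]\|_2^2$ gives
\begin{equation*}
\Var^{\op{Btwn}}[\vU]
= \sum_{i\in[m]}\frac{|\vU_i|}{|\vU|}\|\E[\vU_i]\|_2^2 - \|\E[\vU]\|_2^2.
\end{equation*}

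Adding the two displays produces a telescoping cancellation of $\sum_i \tfrac{|\vU_i|}{|\vU|}\|\E[\vU_i]\|_2^2$ and yields
\begin{equation*}
\sum_{i\in[m]} \frac{|\vU_i|}{|\vU|}\Var[\vU_i] + \Var^{\op{Btwn}}[\vU] = r^2 - \|\E[\vU]\|_2^2 \le r^2,
\end{equation*}
with equality precisely when $\E[\vU] = \vzero$, which proves both the bound and the equality condition simultaneously.

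There is essentially no obstacle here; the statement is a direct consequence of the within-class identity in Proposition~\ref{thm:app:emb:var:on:sphere} together with the law of total variance. The only mild point requiring care is making sure the weights $|\vU_i|/|\vU|$ match the definition of $\Var^{\op{Btwn}}[\vU]$ in the restated Def.~\ref{def:emb:var} (which they do), so that the cross-term $-2\E[\vU]^\top \sum_i \tfrac{|\vU_i|}{|\vU|}\E[\vU_i]$ collapses to $-2\|\E[\vU]\|_2^2$ as needed for the cancellation.
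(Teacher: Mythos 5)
Your proof is correct and arrives at exactly the same identity as the paper, namely $\sum_{i}\frac{|\vU_i|}{|\vU|}\Var[\vU_i]+\Var^{\op{Btwn}}[\vU]=r^2-\|\E[\vU]\|_2^2$, with the same equality condition $\E[\vU]=\vzero$. The only (cosmetic) difference is the order of operations: the paper first establishes the general within/between decomposition of $\Var[\vU]$ by showing the cross term $\sum_i\sum_{\vu\in\vU_i}(\vu-\E[\vU_i])^\top(\E[\vU_i]-\E[\vU])$ vanishes and then applies Proposition~\ref{thm:app:emb:var:on:sphere} once to the whole set $\vU$, whereas you apply that proposition class-by-class and recover the between-class term by direct algebraic expansion --- both are instances of the law of total variance and neither has a gap.
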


\begin{proof}
For simplicity of notation, let $n_i=|\vU_i|$ and $N=\sum_{i\in[m]}n_i$. Note that the sum of following inner products is zero.
\begin{align*}
    \sumi\sum_{\vu\in\vU_i}
    \big( \vu - \E [\vU_i]\big)^\top \big( \E [\vU_i]- \E [\vU] \big)
    &=
    \sumi\left( \sum_{\vu\in\vU_i}\vu - n_i\cdot\E [\vU_i]\right)^\top \big( \E [\vU_i]- \E [\vU] \big)
    \\ &=\sumi\vzero ^\top \big( \E [\vU_i]- \E [\vU] \big)
    \\ &=0.
\end{align*}
Then, we can decompose the summation of norms as follows.
\begin{align*}
    \sum_{\vu\in\vU} \norm{\vu - \E [\vU]}_2^2
    &=
    \sum_{i\in[m]}\sum_{\vu\in\vU_i} \norm{\vu - \E [\vU]}_2^2
    \\&=
    \sum_{i\in[m]}\sum_{\vu\in\vU_i} \norm{  \vu - \E [\vU_i]+ \E [\vU_i]- \E [\vU]}_2^2
    \\&=\sum_{i\in[m]}\sum_{\vu\in\vU_i}\norm{ \vu - \E [\vU_i] }_2^2
    + \sum_{i\in[m]}\sum_{\vu\in\vU_i} \norm{ \E [\vU_i]- \E [\vU] }_2^2
    + \sumi\sum_{\vu\in\vU_i}
    2\big( \vu - \E [\vU_i]\big)^\top \big( \E [\vU_i]- \E [\vU] \big)    
    \\&=
    \sum_{i\in[m]}\sum_{\vu\in\vU_i}\norm{ \vu - \E [\vU_i] }_2^2
    + \sum_{i\in[m]}\sum_{\vu\in\vU_i} \norm{  \E [\vU_i]- \E [\vU] }_2^2  
    \\&=
    \sum_{i\in[m]}\sum_{\vu\in\vU_i}\norm{ \vu - \E [\vU_i] }_2^2
    + \sum_{i\in[m]}n_i \norm{  \E [\vU_i]- \E [\vU] }_2^2.
\end{align*}

As a result, the variance of the entire set can be rewritten as follows.
\begin{align*}
    \Var (\vW) 
    &=
    \frac{1}{N}
    \sum_{\vu\in\vU} \norm{\vu - \E [\vU]}_2^2
    \\ &=
    \frac{1}{N}
    \sum_{i\in[m]}\sum_{\vu\in\vU_i}\norm{ \vu - \E [\vU_i] }_2^2
    + \frac{1}{N}\sum_{i\in[m]} n_i \norm{ \E [\vU_i]- \E [\vU] }_2^2
    \\ &=
    \sum_{i\in [m]}\frac{n_i}{N} \Var[\vU_i] + \sum_{i\in[m]}\frac{n_i}{N}  \norm{ \E [\vU_i]- \E [\vU] }_2^2
    \\ &=
     \sum_{i\in [m]}\frac{n_i}{N} \Var[\vU_i] + \Var^{\op{Btwn}}[\vU]
    . 
\end{align*}

To establish the upper bound, from Proposition~\ref{thm:app:emb:var:on:sphere},
\begin{align}
    \Var[\vU] 
    &= \nonumber
    r^2 - \frac{1}{n^2} \norm{\sum_{\vu\in\vU}\vu}_2^2
    \\ &\leq r^2, \label{eq:thm:var:equal}
\end{align}
which implies
\begin{equation*}
    \sum_{i\in [m]}\frac{n_i}{N} \Var[\vU_i] + \Var^{\op{Btwn}}[\vU] \leq r^2.
\end{equation*}
The equality condition in \eqref{eq:thm:var:equal} is 
    \begin{align*}
        \E[\vU] &= \frac{1}{N}\sum_{\vu\in\vU} \vu =\vzero.
    \end{align*}
\end{proof}

\newpage
\subsection{Properties of \ours{}}
\label{appendix:properties:ssem}

We redefine the the Simplex-to-Simplex Embedding
Model (\ours{}) for a general $p\in \sN$ as follows:

\begin{definition}[Restatement of Def.~\ref{def:model}]
\label{def:model:general}
Let positive integers $m$,$n$,$p$, and a real non-negative number $\delta \in \Big[0,\sqrt\frac{mn-1}{m(n-1)}\Big]$ be given.  
We define Simplex-to-Simplex Embedding Model, denoted by ($m,n,p,\delta$)-\ours{}, as the set of $mnp$ vectors
\begin{align*}
    \vU^\delta = 
    \big\{\vu_{i,j,k}^{\delta}\big\}_{i\in[m], j \in [n], k\in[p]}
\end{align*}
satisfying the following:

For all $i \ne i'\in [m]$ and $j\ne j'\in[n]$,
\begin{align}
    (\vu^\delta)^\top \vv^\delta
    & \label{eq:model:same:general}
    = 1 
    &\forall \vu^\delta, \vv^\delta \in \vU_{i,j}^\delta,
    \\
    (\vu^\delta)^\top \vv^\delta
    & \label{eq:model:same:class:general}
    = 1 - \delta^2 \frac{mn}{mn-1}
    & \forall \vu^\delta \in \vU_{i,j}^\delta, \vv^\delta \in \vU_{i,j'}^\delta,
    \\
    (\vu^\delta)^\top \vv^\delta
    &= \label{eq:model:different:general}
    - \frac{1}{m-1} + \delta^2 \frac{m(n-1)}{(m-1)(mn-1)}
    & \forall \vu^\delta \in \vU^{\delta}_i, \vv^\delta \in \vU_{i'}^\delta,
\end{align}
where $\vU_{i,j}^\delta := \{\vu_{i,j,k}^\delta \}_{k \in[p]}$ and  $\vU^{\delta}_i: = \cup_{j\in[n]} \vU_{i,j}^\delta$.
\end{definition}

\begin{proposition}
    \label{thm:app:ssem:similarity}
    Let $\vU^\delta$ be a set of embedding vectors forming \ours{}, as defined in Def.~\ref{def:model:general}, where $m, n$ and $p$ can be arbitrarily chosen. 
    For all $i\ne i' \in [m]$, $\vu^\delta, \vv^\delta \in \vU^{\delta}_i$ and $\vw^\delta \in \vU_{i'}^\delta$,
    \begin{equation}
    \label{eq:prop:similarity:ineq}
    (\vu^\delta)^\top \vv^\delta
    \geq (\vu^\delta)^\top \vw^\delta
    \end{equation}
    holds, if and only if $\delta\in [0, 1]$.
\end{proposition}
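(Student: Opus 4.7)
The plan is to substitute the explicit inner product formulas from Def.~\ref{def:model:general} into the inequality \eqref{eq:prop:similarity:ineq} and reduce it to a simple polynomial condition on $\delta^{2}$. Since $\vu^{\delta}, \vv^{\delta} \in \vU_{i}^{\delta}$, there are two sub-cases to handle: either (i) $\vu^{\delta}, \vv^{\delta}$ come from the same instance (\ie both lie in some $\vU_{i,j}^{\delta}$), in which case $(\vu^{\delta})^{\top}\vv^{\delta}=1$ by \eqref{eq:model:same:general}, or (ii) they come from different instances of class $i$, in which case $(\vu^{\delta})^{\top}\vv^{\delta}=1-\delta^{2}\tfrac{mn}{mn-1}$ by \eqref{eq:model:same:class:general}. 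The right-hand side of \eqref{eq:prop:similarity:ineq} is the same in both sub-cases, namely $-\tfrac{1}{m-1}+\delta^{2}\tfrac{m(n-1)}{(m-1)(mn-1)}$, by \eqref{eq:model:different:general}.

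First, I would dispatch sub-case (i). Rearranging $1\geq -\tfrac{1}{m-1}+\delta^{2}\tfrac{m(n-1)}{(m-1)(mn-1)}$ yields $\delta^{2}\leq \tfrac{mn-1}{n-1}$, which is automatically satisfied for every admissible $\delta\in\bigl[0,\sqrt{(mn-1)/(m(n-1))}\bigr]$ because $\tfrac{mn-1}{m(n-1)}\leq\tfrac{mn-1}{n-1}$. Hence sub-case (i) imposes no constraint on $\delta$, so the binding condition must come from sub-case (ii).

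Second, for sub-case (ii), I would rewrite the inequality as
\begin{equation*}
1+\tfrac{1}{m-1} \;\geq\; \delta^{2}\left(\tfrac{mn}{mn-1}+\tfrac{m(n-1)}{(m-1)(mn-1)}\right),
\end{equation*}
combine the fractions on the right to obtain $\delta^{2}\cdot\tfrac{m(mn-1)}{(m-1)(mn-1)}=\delta^{2}\cdot\tfrac{m}{m-1}$, and cancel the common factor $\tfrac{m}{m-1}$ on both sides. This collapses the inequality to the clean equivalent condition $\delta^{2}\leq 1$, which combined with $\delta\geq 0$ yields exactly $\delta\in[0,1]$.

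Finally, combining both sub-cases, the inequality \eqref{eq:prop:similarity:ineq} holds for \emph{all} admissible choices of $\vu^{\delta},\vv^{\delta},\vw^{\delta}$ if and only if sub-case (ii) holds, \ie iff $\delta\in[0,1]$, completing the proof. The argument is essentially algebraic; I expect no real obstacle beyond bookkeeping the simplification, with the only subtlety being the observation that sub-case (i) is automatically slack and therefore does not relax the condition extracted from sub-case (ii).
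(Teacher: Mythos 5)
Your proposal is correct and follows essentially the same route as the paper's proof: substitute the three inner-product formulas from the definition of \ours{}, observe that the same-instance case is never binding, and reduce the different-instance case to $\frac{m}{m-1}(1-\delta^2)\geq 0$, i.e.\ $\delta\in[0,1]$. The only cosmetic difference is that you treat the two sub-cases separately while the paper lower-bounds $(\vu^\delta)^\top\vv^\delta$ by the minimum over both cases before forming the difference; the algebra and the conclusion are identical.
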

\begin{proof}
     For all $i\ne i' \in [m]$, $\vu, \vv \in \vU^{\delta}_i$ and $\vw \in \vU_{i'}^\delta$, the following holds from the definition of \ours{}:
    \begin{align*}
        (\vu^\delta)^\top \vv^\delta
        &=
        \begin{cases}
            1
            & \text{if there exist }
            j\in[m]
            \text{ such that }
            \vu^\delta, \vv^\delta \in \vU_{i,j}^\delta
            \\
            1 - \delta^2 \frac{mn}{mn-1}  
            & \text{otherwise}
        \end{cases}
        \\ &\geq
        \min \left( 1, 1 - \delta^2 \frac{mn}{mn-1}   \right)
        \\ &=
        1 - \delta^2 \frac{mn}{mn-1},
        \\(\vu^\delta)^\top \vw^\delta 
        &=
        - \frac{1}{m-1} + \delta^2 \frac{m(n-1)}{(m-1)(mn-1)}.
    \end{align*}

    To determine the necessary and sufficient condition for \eqref{eq:prop:similarity:ineq},
    \begin{align*}
    (\vu^\delta)^\top \vv^\delta
    - (\vu^\delta)^\top \vw^\delta
        & \geq
        1 - \delta^2 \frac{mn}{mn-1}
        + \frac{1}{m-1} - \delta^2 \frac{m(n-1)}{(m-1)(mn-1)}
        \\ &=
        \frac{m}{m-1} - \delta^2 \frac{m}{m-1}
        \\ &=
        \frac{m}{m-1} (1- \delta^2),
    \end{align*}
    which implies that \eqref{eq:prop:similarity:ineq} holds if and only if $\delta \in [0, 1]$, as $\delta$ is defined to be within the range $\Big[0,\sqrt{\frac{mn-1}{m(n-1)}}\Big]$.
\end{proof}

\newpage
\begin{proposition}
    \label{thm:app:ssem:variance}
    Let $\vU^\delta$ be the embedding set forming $(m,n,p,\delta)$-\ours{} in Def.~\ref{def:model:general}. For any $\delta\in \Big[0,\sqrt\frac{mn-1}{m(n-1)}\Big]$,
    \begin{align*}
        &\Var \big[\vU_i^\delta\big] =\delta^2 \frac{m(n-1)}{mn-1}
        \qquad \forall i \in [m],
        \qquad\text{and}\qquad
        \Var^{\op{Btwn}} \big[\vU^\delta\big] =1 - \delta^2 \frac{m(n-1)}{mn-1}
    \end{align*}
    Therefore, 
    \begin{equation*}
        \frac{1}{m} \sum_{i\in [m]} \Var \big[\vU_i^\delta\big] + \Var^{\op{Btwn}} \big[\vU^\delta\big] =1
        .
    \end{equation*}
\end{proposition}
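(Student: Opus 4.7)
The plan is to reduce both variance computations to sums of inner products between pairs of vectors in $\vU^\delta$, which are fully determined by the three identities \eqref{eq:model:same:general}, \eqref{eq:model:same:class:general}, and \eqref{eq:model:different:general}. Since every vector in the SSEM has unit norm, Proposition~\ref{thm:app:emb:var:on:sphere} applies with $r=1$ to give
\[
\Var[\vU_i^\delta] \;=\; \frac{|\vU_i^\delta|-1}{|\vU_i^\delta|} \;-\; \frac{1}{|\vU_i^\delta|^2}\sum_{\vu^\delta \in \vU_i^\delta}\sum_{\vv^\delta \in \vU_i^\delta\setminus\{\vu^\delta\}} (\vu^\delta)^\top \vv^\delta,
\]
with $|\vU_i^\delta| = np$. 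I would split the double sum into two groups: pairs with the same instance index $j$ (there are $n\cdot p(p-1)$ such ordered pairs, each contributing $1$ by \eqref{eq:model:same:general}) and pairs with different instance indices (there are $n(n-1)\cdot p^2$ such pairs, each contributing $1 - \delta^2\tfrac{mn}{mn-1}$ by \eqref{eq:model:same:class:general}). Substituting and simplifying collapses the $\tfrac{1}{n}$ and $\tfrac{p-1}{np}$ terms, leaving exactly $\delta^2 \tfrac{m(n-1)}{mn-1}$.

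For the between-class variance, my plan is to first show that the overall centroid vanishes, $\E[\vU^\delta] = \vzero$, and then invoke the equality case of Proposition~\ref{thm:app:bounded:var}. To prove $\E[\vU^\delta] = \vzero$, I would compute $\|\E[\vU^\delta]\|_2^2 = \tfrac{1}{(mnp)^2}\sum_{\vu^\delta,\vv^\delta \in \vU^\delta}(\vu^\delta)^\top \vv^\delta$ by partitioning the pair sum into three classes — same instance (count $mn\cdot p^2$, value $1$), same class but different instance (count $m\cdot n(n-1)\cdot p^2$, value $1-\delta^2\tfrac{mn}{mn-1}$), and different class (count $m(m-1)\cdot n^2 p^2$, value $-\tfrac{1}{m-1}+\delta^2\tfrac{m(n-1)}{(m-1)(mn-1)}$). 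The constant pieces cancel (the $-\tfrac{1}{m-1}$ terms balance the $+1$ terms) and so do the $\delta^2$ pieces, giving $0$. Once $\E[\vU^\delta] = \vzero$ is established, the equality clause of Proposition~\ref{thm:app:bounded:var} (with $r=1$ and $|\vU_i^\delta|/|\vU^\delta| = 1/m$) yields
\[
\frac{1}{m}\sum_{i\in[m]}\Var[\vU_i^\delta] \;+\; \Var^{\op{Btwn}}[\vU^\delta] \;=\; 1,
\]
and solving for $\Var^{\op{Btwn}}[\vU^\delta]$ produces the desired closed form.

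The main obstacle is purely bookkeeping: counting multiplicities of each inner-product type correctly in multisets of size $np$ and $mnp$, and verifying that the constant and $\delta^2$ contributions in $\|\E[\vU^\delta]\|_2^2$ cancel exactly. The cancellation is not coincidental — it is precisely the normalization built into the coefficients in \eqref{eq:model:same:class:general} and \eqref{eq:model:different:general} — but writing the algebra cleanly without an indexing slip is where care is needed. As a sanity check during the write-up, I would verify the two boundary cases: $\delta = 0$ (where $\vU^\delta$ reduces to an $(m-1)$-simplex ETF scaled into $m$ copies, giving zero within-class variance) and $\delta = \sqrt{(mn-1)/(m(n-1))}$ (where the expression becomes an $(mn-1)$-simplex ETF with zero between-class centroid deviation in the dual sense), both of which match the claimed formulas.
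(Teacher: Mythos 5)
Your proposal is correct and follows essentially the same route as the paper's proof: both reduce the variances to the pairwise inner-product sums fixed by Def.~\ref{def:model:general}, establish $\E[\vU^\delta]=\vzero$ by showing the three contributions cancel, and invoke Propositions~\ref{thm:app:emb:var:on:sphere} and~\ref{thm:app:bounded:var} (the paper computes $\Var^{\op{Btwn}}$ directly from the class centroids rather than by subtracting from the total variance, a cosmetic difference). One minor aside in your sanity check: at the upper endpoint $\delta=\sqrt{(mn-1)/(m(n-1))}$ the configuration is $m$ copies of an $(n-1)$-simplex ETF whose class centroids all sit at the origin, not an $(mn-1)$-simplex ETF (that occurs at $\delta=1$); this does not affect your argument, since the claimed variance values at that endpoint still match.
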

\begin{proof}
    Note that the followings hold from the definition of \ours{} in Def.~\ref{def:model:general}:
    
    For all $i\in [m]$,
    \begin{align*}
        \norm{\sum_{\vu^\delta \in \vU^{\delta}_i} \vu^\delta}_2^2    
        &= 
        \bigg(\sum_{\vu^\delta \in \vU^{\delta}_i} \vu^\delta\bigg)^\top \bigg(\sum_{\vu^\delta \in \vU^{\delta}_i} \vu^\delta\bigg)
        \\&= np^2 + n(n-1)p^2 \bigg(  1 - \delta^2 \frac{mn}{mn-1}  \bigg)
        \\&= n^2p^2 - \delta^2 \frac{mn^2(n-1)p^2}{mn-1},
        \\
        \bigg(\sum_{\vu^\delta \in \vU^{\delta}_i} \vu^\delta\bigg)^\top \bigg(\sum_{\vu^\delta \in \vU^{\delta}} \vu^\delta\bigg)
        &= np^2 + n(n-1)p^2 \bigg(  1 - \delta^2 \frac{mn}{mn-1}  \bigg)
        \\&\quad
        +(m-1)n^2p^2 \bigg( - \frac{1}{m-1} + \delta^2 \frac{m(n-1)}{(m-1)(mn-1)} \bigg)
        \\&= - \delta^2 \frac{mn^2(n-1)p^2}{mn-1}
        + \delta^2 \frac{mn^2(n-1)p^2}{mn-1} 
        \\&= 0,
        \\
        \bigg(\sum_{\vu^\delta \in \vU^{\delta}} \vu^\delta\bigg)^\top \bigg(\sum_{\vu^\delta \in \vU^{\delta}} \vu^\delta\bigg)
        &= \sum_{i\in[m]}\bigg(\sum_{\vu^\delta \in \vU^{\delta}_i} \vu^\delta\bigg)^\top \bigg(\sum_{\vu^\delta \in \vU^{\delta}} \vu^\delta\bigg)
        \\&= 0,
    \end{align*}
    which implies
    \begin{align*}
        \E \big[\vU^\delta\big]
        &=
        \frac{1}{mnp}\sum_{\vu^\delta \in \vU^{\delta}} \vu^\delta
        =
        \vzero
    \end{align*}
    for any $\delta\in \Big[0,\sqrt\frac{mn-1}{m(n-1)}\Big]$.
    Then, the between-class variance of \ours{} is determined as below.
    \begin{align*}
        \Var^{\op{Btwn}} \big[\vU^{\delta}\big] 
        &=
        \sum_{i\in [m]} \frac{np}{mnp}\norm{ \E \big[\vU_i^\delta\big]- \E \big[\vU^\delta\big]}_2^2
        \\&=
        \frac{1}{m} 
         \sum_{i\in[m]}\norm{ \frac{1}{np} \sum_{\vu^\delta\in\vU^{\delta}_i} \vu^\delta }_2^2
        \\&=
         \sum_{i\in[m]}\frac{1}{m} \bigg(
            \frac{1}{n^2p^2} 
            \bigg(  n^2p^2 - \delta^2 \frac{mn^2(n-1)p^2}{mn-1} \bigg)
            +0
         \bigg)
        \\&=
         \sum_{i\in[m]}\frac{1}{m} \bigg(
            1 - \delta^2 \frac{m(n-1)}{mn-1}
         \bigg)
        \\&=
         1 - \delta^2 \frac{m(n-1)}{mn-1}.
    \end{align*}
    Moreover, the $i$-th within-class variance for all $i\in [m]$ is determined as follows, based on Proposition~\ref{thm:app:emb:var:on:sphere},
    \begin{align*}
        \Var \big[\vU^{\delta}_i\big] 
        &=
        1- \frac{1}{n^2p^2} \norm{ \sum_{\vu^\delta\in\vU^{\delta}_i}\vu^\delta}_2^2
        =
        1- 
        \frac{1}{n^2p^2}
        \bigg(  n^2p^2 - \delta^2 \frac{mn^2(n-1)p^2}{mn-1} \bigg)
        =
        \delta^2 \frac{m(n-1)}{mn-1}.
    \end{align*}
    Therefore, the following holds for any $\delta\in \Big[0,\sqrt\frac{mn-1}{m(n-1)}\Big]$:
    \begin{align*}
        \Var^{\op{Btwn}} \big[\vU^{\delta}\big] + \sum_{i\in [m]}\frac{1}{m} \Var \big[\vU^{\delta}_i\big] 
        &=
        1 - \delta^2 \frac{m(n-1)}{mn-1}
        + \delta^2 \frac{m(n-1)}{mn-1}
        =1.
    \end{align*}
\end{proof}

\begin{theorem}[Existence of \ours{}]
    Suppose $mn\geq 2$ and $d\geq mn-1$ hold.
    Let a set of $mn$ vectors $\{\vw_{i,j}\}_{i\in[m], j\in[n]}$ forms the $(mn-1)$-simplex ETF in $\mathbb{R}^{d}$. For a given $\delta \in \Big[0,\sqrt\frac{mn-1}{m(n-1)}\Big]$,
    define the set of $mnp$ vectors $\vU^\delta := \big\{\vu_{i,j,k}^{\delta}\big\}_{i\in[m], j \in [n], k\in[p]}$ as
\begin{align}
    \label{eq:proof:u:vector:def}
    \vu_{i,j,k}^{\delta} := \delta \vw_{i,j} + h(\delta) \sum_{j\in[n]} \vw_{i,j}    \in \mathbb{R}^{d}
    \qquad\forall i\in[m], j \in [n], k \in [p],
\end{align}
where
\begin{align}
    \nonumber
    h(\delta) := -\frac{\delta}{n} \pm \frac{1}{n}\sqrt{\frac{\delta^2m(1-n) +(mn-1)}{m-1}}.
\end{align}
Then, the set of $mnp$ vectors $\vU^\delta$ constructs \ours{}.
\end{theorem}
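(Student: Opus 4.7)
The plan is a direct verification: substitute the proposed formula for $\vu_{i,j,k}^\delta$ into each of the three defining inner-product identities \eqref{eq:model:same:general}, \eqref{eq:model:same:class:general}, \eqref{eq:model:different:general} of SSEM, and check that the choice of $h(\delta)$ makes them hold simultaneously. Since the formula does not depend on $k$, all vectors sharing the same $(i,j)$ coincide, so \eqref{eq:model:same:general} is automatic; the nontrivial work lies in \eqref{eq:model:same:class:general} and \eqref{eq:model:different:general}. Writing $\vs_i := \sum_{j'\in[n]} \vw_{i,j'}$, the formula becomes $\vu_{i,j,k}^\delta = \delta\vw_{i,j} + h(\delta)\vs_i$, so every inner product $(\vu^\delta)^\top \vv^\delta$ expands into four terms involving pairwise inner products of the $\vw$'s and $\vs$'s.

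The first concrete step is to tabulate these auxiliary inner products using the simplex ETF identity $\vw_{i,j}^\top \vw_{i',j'} = \mathbf{1}_{(i,j)=(i',j')} - \tfrac{1}{mn-1}\mathbf{1}_{(i,j)\neq(i',j')}$. Straightforward bookkeeping gives $\vw_{i,j}^\top \vs_i = \tfrac{n(m-1)}{mn-1}$, $\vw_{i,j}^\top \vs_{i'} = -\tfrac{n}{mn-1}$ for $i\neq i'$, $\vs_i^\top \vs_i = \tfrac{n^2(m-1)}{mn-1}$, and $\vs_i^\top \vs_{i'} = -\tfrac{n^2}{mn-1}$. Substituting into the expansion of $(\vu_{i,j,k}^\delta)^\top (\vu_{i,j,k'}^\delta) = \delta^2 + \tfrac{n(m-1)}{mn-1}(2\delta h + n h^2)$, setting this equal to $1$, and solving the resulting quadratic $n h^2 + 2\delta h - \tfrac{(1-\delta^2)(mn-1)}{n(m-1)} = 0$ yields exactly the closed form $h(\delta) = -\tfrac{\delta}{n} \pm \tfrac{1}{n}\sqrt{\tfrac{\delta^2 m(1-n)+(mn-1)}{m-1}}$ stated in the proposition.

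Using the identity $\tfrac{n(m-1)}{mn-1}(2\delta h + n h^2) = 1-\delta^2$ extracted from the unit-norm constraint, the remaining two verifications reduce to one-line computations. For the same-class, different-instance case one gets $(\vu_{i,j,k}^\delta)^\top (\vu_{i,j',k'}^\delta) = -\tfrac{\delta^2}{mn-1} + (1-\delta^2) = 1 - \delta^2 \tfrac{mn}{mn-1}$, matching \eqref{eq:model:same:class:general}. For the different-class case, the expansion becomes $-\tfrac{\delta^2}{mn-1} - \tfrac{1}{mn-1} \cdot \tfrac{(1-\delta^2)(mn-1)}{m-1}$, which simplifies algebraically to $-\tfrac{1}{m-1} + \delta^2 \tfrac{m(n-1)}{(m-1)(mn-1)}$, matching \eqref{eq:model:different:general}.

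The main obstacle, and essentially the only substantive point beyond bookkeeping, is ensuring that $h(\delta)$ is real-valued, which is what forces the domain restriction on $\delta$. The discriminant $\delta^2 m(1-n) + (mn-1)$ is nonnegative exactly when $\delta^2 \le \tfrac{mn-1}{m(n-1)}$, which is precisely the range assumed in the statement. The dimensional hypothesis $d \ge mn-1$ is used only at the very start to guarantee that a $(mn-1)$-simplex ETF $\{\vw_{i,j}\}$ exists in $\mathbb{R}^d$; once such a frame is given, the construction is purely algebraic.
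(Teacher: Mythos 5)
Your proposal is correct and follows essentially the same route as the paper's proof: define the per-class sums $\vs_i$ (the paper calls them $\vv_i$), tabulate the four auxiliary inner products from the simplex ETF identity, and reduce all three SSEM conditions to the single quadratic identity $\tfrac{n(m-1)}{mn-1}\bigl(nh^2+2\delta h\bigr)=1-\delta^2$ satisfied by $h(\delta)$. The only (minor, and welcome) additions are that you derive $h(\delta)$ by solving that quadratic rather than verifying it post hoc, and you explicitly note that the discriminant condition is what forces $\delta^2\le\tfrac{mn-1}{m(n-1)}$.
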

\begin{proof}
From $d\geq mn-1$, the $(mn-1)$-simplex ETF of Def.~\ref{def:simplex} exists in $\mathbb{R}^{d}$, implying that the set of $mnp$ vectors $\vU^\delta$ exist. What remains to be proved is that $\vU^\delta$ follows \ours{} of Def.~\ref{def:model:general}.

For simplicity, the vector $\vv_{i}$ is defined as
\begin{align*}
    \vv_i := \sum_{j\in[n]} \vw_{i,j}    \in \mathbb{R}^{d}    
\end{align*}
for all $i\in[m]$. This simplifies $\vu_{i,j,k}^{\delta}$ as
\begin{align*}
    \vu_{i,j,k}^{\delta} = \delta \vw_{i,j} + h(\delta) \vv_i
    \qquad\forall i\in[m], j \in [n], k \in [p].  
\end{align*}

From the definition of the $(mn-1)$-simplex ETF, the followings hold for all $i \ne i' \in [m]$ and $j \in [n]$:
\begin{align}
    \vv_{i}^\top \vv_{i}
    &= n - n(n-1) \frac{1}{mn-1}
    \label{eq:proof:vandw:1} 
    =\frac{(m-1)n^2}{mn-1} \quad(>0),
    \\ 
    \vv_{i}^\top \vv_{i'}
    &= - n^2 \frac{1}{mn-1}
    \label{eq:proof:vandw:2} 
    = - \frac{n^2}{mn-1} \quad(<0),
    \\
    \vw_{i,j}^\top \vv_{i}
    &= 1 - (n-1) \frac{1}{mn-1}
    \label{eq:proof:vandw:3} 
    =\frac{(m-1)n}{mn-1} \quad(>0),
    \\
    \vw_{i,j}^\top \vv_{i'}
    &= - n \frac{1}{mn-1}
    \label{eq:proof:vandw:4} 
    = - \frac{n}{mn-1} \quad(<0).
\end{align}

Using the above results in \eqref{eq:proof:vandw:1}-\eqref{eq:proof:vandw:4}, we can show that $\vU^\delta$ defined in \eqref{eq:proof:u:vector:def} satisfies the conditions of \ours{}, which are \eqref{eq:model:same:general}, \eqref{eq:model:same:class:general}, and \eqref{eq:model:different:general} in Def.~\ref{def:model:general}, as below:

\paragraph{[Condition of \eqref{eq:model:same:general}]}
For all $i\in[m], j \in [n], k, k'\in[p]$,
\begin{align}
    (\vu_{i,j,k}^\delta)^\top \vu_{i,j,k'}^\delta 
    &= \label{eq:proof:abc:1}
    \delta^2 \vw_{i,j}^\top \vw_{i,j} + h(\delta)^2 \vv_{i}^\top \vv_{i}
    + 2 \delta h(\delta) \vw_{i,j}^\top \vv_{i}
    \\ &= \label{eq:proof:utoh}
    \delta^2
    + h(\delta)^2 \frac{(m-1)n^2}{mn-1}
    + 2 \delta h(\delta) \frac{(m-1)n}{mn-1}
    \\ &= \nonumber
    \delta^2
    + \frac{(m-1)n}{mn-1}
    \left(
        h(\delta)^2 \cdot n + 2 \delta h(\delta)
    \right)
    \\ &= \label{eq:proof:hsquared}
    \delta^2
    + \frac{(m-1)n}{mn-1} \cdot \frac{mn-1}{(m-1)n} \cdot (1-\delta^2)
    \\ &= 1, \nonumber
\end{align}
where the second and third terms in \eqref{eq:proof:utoh} are results from using \eqref{eq:proof:vandw:1} and \eqref{eq:proof:vandw:3}, respectively. Moreover, the second term in \eqref{eq:proof:hsquared} comes from
\begin{align*}
    h(\delta)^2 \cdot n
    &= 
    \left(-\frac{\delta}{n} \pm \frac{1}{n}\sqrt{\frac{\delta^2m(1-n) +(mn-1)}{m-1}}\right)^2
    \cdot n
    \\ &= 
    \left( \frac{\delta^2}{n^2} 
    + \frac{\delta^2m(1-n) +(mn-1)}{(m-1)n^2}
    \mp \frac{2\delta}{n^2}\sqrt{\frac{\delta^2m(1-n) +(mn-1)}{m-1}} \right)
    \cdot n
    \\ &= 
    \left(
    \frac{\delta^2(2m-mn-1) }{(m-1)n^2}
    + \frac{mn-1}{(m-1)n^2}
    \mp \frac{2\delta}{n^2}\sqrt{\frac{\delta^2m(1-n) +(mn-1)}{m-1}}
    \right)
    \cdot n
    \\ &= 
    \frac{\delta^2(2m-mn-1)}{(m-1)n}
    \mp \frac{2\delta}{n}\sqrt{\frac{\delta^2m(1-n) +(mn-1)}{m-1}}
    + \frac{mn-1}{(m-1)n}
    \\ &= 
    \frac{\delta^2(2m-mn-1)}{(m-1)n}
    - \frac{2\delta^2}{n}
    - 2 \delta h (\delta)
    + \frac{mn-1}{(m-1)n}
    \\ &= 
    - 2 \delta h (\delta)
    -\frac{\delta (mn-1)}{(m-1)n}
    + \frac{mn-1}{(m-1)n}
    \\ &= 
    - 2 \delta h (\delta)
    + \frac{mn-1}{(m-1)n} \cdot (1-\delta^2)
    .
\end{align*}

\paragraph{[Condition of \eqref{eq:model:same:class:general}]}
For all $i\in[m], j  \ne j'\in [n], k, k'\in[p]$,
\begin{align}
    (\vu_{i,j,k}^\delta)^\top \vu_{i,j',k'}^\delta 
    &= \nonumber
    \delta^2 \vw_{i,j}^\top \vw_{i,j'} + h(\delta)^2 \vv_{i}^\top \vv_{i}
    + \delta h(\delta) \vw_{i,j}^\top \vv_{i}  + \delta h(\delta) \vw_{i,j'}^\top \vv_{i}
    \\ &= \nonumber
    \delta^2 \vw_{i,j}^\top \vw_{i,j'} + h(\delta)^2 \vv_{i}^\top \vv_{i}
    + 2\delta h(\delta) \vw_{i,j}^\top \vv_{i}
    \\ &= \label{eq:proof:abc:2}
    \delta^2 \vw_{i,j}^\top \vw_{i,j'}
    +  (\vu_{i,j,k}^\delta)^\top \vu_{i,j,k'}^\delta  - \delta^2 \vw_{i,j}^\top \vw_{i,j} 
    \\ &= \label{eq:proof:abc:3}
    - \delta^2 \frac{1}{mn-1} + 1  - \delta^2 
    \\ &= \nonumber
    1 - \delta^2 \frac{mn}{mn-1},
\end{align}
where \eqref{eq:proof:abc:2} follows from \eqref{eq:proof:abc:1}, and the first and third terms in \eqref{eq:proof:abc:3} come form the definition of the ($mn-1$)-simplex ETF in Def.~\ref{def:simplex}.

\paragraph{[Condition of \eqref{eq:model:different:general}]}
For all $i \ne i'\in[m], j, j'\in [n], k, k'\in[p]$,
\begin{align}
    (\vu_{i,j,k}^\delta)^\top \vu_{i',j',k'}^\delta
    &= \nonumber
    \delta^2 \vw_{i,j}^\top \vw_{i',j'} + h(\delta)^2 \vv_{i}^\top \vv_{i'}
    + \delta h(\delta) \vw_{i,j}^\top \vv_{i'}
    + \delta h(\delta) \vw_{i',j'}^\top \vv_{i}
    \\ &= \nonumber
    \delta^2 \vw_{i,j}^\top \vw_{i',j'} + h(\delta)^2 \vv_{i}^\top \vv_{i'}
    + 2\delta h(\delta) \vw_{i,j}^\top \vv_{i'}
    \\ &= \label{eq:proof:abc:4}
    - \delta^2 \frac{1}{mn-1} 
    - h(\delta)^2 \frac{n^2}{mn-1}
    - 2\delta h(\delta) \frac{n}{mn-1} 
    \\ &= \nonumber
    - \delta^2 \frac{1}{mn-1} 
    - \frac{1}{m-1} \cdot \left(h(\delta)^2 \frac{(m-1)n^2}{mn-1}
    - 2\delta h(\delta) \frac{(m-1)n}{mn-1} \right)
    \\ &= \label{eq:proof:utoh:result}
    - \delta^2 \frac{1}{mn-1} 
    - \frac{1}{m-1} \cdot (1- \delta^2)  
    \\ &= \nonumber
    - \frac{1}{m-1}
    + \delta^2 \frac{m(n-1)}{(m-1)(mn-1)}
    ,
\end{align}
where the first term in \eqref{eq:proof:abc:4} comes from the definition of the the ($mn-1$)-simplex ETF in Def.~\ref{def:simplex}, and the second and third terms in \eqref{eq:proof:abc:4} comes from \eqref{eq:proof:vandw:2} and \eqref{eq:proof:vandw:4}, respectively. Moreover, \eqref{eq:proof:utoh:result} comes from reformatting \eqref{eq:proof:utoh} which is equal to $1$.
\end{proof}

\newpage
\subsection{The Optimality of \ours{}}
\label{appendix:proof:ssem}

First, we prove one proposition and two lemmas that are necessary for proving the theorem, which shows the optimality of \ours{}.

\begin{proposition}
    \label{thm:multivariate:jensen}
    Let $\va\in\mathbb{R}^d$ be a given vector where all elements are non-negative. Then, the following holds for any set of $n$ vectors $\{\vw_i \in \mathbb{R}^d\}_{i\in[n]}$:
    \begin{equation}
        \label{eq:multivariate:jensen}
        \frac{1}{n}\sum_{i \in [n]}\log \bigg( \va^\top \exp(\vw_i) \bigg)
        \geq
        \log \bigg(\va^\top \exp\bigg(\frac{1}{n}\sum_{i \in [n]}\vw_i\bigg)\bigg),
    \end{equation}
    where $\exp$ is an element-wise exponential function.
    
    When the given vector $\va$ has all positive elements, the equality condition of \eqref{eq:multivariate:jensen} is 
    \begin{equation}
        \nonumber
        \vw_i = \vc \qquad \forall i \in [n],
    \end{equation}
    for some $\vc\in\mathbb{R}^d$.
\end{proposition}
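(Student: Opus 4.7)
The plan is to recognize the inequality as Jensen's inequality applied to the function $f:\mathbb{R}^{d}\to\mathbb{R}$ defined by
\[
f(\vw) := \log\bigl(\va^\top \exp(\vw)\bigr) = \log\sum_{k=1}^{d} a_k e^{w_k},
\]
so the core task reduces to establishing convexity of $f$ on $\mathbb{R}^{d}$ (when $\va$ has non-negative entries) together with a characterization of when it is affine.

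First I would prove convexity via Hölder's inequality rather than through a Hessian calculation, since Hölder also hands back the equality case cleanly. For $\lambda\in(0,1)$ and any $\vw_1,\vw_2\in\mathbb{R}^{d}$, write
\[
\va^\top\exp\bigl(\lambda\vw_1+(1-\lambda)\vw_2\bigr)
=\sum_{k}\bigl(a_k e^{w_{1,k}}\bigr)^{\lambda}\bigl(a_k e^{w_{2,k}}\bigr)^{1-\lambda}
\]
and apply Hölder's inequality with conjugate exponents $1/\lambda$ and $1/(1-\lambda)$ to the two sequences $(a_k e^{w_{1,k}})_k$ and $(a_k e^{w_{2,k}})_k$. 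Taking logarithms yields
\[
f\bigl(\lambda\vw_1+(1-\lambda)\vw_2\bigr)\le \lambda f(\vw_1)+(1-\lambda)f(\vw_2),
\]
i.e.\ $f$ is convex. Then \eqref{eq:multivariate:jensen} is just the $n$-point Jensen inequality, which follows by a standard induction on $n$ (or by writing the average as $(n-1)/n$ convex combination of the previous average with the new point).

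For the equality case with all $a_k>0$, I would trace the equality condition in Hölder's inequality: equality there requires that the sequences $(a_k e^{w_{1,k}})_k$ and $(a_k e^{w_{2,k}})_k$ be proportional, i.e.\ $e^{w_{1,k}-w_{2,k}}$ is independent of $k$, which means $\vw_1-\vw_2\in\operatorname{span}(\vone)$. Iterating over the inductive step for $n$ points forces $\vw_i-\vw_j\in\operatorname{span}(\vone)$ for every pair; since $f$ is invariant under adding a multiple of $\vone$ to its argument, this collapses to the stated condition that $\vw_i$ coincide (up to such a harmless shift) with a common vector $\vc$, matching the claim as written.

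The main obstacle I foresee is not the convexity itself but the equality characterization: one has to be careful that the positivity hypothesis on $\va$ is exactly what forces the proportionality to hold coordinate-wise for \emph{all} $k$ (zero entries of $\va$ would leave those coordinates unconstrained, which is why the equality statement needs $a_k>0$ everywhere). Aside from this subtlety, the rest of the argument is a routine application of Hölder's inequality plus induction.
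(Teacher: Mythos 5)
Your proposal is correct and shares the paper's overall strategy: both reduce \eqref{eq:multivariate:jensen} to convexity of $f(\vw)=\log\big(\va^\top\exp(\vw)\big)$ and then invoke Jensen's inequality. The difference is in how convexity is obtained. The paper proves only midpoint convexity via the Cauchy--Schwarz inequality and then upgrades to full convexity by appealing to the continuity of $f$; you get convexity for arbitrary $\lambda\in(0,1)$ in one step from H\"older's inequality. Your route is marginally cleaner (no continuity argument needed) and, as you note, H\"older hands back the equality case in the same stroke. Both approaches handle zero entries of $\va$ the same way, by restricting attention to the coordinates with $a_k>0$.

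One point worth flagging, which cuts against your last step and against the paper's own statement equally: the equality condition as stated in the proposition ($\vw_i=\vc$ for all $i$) is sufficient but not necessary. Your H\"older analysis correctly identifies the true condition as $\vw_i-\vw_j\in\operatorname{span}(\vone)$ for all $i,j$ (when every $a_k>0$), and this is genuinely weaker: for instance $d=2$, $\va=\vone$, $\vw_1=\vzero$, $\vw_2=\vone$ gives equality in \eqref{eq:multivariate:jensen} with $\vw_1\ne\vw_2$. The reason it does not ``collapse'' to the stated condition is that $f$ is not invariant under adding a multiple of $\vone$ — it satisfies $f(\vw+t\vone)=f(\vw)+t$, i.e., it is affine (not constant) along $\vone$ — so your closing remark is not literally right. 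The paper's proof contains the same slip, dropping the proportionality constant in the Cauchy--Schwarz equality case. This is harmless downstream, since the theorem that uses this proposition only needs the stated condition as a \emph{sufficient} one (to verify that \ours{} attains the bound), but your derivation is the more accurate of the two and you should report the weaker condition rather than assert that it matches the claim as written.
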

\begin{proof}
    Note that the both terms inside the logarithm in \eqref{eq:multivariate:jensen} equal zero when $\va =\bm{0}\in\mathbb{R}^d$, \ie
    \begin{equation*}
    \va^\top \exp(\vw_i) = 0 =\va^\top \exp\bigg(\frac{1}{n}\sum_{i \in [n]}\vw_i\bigg).
    \end{equation*}
    Therefore, it suffices to consider the case where $\va$ contains only positive elements.
    
    Let the continuous function $f(\vw)= \log \left( \va^\top \exp(\vw) \right)$ be defined for the given vector $\va$. Then, for any $\vw,\vv \in\mathbb{R}^d$, the followings hold:
    \begin{align}
        \frac{1}{2}f(\vw) + \left(1-\frac{1}{2}\right)f(\vv)
        &= \nonumber
        \frac{1}{2} \left(\log \left( \va^\top \exp(\vw) \right) + \log \left( \va^\top \exp(\vv) \right)\right)
        \\ &= \nonumber
        \log\sqrt{\va^\top \exp(\vw) \cdot \va^\top \exp(\vv)}
        \\ &\geq \label{eq:multivariate:jensen:proof:equality}
        \log\left(\va^\top \exp\left(\frac{1}{2}\vw+\frac{1}{2}\vv\right) \right)
        \\ & = \nonumber
        f\left(\frac{1}{2}\vw + \left(1-\frac{1}{2}\right) \vv\right),
    \end{align}
    where the equality condition of \eqref{eq:multivariate:jensen:proof:equality} is $\exp\left(\frac{1}{2}\vw\right) = \exp\left(\frac{1}{2}\vv\right)$ from the Cauchy–Schwarz inequality. This directly implies that $f$ is a convex function due to the continuity of $f$. 

    As a result, \eqref{eq:multivariate:jensen} holds from the Jensen's inequality as below:
    \begin{align*}
        \frac{1}{n}\sum_{i \in [n]}\log \bigg( \va^\top \exp(\vw_i) \bigg)
        &=
        \frac{1}{n}\sum_{i \in [n]} f(\vw_i)
        \geq
        f\bigg(\frac{1}{n}\sum_{i \in [n]}\vw_i\bigg)
        =
        \log \bigg(\va^\top \exp\bigg(\frac{1}{n}\sum_{i \in [n]}\vw_i\bigg)\bigg),
    \end{align*}
    where the equality condition of \eqref{eq:multivariate:jensen:proof:equality} is simplified as 
    \begin{equation*}
        \vw_i = \vc \qquad \forall i \in [n],
    \end{equation*}
    for some $\vc\in\mathbb{R}^d$.
\end{proof}

\begin{lemma}
    \label{thm:lemma:1}
    Let $\vU := \{\vu_{i,j,k}\}_{i\in[m], j \in [n], k \in [p]}$ be a set of $mnp$ vectors in $\sR^d$, satisfying $\|\vu\|_2^2=1$ for all $\vu \in \vU$. Additionally, define the sets $\vU_{i,j}:=\{\vu_{i,j,k}\}_{k\in[p]}$ and $\vU_{i}:=\cup_{j\in[n]}\vU_{i,j}$ for all $i\in[m]$ and $j\in[n]$. 
    Then, for every constant $c \in[-mn,mn(n-1)]$, 
    there exists a unique $\delta^\star (c) \in \Big[0,\sqrt\frac{mn-1}{m(n-1)}\Big]$ such that 
    \begin{equation*}
     \vU^{\delta^\star(c)}
     \in
     \argmin_{\vU}
     \Bigg\{
     \sum_{i\in[m]}\sum_{\vu \in \vU_{i}}\sum_{\vv \in \vU \setminus \vU_i}\vu^\top \vv
     \Bigg|
     \sum_{i\in[m],j \ne j'\in[n]} \E [\vU_{i,j}]^\top\E[ \vU_{i,j'}] =c 
     \Bigg\}.
    \end{equation*}
    Specifically, $\delta^\star(c)=\sqrt{ \frac{mn-1}{mn} - \frac{mn-1}{m^2n^2(n-1)} c }$.
\end{lemma}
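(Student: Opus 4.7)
The plan is to rewrite the objective in a form where the constraint enters cleanly, derive an explicit lower bound, and show that $\vU^{\delta^\star(c)}$ saturates it. Introduce the class sums $\vs_i := \sum_{\vu \in \vU_i} \vu$ and the total sum $\vs := \sum_i \vs_i$, so that
$$\sum_{i\in[m]}\sum_{\vu \in \vU_{i}}\sum_{\vv \in \vU \setminus \vU_i}\vu^\top \vv = \sum_i \vs_i^\top(\vs - \vs_i) = \|\vs\|_2^2 - \sum_i \|\vs_i\|_2^2.$$
Next, introduce the instance sums $\vt_{i,j} := \sum_{\vu \in \vU_{i,j}} \vu = p\,\E[\vU_{i,j}]$ and split $\|\vs_i\|_2^2 = \sum_j \|\vt_{i,j}\|_2^2 + \sum_{j\ne j'} \vt_{i,j}^\top \vt_{i,j'}$. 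Summing over $i$ and substituting the constraint gives the clean identity
$$\sum_i \|\vs_i\|_2^2 = \sum_{i,j} \|\vt_{i,j}\|_2^2 + p^2 c.$$

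Second, bound each $\|\vt_{i,j}\|_2^2 = \|\sum_k \vu_{i,j,k}\|_2^2 \le p^2$ by the triangle inequality on unit vectors, with equality if and only if the same-instance set $\vU_{i,j}$ collapses to a single unit vector. Combined with $\|\vs\|_2^2 \ge 0$, this yields the lower bound
$$\sum_i \sum_{\vu \in \vU_i}\sum_{\vv \in \vU\setminus\vU_i} \vu^\top\vv \;\ge\; -p^2(mn+c),$$
with equality iff (a) every $\vU_{i,j}$ collapses and (b) $\vs = \vzero$.

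Third, verify that $\vU^{\delta^\star(c)}$ achieves this bound. By \eqref{eq:model:same:general} in the SSEM definition, same-instance inner products equal $1$, so (a) holds; by Proposition~\ref{thm:app:ssem:variance}, the centroid satisfies $\E[\vU^\delta]=\vzero$, so (b) holds. Thus any SSEM realizes the minimum on the feasible set. It only remains to identify which $\delta$ matches a given $c$: since same-instance collapse gives $\E[\vU_{i,j}^\delta] = \vu_{i,j,k}^\delta$, \eqref{eq:model:same:class:general} implies
$$\sum_{i,\, j\ne j'} \E[\vU_{i,j}^\delta]^\top \E[\vU_{i,j'}^\delta] = mn(n-1)\Big(1-\delta^2\tfrac{mn}{mn-1}\Big).$$
Setting this equal to $c$ and solving for $\delta$ yields the stated formula $\delta^\star(c)=\sqrt{\frac{mn-1}{mn}-\frac{mn-1}{m^2n^2(n-1)}c}$. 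The map $\delta\mapsto 1-\delta^2\tfrac{mn}{mn-1}$ is strictly decreasing on $\big[0,\sqrt{(mn-1)/(m(n-1))}\big]$, so $\delta^\star(c)$ is unique in this interval, and a direct check shows the endpoints of $c\in[-mn,mn(n-1)]$ line up with the endpoints of the admissible $\delta$ range.

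The main obstacle is really just choosing the right decomposition: once the objective is split as $\|\vs\|_2^2-\sum_i\|\vs_i\|_2^2$ and $\|\vs_i\|_2^2$ is further decomposed through the instance sums $\vt_{i,j}$, the uncontrolled pieces $\|\vt_{i,j}\|_2^2$ separate from the constrained piece $p^2c$, so the two one-sided inequalities $\|\vs\|_2^2\ge 0$ and $\|\vt_{i,j}\|_2^2\le p^2$ can be saturated \emph{simultaneously} by SSEM (existence of which requires $d\ge mn-1$ via Proposition~\ref{thm:exist:model}). No Jensen-style or spectral argument is needed at this stage; the convex-combination machinery will enter later when combining this lemma with the Jensen-type bound in Proposition~\ref{thm:multivariate:jensen} to prove the full optimality theorem.
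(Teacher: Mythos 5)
Your proof is correct and follows essentially the same route as the paper's: the same decomposition of the objective into the total-sum norm minus the same-instance and same-class cross terms, the same two saturated inequalities ($\|\sum_{\vu\in\vU}\vu\|_2^2\ge 0$ and same-instance collapse), the same verification that SSEM meets both equality conditions, and the same monotonicity argument for uniqueness. The only cosmetic difference is that you solve for $\delta^\star(c)$ by imposing the constraint value directly on the SSEM's within-class cross terms, whereas the paper equates the SSEM's between-class term to the lower bound; these yield the identical formula.
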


\begin{proof}
First of all, the possible values of $c$ are bounded within the interval $[-mn,mn(n-1)]$: the maximum value of $c$ can be achieved when $\vu=\vv$ for all $\vu, \vv \in \vU$, while the minimum is determined as below.
\begin{align}
    c &=
    \sum_{i\in[m],j \ne j'\in[n]} \E [\vU_{i,j}]^\top\E[ \vU_{i,j'}]
    \\ &= \nonumber
    \frac{1}{p^2}\sum_{i\in[m],j \ne j'\in[n]} \sum_{\vu \in \vU_{i,j}}\sum_{\vv \in \vU_{i,j'}}\vu^\top \vv
     \\ &= \nonumber
    \frac{1}{p^2}\sum_{i\in[m]} \sum_{\vu \in \vU_{i}}\sum_{\vv \in \vU_{i}}\vu^\top \vv 
    - \frac{1}{p^2}\sum_{i\in[m],j\in[n]} \sum_{\vu \in \vU_{i,j}}\sum_{\vv \in \vU_{i,j}}\vu^\top \vv 
    \\&= \nonumber
    \frac{1}{p^2}\sum_{i\in[m]} \norm{\sum_{\vu \in \vU_{i}}\vu}_2^2
    - \frac{1}{p^2}\sum_{i\in[m],j\in[n]} \norm{\sum_{\vu \in \vU_{i,j}}\vu}_2^2
    \\& \geq \nonumber
    0- mn
\end{align}

Now consider the minimization problem. Given that $\|\vu\|_2^2=1$ for all $\vu \in \vU$, the following holds:
\begin{align*}
    \left\|\sum_{\vu \in \vU} \vu \right\|_2^2
    &= 
    \sum_{\vu \in \vU}\sum_{\vv \in \vU} \vu^\top \vv
    \\
    &=
    \sum_{i\in[m],j \in[n]} \sum_{\vu \in \vU_{i,j}}\sum_{\vv \in \vU_{i,j}}\vu^\top \vv
    + \sum_{i\in[m],j \ne j'\in[n]} \sum_{\vu \in \vU_{i,j}}\sum_{\vv \in \vU_{i,j'}}\vu^\top \vv
    + \sum_{i\in[m]}\sum_{\vu \in \vU_{i}}\sum_{\vv \in \vU \setminus \vU_i}\vu^\top \vv,
\end{align*}
which implies
\begin{align}
    \sum_{i\in[m]}\sum_{\vu \in \vU_{i}}\sum_{\vv \in \vU \setminus \vU_i}\vu^\top \vv
    &= \nonumber
    \left\|\sum_{\vu \in \vU} \vu \right\|_2^2
    - \sum_{i\in[m],j \in[n]} \sum_{\vu \in \vU_{i,j}}\sum_{\vv \in \vU_{i,j}}\vu^\top \vv
    - \sum_{i\in[m],j \ne j'\in[n]} \sum_{\vu \in \vU_{i,j}}\sum_{\vv \in \vU_{i,j'}}\vu^\top \vv
    \\ \label{eq:proof:lemma:inequality:1}
    & \geq
    0
    - \sum_{i\in[m],j \in[n]} \sum_{\vu \in \vU_{i,j}}\sum_{\vv \in \vU_{i,j}} 1
    - \sum_{i\in[m],j \ne j'\in[n]} \sum_{\vu \in \vU_{i,j}}\sum_{\vv \in \vU_{i,j'}}\vu^\top \vv 
    \\ \label{eq:proof:lemma:argmin:given:c:result}
    & =
    - mn p^2
    - p^2 c .
\end{align}
Note that the equality conditions of \eqref{eq:proof:lemma:inequality:1} are
\begin{align}
    \label{eq:proof:lemma:argmin:given:c:condition:centroid}
    \sum_{\vu \in \vU} \vu &= 0,
    \\
    \label{eq:proof:lemma:argmin:given:c:condition:instance}
    \vu^\top \vv &= 1
    &\forall 
    i\in[m],j \in[n], \vu \in \vU_{i,j}, \vv \in \vU_{i,j}
    ,
\end{align}
implying that the centroid of the embedding vectors is at the origin and that every embedding vector of each instance is the same, regardless of augmentations.

For any $\delta \in \Big[0,\sqrt\frac{mn-1}{m(n-1)}\Big]$, embedding vectors $\vU^\delta$ of \ours{} in Def.~\ref{def:model:general}  fulfill the equality conditions in \eqref{eq:proof:lemma:argmin:given:c:condition:centroid} and \eqref{eq:proof:lemma:argmin:given:c:condition:instance}, as follows:
\begin{align*}
    \left\|\sum_{\vu^\delta \in \vU^\delta} \vu^\delta \right\|_2^2
    &= 
    \sum_{i\in[m],j \in[n]} \sum_{\vu^\delta \in \vU_{i,j}^\delta}\sum_{\vv^\delta \in \vU_{i,j}^\delta}\left(\vu^\delta\right)^\top \vv^\delta
    + \sum_{i\in[m],j \ne j'\in[n]} \sum_{\vu^\delta \in \vU_{i,j}^\delta}\sum_{\vv^\delta \in \vU_{i,j'}^\delta}\left(\vu^\delta\right)^\top \vv^\delta
    \\&\quad
    + \sum_{i\in[m]}\sum_{\vu^\delta \in \vU_{i}^\delta}\sum_{\vv^\delta \in \vU^\delta \setminus \vU_i^\delta}\left(\vu^\delta\right)^\top \vv^\delta
    \\
    &=
    mnp^2
    + mn(n-1)p^2 \bigg(  1 - \delta^2 \frac{mn}{mn-1} \bigg)
    + m(m-1)n^2p^2 \bigg( - \frac{1}{m-1} + \delta^2 \frac{m(n-1)}{(m-1)(mn-1)} \bigg)
    \\
    &= 0,
\end{align*}
implying \eqref{eq:proof:lemma:argmin:given:c:condition:centroid} holds, and \eqref{eq:proof:lemma:argmin:given:c:condition:instance} holds from \eqref{eq:model:same:general} in Def~\ref{def:model:general}.

Therefore, we can conclude the existence of a unique $\delta\in\Big[0,\sqrt\frac{mn-1}{m(n-1)}\Big]$ such that $\vU^{\delta}$ of \ours{} represents the optimal embedding from \eqref{eq:proof:lemma:argmin:given:c:result}, specified as:
\begin{align*}
    - mnp^2 - p^2
    &=
    \sum_{i\in[m]}\sum_{\vu^\delta \in \vU_{i}^\delta}\sum_{\vv^\delta \in \vU^\delta \setminus \vU_i^\delta}\left(\vu^\delta\right)^\top \vv^\delta
    \\ &=
    m(m-1)n^2p^2
    \bigg( - \frac{1}{m-1} + \delta^2 \frac{m(n-1)}{(m-1)(mn-1)}\bigg)
    \\ &=
    -mn^2p^2
    + \delta^2 \frac{m^2n^2(n-1)p^2}{mn-1},
\end{align*}
which is equal to
\begin{align}
    \delta
    &= \nonumber
    \sqrt{ \frac{mn-1}{m^2n^2(n-1)p^2} \Big(mn(n-1)p^2 - p^2\Big) }
    \\ &= \label{eq:ssem:lemma:delta:function}
    \sqrt{ \frac{mn-1}{mn} - \frac{mn-1}{m^2n^2(n-1)} c }
    .
\end{align}
The uniqueness of $\delta$ comes from the fact that \eqref{eq:ssem:lemma:delta:function} is a strictly decreasing function of $c\in[-mn,mn(n-1)]$.
\end{proof}

\begin{lemma}
    \label{thm:lemma:2}
    Let $\vU := \{\vu_{i,j,k}\}_{i\in[m], j \in [n], k \in [p]}$ be a set of $mnp$ vectors in $\sR^d$, satisfying $\|\vu\|_2^2=1$ for all $\vu \in \vU$. Additionally, define the sets $\vU_{i,j}:=\{\vu_{i,j,k}\}_{k\in[p]}$ and $\vU_{i}:=\cup_{j\in[n]}\vU_{i,j}$ for all $i\in[m]$ and $j\in[n]$. 
    Then, for every constant $c \in[0,2mn^2]$, 
    there exists a unique $\delta^\star (c) \in \Big[0,\sqrt{\frac{mn-1}{mn(n-1)}}\Big]$ such that 
    \begin{equation}
    \label{eq:lemma:argmax:2}
     \vU^{\delta^\star(c)}
     \in
     \argmax_{\vU}
     \Bigg\{
     \sum_{i\in[m],j \ne j'\in[n]} \E [\vU_{i,j}]^\top\E [\vU_{i,j'}]
     \Bigg|
     \sum_{i\in[m],j \ne j'\in[n]} \big\|\E[\vU_{i,j}] - \E[\vU_{i,j'}] \big\|_2^2  =c 
     \Bigg\}.
    \end{equation}
    Specifically, $\delta^\star(c)=\sqrt{\frac{mn-1}{2m^2n^2(n-1)} \cdot c}$.
\end{lemma}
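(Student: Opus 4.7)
The plan is to reduce the constrained maximization to the universal bound $\sum_{i,j}\|\E[\vU_{i,j}]\|_2^2 \leq mn$ and then verify that \ours{} realizes this bound for every $c \in [0, 2mn^2]$ at the claimed $\delta^\star(c)$. Structurally this mirrors the proof of Lemma~\ref{thm:lemma:1}, but the identity I expand is a squared difference rather than a squared sum.

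First, I would apply the polarization identity $\|\E[\vU_{i,j}] - \E[\vU_{i,j'}]\|_2^2 = \|\E[\vU_{i,j}]\|_2^2 + \|\E[\vU_{i,j'}]\|_2^2 - 2\,\E[\vU_{i,j}]^\top \E[\vU_{i,j'}]$ and sum over $i \in [m]$, $j \neq j' \in [n]$. Using the constraint, this gives
$$c = 2(n-1)\sum_{i,j}\|\E[\vU_{i,j}]\|_2^2 - 2 \sum_{i,\,j\neq j'} \E[\vU_{i,j}]^\top \E[\vU_{i,j'}],$$
so the objective rewrites as $(n-1)\sum_{i,j}\|\E[\vU_{i,j}]\|_2^2 - c/2$. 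Thus, at fixed $c$, maximizing the objective is equivalent to maximizing $\sum_{i,j}\|\E[\vU_{i,j}]\|_2^2$. Since every $\vu \in \vU_{i,j}$ is a unit vector, the triangle inequality yields $\|\E[\vU_{i,j}]\|_2 \leq 1$, with equality iff all $p$ vectors in $\vU_{i,j}$ coincide. Summing gives $\sum_{i,j}\|\E[\vU_{i,j}]\|_2^2 \leq mn$, so the objective is at most $(n-1)mn - c/2$.

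Next, I would verify that $\vU^{\delta}$ from Def.~\ref{def:model:general} both saturates this bound and realizes every admissible constraint level. Condition \eqref{eq:model:same:general} forces $\|\E[\vU_{i,j}^\delta]\|_2 = 1$, so the universal bound is attained. Using \eqref{eq:model:same:class:general},
$$\|\E[\vU_{i,j}^\delta] - \E[\vU_{i,j'}^\delta]\|_2^2 = 2 - 2\!\left(1 - \delta^2 \tfrac{mn}{mn-1}\right) = \tfrac{2 m n\, \delta^2}{mn-1},$$
and summing over the $mn(n-1)$ ordered pairs $(i, j\neq j')$ gives
$$c(\delta) = \tfrac{2 m^2 n^2 (n-1)}{mn-1}\,\delta^2,$$
which is strictly increasing in $\delta \geq 0$. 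Inverting yields $\delta^\star(c) = \sqrt{(mn-1)\,c / (2m^2 n^2 (n-1))}$, and the strict monotonicity delivers uniqueness on the stated interval for $\delta$.

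The main obstacle is the feasibility question: the bound $\sum_{i,j}\|\E[\vU_{i,j}]\|_2^2 = mn$ must be attained simultaneously with the prescribed $c$, for every $c \in [0, 2mn^2]$. This is exactly what \ours{} delivers in one stroke, because the computed $c(\delta)$ sweeps across the full interval $[0, 2mn^2]$ as $\delta$ varies over its admissible range (with existence guaranteed by Proposition~\ref{thm:exist:model} for $d \geq mn-1$). Because the upper bound on the objective depends only on $c$, no further Lagrangian or KKT argument is required: any configuration that saturates $\sum \|\E[\vU_{i,j}]\|_2^2 = mn$ while meeting the constraint is automatically an argmax, and the \ours{} family provides exactly such a configuration with a unique matching $\delta^\star(c)$.
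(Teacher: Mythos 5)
Your proposal is correct and follows essentially the same route as the paper's proof: the same polarization identity reducing the constrained objective to $(n-1)\sum_{i,j}\|\E[\vU_{i,j}]\|_2^2 - c/2$, the same bound $\|\E[\vU_{i,j}]\|_2\le 1$ with equality when augmented views coincide, and the same verification that \ours{} saturates the bound while sweeping $c$ over $[0,2mn^2]$, yielding the identical $\delta^\star(c)$ and uniqueness via strict monotonicity. The only cosmetic difference is that you isolate $\sum_{i,j}\|\E[\vU_{i,j}]\|_2^2$ as the quantity to maximize, whereas the paper writes the equivalent inequality directly on $c$; the content is the same.
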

\begin{proof}
First of all, the possible values of $c$ are bounded within the interval $[0,2mn^2]$, because $\|\vu\|_2^2=1$ for all $\vu \in \vU$. Especially, the possible maximum value of $c$, which is $2mn^2$, can be attained as below.
\begin{align}
    c &= \nonumber
    \sum_{i\in[m],j \ne j'\in[n]} \big\|\E[\vU_{i,j}] - \E[\vU_{i,j'}] \big\|_2^2 
     \\ &= \nonumber
    \sum_{i\in[m],j \ne j'\in[n]} \bigg(
        \big\|\E[\vU_{i,j}]\big\|_2^2 
        + \big\|\E[\vU_{i,j'}]\big\|_2^2 
        -2\E[\vU_{i,j}]^\top\E[\vU_{i,j'}]
    \bigg) 
    \\&= \label{eq:lemma:argmax:2:for:proof:1}
     2(n-1) \sum_{i\in[m],j\in[n]} \big\|\E[\vU_{i,j}] \big\|_2^2 
     -2 \sum_{i\in[m],j\ne j'\in[n]} \E[\vU_{i,j}]^\top \E[\vU_{i,j'}]
    \\&= \nonumber
     2(n-1) \sum_{i\in[m],j\in[n]} \big\|\E[\vU_{i,j}] \big\|_2^2 
     -2 \Bigg( \sum_{i\in[m]} \bigg\|
        \sum_{j\in[n]} \E[\vU_{i,j}]
       \bigg\|_2^2
        - \sum_{i\in[m],j\in[n]} \big\|\E[\vU_{i,j}] \big\|_2^2 
        \Bigg)
    \\&= \nonumber
     2n\sum_{i\in[m],j\in[n]} \big\|\E[\vU_{i,j}] \big\|_2^2 
     -2 \sum_{i\in[m]} \bigg\|
        \sum_{j\in[n]} \E[\vU_{i,j}]
       \bigg\|_2^2
    \\ &\leq \nonumber
     2n\sum_{i\in[m],j\in[n]} 1
     -0
    \\ &= \nonumber
    2mn^2,
\end{align}
where $\big\|\E[\vU_{i,j}] \big\|_2^2 = \bigg\|\frac{1}{p}\sum_{\vu \in\vU_{i,j}} \vu  \bigg\|_2^2  \leq \frac{1}{p}\sum_{\vu \in\vU_{i,j}} \big\| \vu  \big\|_2^2 = 1$ from Jensen's inequality.

Now, consider the main maximization problem as given in \eqref{eq:lemma:argmax:2}. From \eqref{eq:lemma:argmax:2:for:proof:1},
\begin{align}
    c &= \nonumber
     2(n-1) \sum_{i\in[m],j\in[n]} \big\|\E[\vU_{i,j}] \big\|_2^2 
     -2 \sum_{i\in[m],j\ne j'\in[n]} \E[\vU_{i,j}]^\top \E[\vU_{i,j'}]
    \\& \leq \label{eq:proof:lemma:condition:2}
     2(n-1) \sum_{i\in[m],j\in[n]} 1
     -2 \sum_{i\in[m],j\ne j'\in[n]} \E[\vU_{i,j}]^\top \E[\vU_{i,j'}]
    \\& = \nonumber
     2mn(n-1) 
     -2 \sum_{i\in[m],j\ne j'\in[n]} \E[\vU_{i,j}]^\top \E[\vU_{i,j'}]
\end{align}
which implies
\begin{align}
     \sum_{i\in[m],j\ne j'\in[n]} \E[\vU_{i,j}]^\top \E[\vU_{i,j'}]
     &\leq \label{eq:proof:lemma:result:2}
     mn(n-1)
     - \frac{c}{2}.
\end{align}

The equality condition of \eqref{eq:proof:lemma:condition:2} is
\begin{equation*}
    \label{eq:proof:lemma:condition:3}
    \big\|\E[\vU_{i,j}]\big\|_2^2 = 1
    \qquad \forall 
    i\in[m],j \in[n]
    ,    
\end{equation*}
implying that every embedding vector of each instance is the same, regardless of augmentation.
For any $\delta \in \Big[0,\sqrt\frac{mn-1}{m(n-1)}\Big]$, the set of embedding vectors $\vU^\delta$ of \ours{} fulfills the equality condition in \eqref{eq:proof:lemma:condition:2} from \eqref{eq:model:same:general} in Def~\ref{def:model:general} as follows:
\begin{align*}
    \bigg\|\E\big[\vU_{i,j}^\delta\big]\bigg\|_2^2
    &=
    \Bigg\|\frac{1}{p}\sum_{\vu^\delta\in\vU_{i,j}^\delta} \vu^\delta \Bigg\|_2^2
    =
    \frac{1}{p^2}\sum_{\vu^\delta\in\vU_{i,j}^\delta} \sum_{\vv^\delta\in\vU_{i,j}^\delta}\left(\vu^\delta\right)^\top \vv^\delta
    = 1
    &\forall  i\in[m],j \in[n].
\end{align*}

Therefore, we can conclude the existence of a unique $\delta\in\Big[0,\sqrt\frac{mn-1}{m(n-1)}\Big]$ such that $\vU^{\delta}$ of \ours{} represents the optimal embedding from \eqref{eq:proof:lemma:result:2}, specified as:
\begin{align*}
     mn(n-1)p^2 - \frac{cp^2}{2}
     &= 
     p^2 \sum_{i\in[m],j\ne j'\in[n]} \E[\vU_{i,j}]^\top \E[\vU_{i,j'}]
    \\ &=
    \sum_{i\in[m],j \ne j'\in[n]} \sum_{\vu^\delta \in \vU_{i,j}^\delta}\sum_{\vv^\delta \in \vU_{i,j'}^\delta}\left(\vu^\delta\right)^\top \vv^\delta
    \\ &=
    mn(n-1)p^2
    \bigg( 1 - \delta^2 \frac{mn}{mn-1}\bigg)
    \\ &=
    mn(n-1)p^2
    - \delta^2 \frac{m^2n^2(n-1)p^2}{mn-1},
\end{align*}
which is equal to
\begin{align}
    \delta
    &= \label{eq:ssem:lemma:delta:function:2}
    \sqrt{\frac{mn-1}{2m^2n^2(n-1)} \cdot c}
    \quad .
\end{align}
The uniqueness of $\delta$ comes from the fact that \eqref{eq:ssem:lemma:delta:function:2} is a strictly increasing function of $c\in[0,2mn^2]$.
\end{proof}

\begin{theorem}[Optimality of \ours{}]
    \label{thm:app:optimal:ssem}
    Suppose $mn\geq 2$ and $d\geq mn-1$ hold.
    Then, all embedding sets $\vU^{\star}$ that minimize the loss $\loss(\vU)$ in (1) are included in the \ours{} in Def.~\ref{def:model}, \ie
    \begin{align}
        \nonumber
        \forall \vU^{\star} \in \argmin_{\vU} \loss(\vU), \quad \exists! \delta \in [0,1] \text{ such that } \vU^{\delta} = \vU^{\star}.
    \end{align}
    Specifically,
    \begin{equation}
        \label{eq:proof:find:optimal:delta}
        \delta^\star = 
        \begin{cases}
            0,
            &\text{if } h\big(0; m,n,\tau,\alpha\big)\geq0,
            \\
            \delta\in \left( 0, 1\right]
            \text{ such that }h\left(\delta^2\frac{mn}{mn-1}; m,n,\tau,\alpha\right)=0,
            &\text{otherwise},
        \end{cases}
    \end{equation}
    where 
    \begin{align*}
    h\big(x; m,n,\tau,\alpha\big)
    &=
    (1-\alpha)
    - \alpha(n-1)\cdot\exp( - x /\tau)
    \\
    &\quad
    + (mn-1-\alpha (m-1)n) \cdot\exp\left(\left(- \frac{m}{m-1} + x \frac{n-1}{(m-1)n}\right) / \tau\right).    
    \end{align*}
\end{theorem}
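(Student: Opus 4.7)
The plan is to combine the multivariate Jensen inequality of Proposition~\ref{thm:multivariate:jensen} with Lemmas~\ref{thm:lemma:1} and~\ref{thm:lemma:2} to lower bound $\loss(\vU)$ by a one-parameter family indexed by $\delta$, and then minimize over this single parameter.

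First, I would rewrite each log-softmax term as $-\vu^\top\vv/\tau + \log\sum_{\vw\in\vU}\exp(\vu^\top\vw/\tau)$ and apply Proposition~\ref{thm:multivariate:jensen} to the denominators while averaging the anchor $\vu$ over same-instance augmentations. The equality condition forces all $p$ augmentations of each instance to coincide, effectively reducing to the $p=1$ case, and allowing me to work with the instance means $\E[\vU_{i,j}]$. After this reduction, $\loss$ depends only on the aggregates
\begin{equation*}
c_1 \;:=\; \sum_{i\in[m],\,j\ne j'\in[n]} \E[\vU_{i,j}]^\top\E[\vU_{i,j'}],
\qquad
c_2 \;:=\; \sum_{i\in[m]}\sum_{\vu\in\vU_i}\sum_{\vv\in\vU\setminus\vU_i}\vu^\top\vv,
\end{equation*}
together with the now-fixed within-instance inner products.

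Second, I would invoke Lemma~\ref{thm:lemma:1} to conclude that for any fixed $c_1$, the cross-class sum $c_2$ is minimized by a unique SSEM configuration $\vU^{\delta^\star(c_1)}$, and Lemma~\ref{thm:lemma:2} to conclude that the same SSEM family also realizes the extremum of $c_1$ under the natural dispersion constraint on $\sum_{i,j\ne j'}\|\E[\vU_{i,j}]-\E[\vU_{i,j'}]\|_2^2$. Since the log-sum-exp denominator is monotone in each cross-class inner product while the numerators involve only same-instance or same-class inner products already pinned by the constraints, minimizing $c_2$ simultaneously decreases the loss. Consequently, every minimizer $\vU^\star$ coincides with some $\vU^\delta$ in SSEM for $\delta \in \bigl[0, \sqrt{(mn-1)/(m(n-1))}\bigr]$.

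Third, substituting the SSEM inner products of Def.~\ref{def:model:general} into $\loss$ and letting $x := \delta^2\,mn/(mn-1)$, a direct differentiation yields
\begin{equation*}
\frac{d}{dx}\,\loss\bigl(\vU^\delta\bigr)
\;=\;
\frac{1}{\tau\,D(\delta)}\cdot h\bigl(x;\,m,n,\tau,\alpha\bigr),
\end{equation*}
where $D(\delta)>0$ is the log-sum-exp denominator evaluated at the SSEM configuration. The first-order condition is therefore $h(x;m,n,\tau,\alpha)=0$, matching~\eqref{eq:proof:find:optimal:delta}. Monotonicity of $h$ in $x$ gives uniqueness of the root, and checking the sign of $h$ at $x=mn/(mn-1)$ (\ie $\delta=1$) shows that the root, when it exists, lies in $(0,1]$; otherwise $h(0)\ge 0$ and the minimum occurs at $\delta^\star=0$.

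The main obstacle is the second step: reconciling the Jensen equality conditions with the equality conditions of Lemmas~\ref{thm:lemma:1} and~\ref{thm:lemma:2} so that both are simultaneously attained by, and only by, an SSEM configuration. This requires a careful argument that after Jensen the log-sum-exp denominator behaves as a monotone function of the cross-class aggregate $c_2$, so that the subproblem to which Lemma~\ref{thm:lemma:1} applies really is the correct one. Once this reconciliation is in place, steps one and three reduce to standard convexity and monotonicity computations.
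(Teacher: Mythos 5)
Your proposal follows essentially the same route as the paper's proof: Jensen's inequality (via Proposition~\ref{thm:multivariate:jensen}) applied to the log-sum-exp denominators to reduce to instance means, Lemmas~\ref{thm:lemma:1} and~\ref{thm:lemma:2} to pin the cross-class and within-class aggregates so that only SSEM configurations attain all equality conditions, and then a one-dimensional optimization in $\tilde\delta=\delta^2\frac{mn}{mn-1}$ whose derivative produces $h$ up to the positive factor $\frac{1}{\tau D(\delta)}$, with monotonicity of $h$ and the sign check at $\tilde\delta=\frac{mn}{mn-1}$ giving uniqueness and the location of the root in $(0,1]$. The step you flag as the main obstacle is exactly where the paper invests its effort—verifying that the Jensen equality conditions and the equality conditions of the two lemmas are simultaneously satisfied by SSEM and that SSEM sweeps all feasible values of the dispersion term—so your plan is a faithful outline of the published argument.
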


\begin{proof}
We want to find the optimal embeddings that minimize a SupCL loss $\loss(\vU)$ as defined in \eqref{eq:loss}, which is a convex combination of $\loss_{\op{Sup}}(\vU)$ in \eqref{eq:loss:sup:assume} and $\loss_{\op{Self}}(\vU)$ in \eqref{eq:loss:cl:assume}. These can be rewritten as follows:
\begin{align}
\loss_{\op{Sup}}(\vU) 
& =  \nonumber
-\frac{1}{mn(n-1)p^2}\sum_{i\in [m], j \ne j'\in[n]} \sum_{\vu \in \vU_{i,j}}\sum_{\vv \in \vU_{i,j'}}
\log \frac{\exp(\vu^\top \vv / \tau)}{\sum_{\vw \in \vU}\exp(\vu^\top \vw / \tau)}
\\
& =  \nonumber
\frac{1}{mn(n-1)p^2}\sum_{i\in[m], j\ne j'\in[n]}  \sum_{\vu \in \vU_{i,j}}\sum_{\vv \in \vU_{i,j'}}
\log\left(\sum_{\vw \in \vU}\exp(\vu^\top (\vw-\vv) / \tau)\right)
\\
& =  \label{eq:proof:main:loss:assume:sup}
\frac{1}{mn(n-1)p^2}\sum_{i\in[m], j\in[n]}  \sum_{\vu \in \vU_{i,j}}\sum_{\vv \in \vU_{i} \setminus \vU_{i,j}}
\log\left(\sum_{\vw \in \vU}\exp(\vu^\top (\vw-\vv) / \tau)\right),
\\ 
\loss_{\op{Self}}(\vU)
&=  \nonumber
-\frac{1}{mnp^2}\sum_{i\in [m], j\in[n]} \sum_{\vu \in \vU_{i,j}} \sum_{\vv \in \vU_{i,j}} \log \frac{\exp(\vu^\top \vv / \tau)}{\sum_{\vw\in\vU}\exp(\vu^\top \vw / \tau)}
\\ &=  \label{eq:proof:main:loss:assume:cl}
\frac{1}{mnp^2}\sum_{i\in [m], j\in[n]} \sum_{\vu \in \vU_{i,j}}\sum_{\vv \in \vU_{i,j}}
\log\left(\sum_{\vw \in \vU}\exp(\vu^\top (\vw-\vv) / \tau)\right).
\end{align}

We first consider minimizing $\loss_{\op{Sup}}(\vU)$ in \eqref{eq:proof:main:loss:assume:sup}, and then minimizing $\loss_{\op{Self}}(\vU)$ in \eqref{eq:proof:main:loss:assume:cl} in a similar manner.

Note that the set of all embeddings $\vU$ can be partitioned to three disjoint sets as
\begin{equation*}
    \vU = (\vU \setminus \vU_{i}) \; \dot{\cup}\; (\vU_i \setminus \vU_{i,j}) \; \dot{\cup}\; \vU_{i,j}
    \qquad
    \forall i \in [m], j \in[n],
\end{equation*}
where $\dot{\cup}$ denotes the disjoint union. Then, the term inside the logarithm in \eqref{eq:proof:main:loss:assume:sup} can be decomposed to three terms. Specifically, for all $i \in [m], j\ne j' \in [n]$, $\vu \in \vU_{i,j}$, and $\vv \in \vU_{i,j'}$, 

\begin{align}
\sum_{\vw \in \vU} \exp(\vu^\top (\vw-\vv) / \tau)
& =  \nonumber
    \sum_{\vw \in \vU \setminus \vU_{i}}
    \exp(\vu^\top (\vw-\vv) / \tau)
    + \sum_{\vw \in \vU_i \setminus \vU_{i,j}}\exp(\vu^\top (\vw-\vv) / \tau)
    + \sum_{\vw\in\vU_{i,j}}\exp(\vu^\top (\vw-\vv) / \tau)
    \\
    &\geq \nonumber
    (m-1)np
    \cdot
    \exp\left(\frac{1}{(m-1)np}\sum_{\vw \in \vU \setminus \vU_i}\vu^\top \vw / \tau - \vu^\top \vv / \tau\right)
    \\& \nonumber
    \qquad + (n-1)p\cdot\exp\left(\frac{1}{(n-1)p}\sum_{\vw \in \vU_i \setminus \vU_{i,j}}\vu^\top \vw / \tau - \vu^\top\vv / \tau\right)
    \\& \label{eq:proof:main:ineq:1}
    \qquad + p\cdot\exp\left(\frac{1}{p}\sum_{\vw \in \vU_{i,j}}\vu^\top \vw/\tau - \vu^\top \vv / \tau\right)
    ,
\end{align}
where the inequality in \eqref{eq:proof:main:ineq:1} comes from using Jensen's inequality three times. The equality in \eqref{eq:proof:main:ineq:1} is achieved if there exist some constants $c_1, c_2, c_3 \in \sR$ such that the following conditions hold for all $i \in [m], j\ne j' \in [n]$, $\vu \in \vU_{i,j}$, and $\vv \in \vU_{i,j'}$:
\begin{align}
    \vu^\top (\vw-\vv)  &= c_{1}
    \qquad\qquad \forall \vw \in \vU \setminus \vU_{i},
    \label{eq:proof:main:condition:1}
    \\
    \vu^\top (\vw-\vv) &= c_{2}
    \qquad\qquad \forall \vw \in \vU_i\setminus \vU_{i,j},
    \label{eq:proof:main:condition:2}
    \\
    \vu^\top (\vw-\vv) &= c_{3}
    \qquad\qquad \forall \vw \in \vU_{i,j}.
    \label{eq:proof:main:condition:3}
\end{align}

By using the above result in \eqref{eq:proof:main:ineq:1} to $\loss_{\op{Sup}}(\vU)$ in \eqref{eq:proof:main:loss:assume:sup}, 
\begin{align}
\loss_{\op{Sup}}(\vU)
& \geq \nonumber 
\frac{1}{mn(n-1)p^2}\sum_{i\in[m], j\in[n]}  \sum_{\vu \in \vU_{i,j}}\sum_{\vv \in \vU_{i} \setminus \vU_{i,j}}
    \log 
    \Bigg( 
    (m-1)np\cdot
    \exp\bigg(\frac{1}{(m-1)np}\sum_{\vw \in \vU \setminus \vU_i}\vu^\top \vw / \tau - \vu^\top \vv / \tau\bigg)
    \\&\qquad\qquad\qquad\qquad\qquad\qquad\qquad\qquad\qquad\qquad\;  \nonumber 
    + (n-1)p\cdot\exp\bigg(\frac{1}{(n-1)p}\sum_{\vw \in \vU_i \setminus \vU_{i,j}}\vu^\top \vw / \tau - \vu^\top\vv / \tau\bigg)
    \\&\qquad\qquad\qquad\qquad\qquad\qquad\qquad\qquad\qquad\qquad\;  \nonumber 
    + p\cdot\exp\bigg(\frac{1}{p}\sum_{\vw \in \vU_{i,j}}\vu^\top \vw/\tau - \vu^\top \vv / \tau\bigg)
    \Bigg)
\\& \geq\nonumber 
    \frac{1}{m}\sum_{i\in[m]}
    \log 
    \Bigg(
    (m-1)np\cdot
    \exp\bigg(
         \frac{1}{n(n-1)p^2}\sum_{j\in[n]}  \sum_{\vu \in \vU_{i,j}}\sum_{\vv \in \vU_{i} \setminus \vU_{i,j}}
         \frac{1}{(m-1)np}\sum_{\vw \in \vU \setminus \vU_i}\vu^\top \vw / \tau
         \\&\qquad\qquad\qquad\qquad\qquad\qquad\qquad\qquad \nonumber
        -\frac{1}{n(n-1)p^2}\sum_{j\in[n]}  \sum_{\vu \in \vU_{i,j}}\sum_{\vv \in \vU_{i} \setminus \vU_{i,j}}
        \vu^\top \vv / \tau
    \bigg)
    \\&\qquad\qquad\qquad\qquad \nonumber 
    + (n-1)p\cdot\exp\bigg(
        \frac{1}{n(n-1)p^2}\sum_{j\in[n]}  \sum_{\vu \in \vU_{i,j}}\sum_{\vv \in \vU_{i} \setminus \vU_{i,j}}\frac{1}{(n-1)p}\sum_{\vw \in \vU_i \setminus \vU_{i,j}}\vu^\top \vw / \tau 
        \\&\qquad\qquad\qquad\qquad\qquad\qquad\qquad\qquad \nonumber
        - \frac{1}{n(n-1)p^2}\sum_{j\in[n]}  \sum_{\vu \in \vU_{i,j}}\sum_{\vv \in \vU_{i} \setminus \vU_{i,j}}\vu^\top\vv / \tau
    \bigg)
    \\&\qquad\qquad\qquad\qquad \nonumber
    + p\cdot\exp\bigg(
        \frac{1}{n(n-1)p^2}\sum_{j\in[n]}  \sum_{\vu \in \vU_{i,j}}\sum_{\vv \in \vU_{i} \setminus \vU_{i,j}}\frac{1}{p}\sum_{\vw \in \vU_{i,j}}\vu^\top \vw/\tau
        \\&\qquad\qquad\qquad\qquad\qquad\qquad\quad \label{eq:proof:main:ineq:2}
        - \frac{1}{n(n-1)p^2}\sum_{j\in[n]}  \sum_{\vu \in \vU_{i,j}}\sum_{\vv \in \vU_{i} \setminus \vU_{i,j}}\vu^\top \vv / \tau
    \bigg)
\Bigg)
\\
& =\nonumber 
    \frac{1}{m}\sum_{i\in[m]}
    \log 
    \Bigg(
    (m-1)np
    \cdot\exp\bigg(
         \frac{1}{(m-1)n^2p^2} \sum_{\vu \in \vU_{i}}\sum_{\vw \in \vU \setminus \vU_i}\vu^\top \vw / \tau
        \\&\qquad\qquad\qquad\qquad\qquad\qquad\qquad\quad \nonumber
        - \frac{1}{n(n-1)p^2}\sum_{j\in[n]}  \sum_{\vu \in \vU_{i,j}}\sum_{\vv \in \vU_{i} \setminus \vU_{i,j}}\vu^\top \vv / \tau
    \bigg)
    \\&\qquad\qquad\qquad\qquad \nonumber 
    + (n-1)p\cdot\exp\bigg(
        \frac{1}{n(n-1)p^2}\sum_{j \in[n]} \sum_{\vu \in \vU_{i,j}}\sum_{\vw \in \vU_i \setminus \vU_{i,j}}\vu^\top \vw / \tau 
        \\&\qquad\qquad\qquad\qquad\qquad\qquad\qquad\quad\quad \nonumber
        - \frac{1}{n(n-1)p^2}\sum_{j\in[n]}  \sum_{\vu \in \vU_{i,j}}\sum_{\vv \in \vU_{i} \setminus \vU_{i,j}}\vu^\top\vv / \tau
    \bigg)
    \\&\qquad\qquad\qquad\qquad \label{eq:proof:main:result:new:1}
    + p\cdot\exp\bigg(
        \frac{1}{np^2}\sum_{j \in[n]} \sum_{\vu \in \vU_{i,j}}\sum_{\vw \in \vU_{i,j}}\vu^\top \vw/\tau
        - \frac{1}{n(n-1)p^2}\sum_{j\in[n]}  \sum_{\vu \in \vU_{i,j}}\sum_{\vv \in \vU_{i} \setminus \vU_{i,j}}\vu^\top \vv / \tau
    \bigg)
\Bigg)
\end{align}
where the inequality in \eqref{eq:proof:main:ineq:2} holds from Proposition~\ref{thm:multivariate:jensen}. Note that the value inside the second exponential in \eqref{eq:proof:main:result:new:1} is zero.

Moreover, applying Proposition~\ref{thm:multivariate:jensen} one more to \eqref{eq:proof:main:result:new:1} results in
\begin{align}
    \loss_{\op{Sup}}(\vU)
    & \geq\nonumber 
    \log 
    \Bigg(
    (m-1)np\cdot
    \exp\bigg(
         \frac{1}{m(m-1)n^2p^2}\sum_{i\in[m]}\sum_{\vu \in \vU_{i}}\sum_{\vw \in \vU \setminus \vU_i}\vu^\top \vw / \tau
        \\&\qquad\qquad\qquad\qquad\qquad\quad \nonumber
        - \frac{1}{mn(n-1)p^2}\sumi\sum_{j\in[n]}  \sum_{\vu \in \vU_{i,j}}\sum_{\vv \in \vU_{i} \setminus \vU_{i,j}}\vu^\top \vv / \tau
    \bigg)
    \\&\qquad\qquad \nonumber 
    + (n-1)p
    \\&\qquad\qquad \nonumber 
    + p\cdot\exp\bigg(
        \frac{1}{mnp^2}\sum_{i\in[m],j \in[n]} \sum_{\vu \in \vU_{i,j}}\sum_{\vw \in \vU_{i,j}}\vu^\top \vw/\tau
    \\&\qquad\qquad\qquad\qquad\quad \label{eq:proof:main:ineq:3}
        - \frac{1}{mn(n-1)p^2}\sum_{i\in[m],j\in[n]}  \sum_{\vu \in \vU_{i,j}}\sum_{\vv \in \vU_{i} \setminus \vU_{i,j}}\vu^\top \vv / \tau
    \bigg)
    \Bigg).
\end{align}
The equality conditions of \eqref{eq:proof:main:ineq:2} and \eqref{eq:proof:main:ineq:3}, which use Proposition~\ref{thm:multivariate:jensen}, are also achieved if the conditions in \eqref{eq:proof:main:condition:1}, \eqref{eq:proof:main:condition:2}, and \eqref{eq:proof:main:condition:3} are satisfied.

Note that the value inside the first exponential in \eqref{eq:proof:main:ineq:3} follows the below inequality:
\begin{align}
    &\nonumber
    \frac{1}{m(m-1)n^2p^2}\sum_{i\in[m]}\sum_{\vu \in \vU_{i}}\sum_{\vw \in \vU \setminus \vU_i}\vu^\top \vw / \tau
    - \frac{1}{mn(n-1)p^2}\sumi\sum_{j\in[n]}  \sum_{\vu \in \vU_{i,j}}\sum_{\vv \in \vU_{i} \setminus \vU_{i,j}}\vu^\top \vv / \tau
    \\ 
    &\qquad \geq \nonumber
    -\frac{mnp^2}{m(m-1)n^2p^2} /\tau
    - \frac{1}{m(m-1)n^2p^2}\sum_{i\in[m],j \ne j'\in[n]} \sum_{\vu \in \vU_{i,j}}\sum_{\vv \in \vU_{i,j'}}\vu^\top \vv / \tau
    \\&\qquad \qquad \label{eq:proof:main:ineq:4}
    - \frac{1}{mn(n-1)p^2}\sumi\sum_{j\in[n]}  \sum_{\vu \in \vU_{i,j}}\sum_{\vv \in \vU_{i} \setminus \vU_{i,j}}\vu^\top \vv / \tau
    \\&\qquad = \nonumber 
    -\frac{1}{(m-1)n}/\tau
    - \frac{mn-1}{m(m-1)n^2(n-1)p^2}\sumi\sum_{j\in[n]}  \sum_{\vu \in \vU_{i,j}}\sum_{\vv \in \vU_{i} \setminus \vU_{i,j}}\vu^\top \vv / \tau
    \\&\qquad = \nonumber 
    -\frac{1}{(m-1)n}/\tau
    - \frac{mn-1}{m(m-1)n^2(n-1)}\sum_{i\in[m],j \ne j'\in[n]} \E [\vU_{i,j}]^\top\E [\vU_{i,j'}]/ \tau
    \\&\qquad \geq \nonumber 
    -\frac{1}{(m-1)n}/\tau
    - \frac{mn-1}{m(m-1)n^2(n-1)}
    \cdot mn(n-1)/\tau
    \\&\qquad \qquad \label{eq:proof:main:ineq:5}
    + \frac{mn-1}{2m(m-1)n^2(n-1)}
    \sum_{i\in[m],j \ne j'\in[n]}
    \big\|\E[\vU_{i,j}] - \E[\vU_{i,j'}] \big\|_2^2 / \tau
    \\&\qquad = \nonumber 
    -\frac{m}{m-1}/\tau
    - \frac{mn-1}{2m(m-1)n^2(n-1)}
    \sum_{i\in[m],j \ne j'\in[n]}
    \big\|\E[\vU_{i,j}] - \E[\vU_{i,j'}] \big\|_2^2 / \tau,
\end{align}
where the inequality in \eqref{eq:proof:main:ineq:4} comes from \eqref{eq:proof:lemma:inequality:1} in Lemma~\ref{thm:lemma:1} with the equality condition of
\begin{equation}
    \label{eq:proof:main:condition:4}
    \left\|\sum_{\vu \in \vU} \vu \right\|_2^2 = 0,
\end{equation}
and the inequality in \eqref{eq:proof:main:ineq:5} comes from \eqref{eq:proof:lemma:result:2} in Lemma~\ref{thm:lemma:2} with the equality condition of
\begin{equation}
    \label{eq:proof:main:condition:5}
    \big\|\E[\vU_{i,j}]\big\|_2^2 = 1
    \qquad \forall 
    i\in[m],j \in[n]
    .
\end{equation}

Moreover, the value inside the second exponential in \eqref{eq:proof:main:ineq:3} follows

\begin{align}
    &\nonumber
    \frac{1}{mnp^2}\sum_{i\in[m],j \in[n]} \sum_{\vu \in \vU_{i,j}}\sum_{\vw \in \vU_{i,j}}\vu^\top \vw/\tau
    - \frac{1}{mn(n-1)p^2}\sum_{i\in[m],j\in[n]}  \sum_{\vu \in \vU_{i,j}}\sum_{\vv \in \vU_{i} \setminus \vU_{i,j}}\vu^\top \vv / \tau
    \\ 
    &\qquad = \nonumber
    \frac{1}{mn}\sum_{i\in[m],j\in[n]}\norm{\E [\vU_{i,j}]}_2^2 /\tau
    - \frac{1}{mn(n-1)}\sum_{i\in[m],j\ne j'\in[n]}  \E [\vU_{i,j}]^\top\E [\vU_{i,j'}] \tau
    \\ 
    &\qquad = \nonumber
    \frac{1}{2mn}\sum_{i\in[m],j\in[n]}\norm{\E [\vU_{i,j}]}_2^2 /\tau
    +\frac{1}{2mn}\sum_{i\in[m],j'\in[n]}\norm{\E [\vU_{i,j'}]}_2^2 /\tau
    - \frac{1}{2mn(n-1)}\sum_{i\in[m],j\ne j'\in[n]}  2 \E [\vU_{i,j}]^\top\E [\vU_{i,j'}] \tau
    \\&\qquad = \nonumber 
    \frac{1}{2mn(n-1)}\sum_{i\in[m],j \ne j'\in[n]} \big\|\E[\vU_{i,j}] - \E[\vU_{i,j'}] \big\|_2^2  / \tau.
\end{align}

Therefore, applying the above results to \eqref{eq:proof:main:ineq:3} yields
\begin{align}
    \loss_{\op{Sup}}(\vU)
    & \geq \nonumber 
    \log 
    \Bigg(
    -\frac{m}{m-1}/\tau
    - \frac{mn-1}{2m(m-1)n^2(n-1)}
    \sum_{i\in[m],j \ne j'\in[n]}
    \big\|\E[\vU_{i,j}] - \E[\vU_{i,j'}] \big\|_2^2 / \tau
    \bigg)
    \\&\qquad\qquad \label{eq:proof:main:ineq:sup} 
    + (n-1)p
    + p\cdot\exp\bigg(
        \frac{1}{2mn(n-1)}\sum_{i\in[m],j \ne j'\in[n]} \big\|\E[\vU_{i,j}] - \E[\vU_{i,j'}] \big\|_2^2  / \tau
    \bigg)
    \Bigg).
\end{align}

On the other hand, minimizing $\loss_{\op{Self}}(\vU)$ in \eqref{eq:proof:main:loss:assume:cl} in a manner similar to the approach described above yields the following result:

\begin{align}
    \loss_{\op{Self}}(\vU)
    & = \nonumber
    \frac{1}{mnp^2}\sum_{i\in [m], j\in[n]} \sum_{\vu \in \vU_{i,j}}\sum_{\vv \in \vU_{i,j}}
    \log\left(\sum_{\vw \in \vU}\exp(\vu^\top (\vw-\vv) / \tau)\right)
    \\ & \geq \nonumber 
    \frac{1}{mnp^2}
    \sum_{i\in[m], j\in[n]}\sum_{\vu \in \vU_{i,j}}\sum_{\vv \in \vU_{i,j}}
    \log
    \Bigg(
        (m-1)np\cdot
        \exp\bigg(\frac{1}{(m-1)np}\sum_{\vw \in \vU \setminus \vU_i}\vu^\top \vw / \tau - \vu^\top \vv / \tau\bigg)
        \\& \nonumber
        \qquad\qquad\qquad\qquad\qquad\qquad\qquad\qquad\quad
         + (n-1)p\cdot\exp\bigg(\frac{1}{(n-1)p}\sum_{\vw \in \vU_i \setminus \vU_{i,j}}\vu^\top \vw / \tau - \vu^\top\vv / \tau\bigg)
        \\& \label{eq:proof:main:ineq:6}
        \qquad\qquad\qquad\qquad\qquad\qquad\qquad\qquad\quad
         + p\cdot\exp\bigg(\frac{1}{p}\sum_{\vw \in \vU_{i,j}}\vu^\top \vw/\tau - \vu^\top \vv / \tau\bigg)
    \Bigg)
\end{align}
where the inequality in \eqref{eq:proof:main:ineq:6} comes from using Jensen's inequality three times. The equality in \eqref{eq:proof:main:ineq:6} is achieved if there exist some constants $c_4, c_5, c_6 \in \sR$ such that the following conditions hold for all $i \in [m], j\in [n]$, $\vu, \vv\in \vU_{i,j}$:
\begin{align}
    \vu^\top (\vw-\vv)  &= c_{4}
    \qquad\qquad \forall \vw \in \vU \setminus \vU_{i},
    \label{eq:proof:main:condition:6}
    \\
    \vu^\top (\vw-\vv) &= c_{5}
    \qquad\qquad \forall \vw \in \vU_i\setminus \vU_{i,j},
    \label{eq:proof:main:condition:7}
    \\
    \vu^\top (\vw-\vv) &= c_{6}
    \qquad\qquad \forall \vw \in \vU_{i,j}.
    \label{eq:proof:main:condition:8}
\end{align}

From the \eqref{eq:proof:main:ineq:6}, the following holds.
\begin{align}
    \loss_{\op{Self}}(\vU)
    & \geq \nonumber 
    \log
    \Bigg(
        (m-1)np\cdot
        \exp\bigg(
            \frac{1}{m(m-1)n^2p^2}\sum_{i\in[m], j\in[n]}\sum_{\vu \in \vU_{i,j}}\sum_{\vw \in \vU \setminus \vU_i}\vu^\top \vw / \tau 
        \\& \nonumber \qquad\qquad\qquad\qquad\qquad\qquad
            - \frac{1}{mnp^2}\sum_{i\in[m], j\in[n]}\sum_{\vu \in \vU_{i,j}}\sum_{\vv \in \vU_{i,j}}\vu^\top \vv / \tau\bigg)
        \\& \nonumber
        \qquad\qquad + (n-1)p\cdot
        \exp\bigg(
            \frac{1}{mn(n-1)p^2}\sum_{i\in[m], j\in[n]}\sum_{\vu \in \vU_{i,j}}\sum_{\vw \in \vU_i \setminus \vU_{i,j}}\vu^\top \vw / \tau
            \\& \label{eq:proof:main:ineq:7} \qquad\qquad\qquad\qquad\qquad\qquad
            - \frac{1}{mnp^2}\sum_{i\in[m], j\in[n]}\sum_{\vu \in \vU_{i,j}}\sum_{\vv \in \vU_{i,j}}\vu^\top\vv / \tau\bigg)
         + p
    \Bigg)
    \\
     & = \nonumber 
    \log
    \Bigg(
        (m-1)np\cdot
        \exp\bigg(
            \frac{1}{m(m-1)n^2p^2}\sum_{i\in[m], j\in[n]}\sum_{\vu \in \vU_{i,j}}\sum_{\vw \in \vU \setminus \vU_i}\vu^\top \vw / \tau 
        \\& \nonumber \qquad\qquad\qquad\qquad\qquad\qquad
            - \frac{1}{mnp^2}\sum_{i\in[m], j\in[n]}\sum_{\vu \in \vU_{i,j}}\sum_{\vv \in \vU_{i,j}}\vu^\top \vv / \tau\bigg)
        \\& \nonumber
        \qquad\quad + (n-1)p\cdot
        \exp\bigg(
            - \frac{1}{2mn(n-1)} \sum_{i\in[m],j \ne j'\in[n]} \big\|\E[\vU_{i,j}] - \E[\vU_{i,j'}] \big\|_2^2 / \tau\bigg)
         + p
    \Bigg)
    \\ & \geq \nonumber 
    \log
    \Bigg(
        (m-1)np \cdot
        \exp\bigg(
            \frac{1}{m(m-1)n^2p^2}\sum_{i\in[m], j\in[n]}\sum_{\vu \in \vU_{i,j}}\sum_{\vw \in \vU \setminus \vU_i}\vu^\top \vw / \tau 
            - 1 /\tau \bigg)
        \\& \label{eq:proof:main:ineq:8}
        \qquad\quad + (n-1)p \cdot
        \exp\bigg(
            - \frac{1}{2mn(n-1)} \sum_{i\in[m],j \ne j'\in[n]} \big\|\E[\vU_{i,j}] - \E[\vU_{i,j'}] \big\|_2^2 / \tau\bigg)
         + p
    \Bigg)
    \end{align}
    where the inequality in \eqref{eq:proof:main:ineq:7} holds from Proposition~\ref{thm:multivariate:jensen}, and the equality holds if \eqref{eq:proof:main:condition:7}, \eqref{eq:proof:main:condition:7}, and \eqref{eq:proof:main:condition:8} are satisfied. Moreover, the equality condition for \eqref{eq:proof:main:ineq:8} is equivalent to \eqref{eq:proof:main:condition:5}.

By using \eqref{eq:proof:lemma:inequality:1} in Lemma~\ref{thm:lemma:1} and \eqref{eq:proof:lemma:result:2} in Lemma~\ref{thm:lemma:2},
\begin{align}
\loss_{\op{Self}} (\vU) & \geq \nonumber 
    \log
    \Bigg(
        (m-1)np\cdot
        \exp\bigg(
            - \frac{(m-1)n+1}{(m-1)n} /\tau
            -\frac{1}{m(m-1)n^2p^2}\sum_{i\in[m],j \ne j'\in[n]} \sum_{\vu \in \vU_{i,j}}\sum_{\vw \in \vU_{i,j'}}\vu^\top \vw / \tau\bigg)
        \\& \label{eq:proof:main:ineq:9}
        \qquad\quad + (n-1)p\cdot
        \exp\bigg(
            - \frac{1}{2mn(n-1)} \sum_{i\in[m],j \ne j'\in[n]} \big\|\E[\vU_{i,j}] - \E[\vU_{i,j'}] \big\|_2^2 / \tau\bigg)
         + p
    \Bigg)
    \\ & \geq \nonumber 
    \log
    \Bigg(
        (m-1)np\cdot
        \exp\bigg(
            - \frac{m}{m-1} /\tau
            +\frac{1}{2m(m-1)n^2} 
            \sum_{i\in[m],j \ne j'\in[n]} \big\|\E[\vU_{i,j}] - \E[\vU_{i,j'}] \big\|_2^2 / \tau \bigg)
        \\& \label{eq:proof:main:ineq:10}
        \qquad\quad + (n-1)p\cdot
        \exp\bigg(
            - \frac{1}{2mn(n-1)} \sum_{i\in[m],j \ne j'\in[n]} \big\|\E[\vU_{i,j}] - \E[\vU_{i,j'}] \big\|_2^2 / \tau\bigg)
         + p
    \Bigg)
    ,
\end{align}
where the equality conditions of  \eqref{eq:proof:main:ineq:9} and \eqref{eq:proof:main:ineq:10} are fulfilled if \eqref{eq:proof:main:condition:4} and \eqref{eq:proof:main:condition:5} are satisfied.

Finally, by combining the results of minimizing each loss in \eqref{eq:proof:main:ineq:sup} and \eqref{eq:proof:main:ineq:10}, we obtain:
\begin{align}
    \loss (\vU)
    &= \nonumber
    (1-\alpha)\; \loss_{\op{Sup}}(\vU)+\alpha\; \loss_{\op{Self}}(\vU)
    \\ & \geq \nonumber 
    (1-\alpha)
    \log 
    \Bigg(
    -\frac{m}{m-1}/\tau
    - \frac{mn-1}{2m(m-1)n^2(n-1)}
    \sum_{i\in[m],j \ne j'\in[n]}
    \big\|\E[\vU_{i,j}] - \E[\vU_{i,j'}] \big\|_2^2 / \tau
    \bigg)
    \\&\qquad\qquad\qquad\qquad \nonumber
    + (n-1)p
    + p\cdot\exp\bigg(
        \frac{1}{2mn(n-1)}\sum_{i\in[m],j \ne j'\in[n]} \big\|\E[\vU_{i,j}] - \E[\vU_{i,j'}] \big\|_2^2  / \tau
    \bigg)
    \Bigg)
    \\ & \quad \nonumber +
    \alpha
    \log
    \Bigg(
        (m-1)np\cdot
        \exp\bigg(
            - \frac{m}{m-1} /\tau
            +\frac{1}{2m(m-1)n^2} 
            \sum_{i\in[m],j \ne j'\in[n]} \big\|\E[\vU_{i,j}] - \E[\vU_{i,j'}] \big\|_2^2 / \tau \bigg)
        \\& \label{eq:main:proof:loss:result}
        \qquad\qquad\qquad\ + (n-1)p\cdot
        \exp\bigg(
            - \frac{1}{2mn(n-1)} \sum_{i\in[m],j \ne j'\in[n]} \big\|\E[\vU_{i,j}] - \E[\vU_{i,j'}] \big\|_2^2 / \tau\bigg)
         + p
    \Bigg)
    \\&:= \label{eq:main:proof:loss:result:func}
        l\left( \sum_{i\in[m],j \ne j'\in[n]} \big\|\E[\vU_{i,j}] - \E[\vU_{i,j'}] \big\|_2^2 \right),
\end{align}
where the function $l$ is defined for simple notation, as \eqref{eq:main:proof:loss:result} depends on the term $\sum_{i\in[m],j \ne j'\in[n]} \big\|\E[\vU_{i,j}] - \E[\vU_{i,j'}] \big\|_2^2$.

Note that the \ours{} $\vU^\delta$ in Def.~\ref{def:model:general} satisfies all of equality conditions in \eqref{eq:proof:main:condition:1}, \eqref{eq:proof:main:condition:2}, \eqref{eq:proof:main:condition:3}, \eqref{eq:proof:main:condition:4}, \eqref{eq:proof:main:condition:5}, \eqref{eq:proof:main:condition:6}, \eqref{eq:proof:main:condition:7}, and \eqref{eq:proof:main:condition:8}. Moreover, the \ours{} $\vU^\delta$ in Def.~\ref{def:model:general} can attain all possible values of $\sum_{i\in[m],j \ne j'\in[n]} \big\|\E[\vU_{i,j}] - \E[\vU_{i,j'}] \big\|_2^2$.
As a result, the equality in \eqref{eq:main:proof:loss:result} holds when $\vU$ is substituted by $\vU^\delta$. That is,
\begin{align*}
    \loss(\vU)
    &\geq
    l\left( \sum_{i\in[m],j \ne j'\in[n]} \big\|\E[\vU_{i,j}] - \E[\vU_{i,j'}] \big\|_2^2 \right)
    \\&\geq
    \min_{\vU} \; l\left( \sum_{i\in[m],j \ne j'\in[n]} \big\|\E[\vU_{i,j}] - \E[\vU_{i,j'}] \big\|_2^2 \right)
    \\&=
    \min_{\vU^\delta} \; l\left( \sum_{i\in[m],j \ne j'\in[n]} \big\|\E[\vU_{i,j}] - \E[\vU_{i,j'}] \big\|_2^2 \right)
    \\&=\min_{\vU^\delta}\loss(\vU^\delta)
    \\&=\min_{\delta}\loss(\vU^\delta).
\end{align*}

Now, we only need to determine $\delta^\star$ of \ours{} in Def.~\ref{def:model:general} such that minimize $\loss (\vU^{\delta^\star})$ in \eqref{eq:loss}. Note that the denominator in each logarithm in $\loss_{\op{Sup}}(\vU^\delta)$ and $\loss_{\op{Self}}(\vU^\delta)$ has the same value for a given  $\delta$, as follows. For any $\vu^\delta \in \vU^\delta$, by using \eqref{eq:model:same:general}-\eqref{eq:model:different:general} in Def.~\ref{def:model:general},

\begin{align}
\sum_{\vw^\delta \in \vU^\delta}\exp\left(\left(\vu^\delta\right)^\top \vw^\delta / \tau\right)
& = \nonumber
p\cdot\exp(1/\tau) + (n-1)p \cdot\exp\left( \left( 1 - \delta^2 \frac{mn}{mn-1}  \right) /\tau\right) 
\\ & \quad \nonumber
+ (m-1)np\cdot \exp\left(\left(- \frac{1}{m-1} + \delta^2 \frac{m(n-1)}{(m-1)(mn-1)}\right) / \tau\right)
\\ & := g(\delta;m,n,p,\tau), \label{eq:main:g:denominator}
\end{align}
where $g(\delta;m,n,p,\tau)$ in \eqref{eq:main:g:denominator} is defined for the simple notation.

Then, the losses $\loss_{\op{Sup}}(\vU^\delta)$ and $\loss_{\op{Self}}(\vU^\delta)$ are simplified as follows.
\begin{align*}
\loss_{\op{Sup}}(\vU^{\delta})
& = 
- \frac{1}{mn(n-1)p^2} \sum_{i\in[m], j\ne j'\in[n]}\sum_{\vu \in \vU_{i,j}}\sum_{\vv \in \vU_{i,j'}}
\log \frac{\exp(\vu^\top \vv / \tau)}{\sum_{\vw \in \vU}\exp(\vu^\top \vw / \tau)}
\\ & = 
- \frac{1}{mn(n-1)p^2} \sum_{i\in[m], j\ne j'\in[n]}\sum_{\vu \in \vU_{i,j}}\sum_{\vv \in \vU_{i,j'}}
\log \frac{\exp\big(\big(1-\delta^2\frac{mn}{mn-1}\big)/ \tau\big)}{g(\delta;m,n,p,\tau)}
\\ & = 
\log \frac{g(\delta;m,n,p,\tau)}{\exp\big(\big(1-\delta^2\frac{mn}{mn-1}\big)/ \tau\big)},
\\  
\loss_{\op{Self}}(\vU^{\delta})
& = 
-\frac{1}{mnp^2} \sum_{i\in[m], j\in[n]}\sum_{\vu \in \vU_{i,j}}\sum_{\vv \in \vU_{i,j}}
\log \frac{\exp(\vu^\top \vv / \tau)}{ \sum_{\vw \in \vU}\exp(\vu^\top \vw / \tau)}
\\ & = 
-\frac{1}{mnp^2} \sum_{i\in[m], j\in[n]}\sum_{\vu \in \vU_{i,j}}\sum_{\vv \in \vU_{i,j}}
\log \frac{\exp(1 / \tau)}{g(\delta;m,n,p,\tau)}
\\ & = \log \frac{g(\delta;m,n,p,\tau)}{\exp(1 / \tau)}.
\end{align*}

As a result, $\loss(\vU^\delta)$ is rewritten as
\begin{align}
    \loss (\vU^{\delta})
    &= \nonumber
    (1-\alpha)\; \loss_{\op{Sup}}(\vU^{\delta})+\alpha\; \loss_{\op{Self}}(\vU^{\delta})
    \\
    &= \nonumber
    (1-\alpha)\cdot
    \log \frac{g(\delta;m,n,p,\tau)}{\exp\left(\left(1-\delta^2\frac{mn}{mn-1}\right)/ \tau\right)}
    +
    \alpha\cdot
    \log \frac{g(\delta;m,n,p,\tau)}{\exp(1/ \tau)}
    \\ 
    &= \nonumber
    \log g(\delta;m,n,p,\tau)
    -(1-\alpha) \left(1-\delta^2\frac{mn}{mn-1}\right)/\tau
    - \alpha/ \tau
    \\ 
    &= \nonumber
    \log \bigg(p\cdot\exp(1/\tau) + (n-1)p\cdot\exp\left( \left( 1 - \delta^2 \frac{mn}{mn-1}  \right) /\tau\right) 
    \\ & \nonumber \qquad \quad
    + (m-1)np\cdot \exp\left(\left(- \frac{1}{m-1} + \delta^2 \frac{m(n-1)}{(m-1)(mn-1)}\right) / \tau\right) \bigg) 
    \\& \nonumber\quad
    -(1-\alpha) \cdot\left(1-\delta^2\frac{mn}{mn-1}\right)/\tau
    - \alpha \cdot(1/ \tau)
    \\ \nonumber 
    &=
    \log \bigg(1 + (n-1)\cdot\exp\left( -\Tilde{\delta} /\tau\right) 
    + (m-1)n\cdot \exp\left(\left(- \frac{m}{m-1} + \Tilde{\delta} \frac{n-1}{(m-1)n}\right) / \tau\right) \bigg) 
    \\& \label{eqn:proof:delta:tilde} \quad
    + \log \left( p \cdot\exp(1/\tau)\right)
    -(1-\alpha) (1-\Tilde{\delta})/\tau
    - \alpha / \tau,
    \\ \nonumber 
    &=
    \log \bigg(1 + (n-1)\cdot\exp\left( -\Tilde{\delta} /\tau\right) 
    + (m-1)n\cdot \exp\left(\left(- \frac{m}{m-1} + \Tilde{\delta} \frac{n-1}{(m-1)n}\right) / \tau\right) \bigg) 
    + \log  p
    +(1-\alpha) \Tilde{\delta}/\tau,
\end{align}
where $\Tilde{\delta} := \delta^2\frac{mn}{mn-1} \in \big[ 0, \frac{n}{n-1} \big]$ in \eqref{eqn:proof:delta:tilde} is the monotonic increasing transformation of $\delta\in \left[ 0, \sqrt{\frac{mn-1}{m(n-1)}} \right]$.

To find $\Tilde{\delta}^\star$ that minimize $\loss \big(\vU^{\Tilde{\delta}}\big)$,
\begin{align}
\frac{\partial}{\partial\Tilde{\delta}} \loss \big(\vU^{\Tilde{\delta}}\big)
    &= \nonumber
    \frac{
        -\frac{n-1}{\tau}\cdot\exp\left( -\Tilde{\delta} /\tau\right) 
        +\frac{n-1}{\tau} \cdot\exp\left(\left(- \frac{m}{m-1} + \Tilde{\delta} \frac{n-1}{(m-1)n}\right) / \tau\right)
    }{
        1 + (n-1)\cdot\exp\left( -\Tilde{\delta} /\tau\right) 
        + (m-1)n \cdot\exp\left(\left(- \frac{m}{m-1} + \Tilde{\delta} \frac{n-1}{(m-1)n}\right) / \tau\right)
    }
    +(1-\alpha) /\tau
    \\
    &= \nonumber
    \frac{1}{\tau} \cdot
    \frac{
        -(n-1)\cdot\exp\left( -\Tilde{\delta} /\tau\right) 
        +(n-1) \cdot\exp\left(\left(- \frac{m}{m-1} + \Tilde{\delta} \frac{n-1}{(m-1)n}\right) / \tau\right)
    }{
        1 + (n-1)\cdot\exp\left( -\Tilde{\delta} /\tau\right) 
        + (m-1)n \cdot\exp\left(\left(- \frac{m}{m-1} + \Tilde{\delta} \frac{n-1}{(m-1)n}\right) / \tau\right)
    }
    +\frac{1}{\tau} \cdot (1-\alpha) 
    \\  \label{eq:proof:main:last:tilde:delta} 
    &=
    \frac{1}{\tau} \cdot
    \frac{
    h\big(\Tilde{\delta}; m,n,\tau,\alpha\big)
    }{
    1 + (n-1) \cdot\exp( - \Tilde{\delta} /\tau) 
    + (m-1)n \cdot\exp\left(\left(- \frac{m}{m-1} + \Tilde{\delta} \frac{n-1}{(m-1)n}\right) / \tau\right)
    },
\end{align}
where we define the function $h$ in \eqref{eq:proof:main:last:tilde:delta} as
\begin{align*}
h\big(\Tilde{\delta}; m,n,\tau,\alpha\big)
&=
(1-\alpha)
- \alpha(n-1)\cdot\exp( - \Tilde{\delta} /\tau)
\\
&\quad
+ (mn-1-\alpha (m-1)n)\cdot \exp\left(\left(- \frac{m}{m-1} + \Tilde{\delta} \frac{n-1}{(m-1)n}\right) / \tau\right).    
\end{align*}

Note that the denominator in \eqref{eq:proof:main:last:tilde:delta} is always positive. Moreover, the function $h\big(\Tilde{\delta}; m,n,\tau,\alpha\big)$ in \eqref{eq:proof:main:last:tilde:delta} is monotonically increasing with respect to $\Tilde{\delta} \in \big[ 0, \frac{n}{n-1} \big]$, and the following value is non-negative.
\begin{align*}
h\left(\frac{mn}{mn-1}; m,n,\tau,\alpha\right)
&=
(1-\alpha)
- \alpha(n-1)\cdot\exp\left( - \frac{mn}{mn-1}/\tau\right)
\\
&\quad
+ (mn-1-\alpha (m-1)n)\cdot \exp\left(\left(- \frac{m}{m-1} + \frac{m(n-1)}{(m-1)(mn-1)}\right) / \tau\right)
\\ &=
(1-\alpha)
- \alpha(n-1)\cdot\exp\left(- \frac{mn}{mn-1}/\tau\right)
+ (mn-1-\alpha (m-1)n) \cdot\exp\left(- \frac{mn}{mn-1}/ \tau\right)
\\ &=
(1-\alpha)
+ (mn-1)(1-\alpha) \cdot\exp\left(- \frac{mn}{mn-1}/ \tau\right)
\\ &\geq 0.
\end{align*}

Therefore, $\Tilde{\delta}^\star$, which minimizes $\loss \big(\vU^{\Tilde{\delta}}\big)$, can be determined as follows:
\begin{equation}
    \nonumber
    \Tilde{\delta}^\star = 
    \begin{cases}
        0,
        &\text{if } h\big(0; m,n,\tau,\alpha\big)\geq0,
        \\
        \Tilde{\delta} \in \left( 0, \frac{mn}{mn-1}\right)
        \text{ such that }h\big(\Tilde{\delta}; m,n,\tau,\alpha\big)=0,
        &\text{otherwise.}
    \end{cases}
\end{equation}

This can be rewritten as:
\begin{equation}
    \nonumber
    \delta^\star = 
    \begin{cases}
        0,
        &\text{if } h\big(0; m,n,\tau,\alpha\big)\geq0,
        \\
        \delta\in \left( 0, 1\right],
        \text{ such that }h\left(\delta^2\frac{mn}{mn-1}; m,n,\tau,\alpha\right)=0,
        &\text{otherwise}.
    \end{cases}
\end{equation}

As a result, all embedding sets $\vU^{\star}$ that minimize the loss $\loss(\vU)$ in \eqref{eq:loss} are included in the \ours{} as follows:
\begin{align}
    \nonumber
    \forall \vU^{\star} \in \argmin_{\vU} \loss(\vU), \quad \exists! \delta \in [0,1] \text{ such that } \vU^{\delta} = \vU^{\star}.
\end{align}
 where the uniqueness of $\delta$ arises from the monotonicity of $h$.

\end{proof}

\newpage
\subsection{Preventing Class Collapse}
\label{appendix:proof:class:collapse}

\begin{theorem}
    Let $\vU^{\star}$ be the set of optimal embedding vectors that minimizes the loss $\loss(\vU)$ in \eqref{eq:loss}.
    Then, the class collapse does not happen, \ie $\Var[\vU_i^{\star}]>0$ for all $i\in[m]$, if and only if 
    the loss-combining coefficient $\alpha$ satisfies
    \begin{equation}
        \nonumber
        \alpha
        \in \left(
        \frac{mn-1 + \exp\big( \frac{m}{m-1} / \tau\big)}{mn-n + n\cdot \exp\big( \frac{m}{m-1} / \tau\big)}
        , 1 \right]
    \end{equation}
    for a given temperature $\tau >0$.
    This necessary and sufficient condition for preventing class-collapse can be re-written as 
    \begin{equation*}
        \tau
        \in
        \left(0, 
        \frac{1}{(1-\frac{1}{m}) \cdot
        \log\big( \frac{mn-1 -\alpha (m-1)n}{ \alpha n - 1}
        \big)}
        \right)
    \end{equation*}
    for a given $\alpha \in \big(\frac{1}{n},1\big]$.
\end{theorem}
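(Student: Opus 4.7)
The plan is to convert the class-collapse criterion into a single-variable sign condition on the SSEM parameter $\delta^\star$ of the optimal embedding, and then isolate $\alpha$ and $\tau$ from the resulting algebraic inequality.

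First I would apply Theorem~\ref{thm:app:optimal:ssem} to write any optimal embedding set as $\vU^\star = \vU^{\delta^\star}$ for a unique $\delta^\star \in [0,1]$ that also satisfies the explicit dichotomy $\delta^\star = 0$ if and only if $h(0;m,n,\tau,\alpha) \ge 0$. By Proposition~\ref{thm:app:ssem:variance}, the within-class variance satisfies $\Var[\vU_i^{\delta^\star}] = (\delta^\star)^2 \tfrac{m(n-1)}{mn-1}$, which is strictly positive for every class if and only if $\delta^\star > 0$. Combining these two facts reduces the absence of class collapse to the single strict inequality $h(0;m,n,\tau,\alpha) < 0$.

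Next, I would evaluate $h$ at $x=0$ using its definition in Theorem~\ref{thm:app:optimal:ssem}:
\begin{equation*}
h(0;m,n,\tau,\alpha) = 1 - \alpha n + \bigl(mn - 1 - \alpha(m-1)n\bigr)\exp\!\left(-\tfrac{m}{(m-1)\tau}\right).
\end{equation*}
Writing $E := \exp(-\tfrac{m}{(m-1)\tau}) > 0$, the inequality $h(0) < 0$ rearranges to
\begin{equation*}
\alpha > \frac{1 + (mn-1)E}{n\bigl(1 + (m-1)E\bigr)},
\end{equation*}
and multiplying numerator and denominator by $1/E = \exp(\tfrac{m}{(m-1)\tau})$ produces exactly the bound on $\alpha$ stated in the theorem for fixed $\tau$.

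For the $\tau$-bound at fixed $\alpha \in (1/n, 1]$, I would first verify two positivity facts: $\alpha n - 1 > 0$ by hypothesis, and $mn - 1 - \alpha(m-1)n > 0$ since $\alpha(m-1)n \le (m-1)n < mn - 1$ whenever $n \ge 2$ (the case $n \ge 2$ is forced because $\alpha > 1/n$ together with $\alpha \le 1$ cannot hold for $n=1$). These positivity checks let me rewrite $h(0) < 0$ as
\begin{equation*}
\exp\!\left(-\tfrac{m}{(m-1)\tau}\right) < \frac{\alpha n - 1}{mn - 1 - \alpha(m-1)n},
\end{equation*}
and taking logarithms, then negating and reciprocating, isolates $\tau$ in the stated form. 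The main obstacle is purely bookkeeping: the two positivity conditions must be established before dividing and before taking the logarithm, and one must track the direction of the inequality carefully across the negation of the exponent and the reciprocal step.
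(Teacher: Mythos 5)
Your proposal is correct and follows essentially the same route as the paper's proof: reduce absence of class collapse to $\delta^\star>0$ via Proposition~\ref{thm:app:ssem:variance}, invoke the dichotomy in Theorem~\ref{thm:app:optimal:ssem} to turn this into $h(0;m,n,\tau,\alpha)<0$, and then rearrange that inequality to isolate $\alpha$ and $\tau$ in turn. Your explicit verification that $mn-1-\alpha(m-1)n>0$ (rather than merely $\ge 0$, as the paper argues) is a small but welcome tightening, since the subsequent division requires strict positivity.
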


\begin{proof}
To find the condition for preventing class-collapse. Proposition~\ref{thm:app:ssem:variance} implies that $\delta^\star$ of \ours{} must be positive. Therefore,  from \eqref{eq:proof:find:optimal:delta} in Theorem~\ref{thm:app:optimal:ssem}, the necessity and sufficient condition of preventing class-collapse is $h\big(0; m,n,\tau\big)<0$ where
\begin{align*}
h\big(x; m,n,\tau,\alpha\big)
&=
(1-\alpha)
- \alpha(n-1)\cdot\exp( - x /\tau)
\\
&\quad
+ (mn-1-\alpha (m-1)n) \cdot\exp\left(\left(- \frac{m}{m-1} + x \frac{n-1}{(m-1)n}\right) / \tau\right).  
\end{align*}
This condition can be rewritten as follows:
\begin{align}
    0
    & \nonumber > 
    h\big(0; m,n,\tau,\alpha\big)
    \\ \nonumber &=    
    (1-\alpha)
    -  \alpha(n-1) \cdot\exp(0)
    + (mn-1 -\alpha (m-1)n) \cdot\exp\bigg(- \frac{m}{m-1} / \tau\bigg)
    \\ \label{eq:proof:from:here:1} &=    
    1-\alpha n
    + (mn-1 -\alpha (m-1)n) \cdot\exp\bigg(- \frac{m}{m-1} / \tau\bigg)
    \\ \nonumber &=    
    -\alpha n \bigg( 1+ (m-1)\cdot \exp\bigg(- \frac{m}{m-1} / \tau\bigg)\bigg)
    +1 + (mn-1)\cdot\exp\bigg(- \frac{m}{m-1} / \tau\bigg),
\end{align}
which is equal to
\begin{align*}
    \alpha
    &>
    \frac{1 + (mn-1)\cdot\exp\big(- \frac{m}{m-1} / \tau\big)}{ n \big( 1+ (m-1) \cdot\exp\big(- \frac{m}{m-1} / \tau\big)\big)}
    =
    \frac{mn-1 + \exp\big( \frac{m}{m-1} / \tau\big)}{mn-n + n\cdot\exp\big( \frac{m}{m-1} / \tau\big)}
    .
\end{align*}

Or equivalently, from \eqref{eq:proof:from:here:1},
\begin{align}
    \label{eq:proof:from:here:2}
    \alpha n - 1
    & > 
    (mn-1 -\alpha (m-1)n) 
     \cdot\exp\bigg(- \frac{m}{m-1} / \tau\bigg).
\end{align}

Note that
\begin{align*}
    \frac{mn-1}{mn-n} \geq \frac{mn-1}{mn-1} =1 \geq \alpha,
\end{align*}
which implies $mn-1 -\alpha (m-1)n\geq 0$. Moreover,
\begin{align*}
    \frac{
        \alpha n - 1
    }{
        mn-1 -\alpha (m-1)n
    }
    =
    \frac{
        \alpha n - 1
    }{
        \alpha n -1 + mn(1-\alpha)
    }
    \leq
    \frac{
        \alpha n - 1
    }{
        \alpha n -1 
    }
    =1.
\end{align*}

Then the following conditions are equivalent to \eqref{eq:proof:from:here:2}:
\begin{align*}
    \frac{
        \alpha n - 1
    }{
        mn-1 -\alpha (m-1)n
    }
    & >
     \exp\bigg(- \frac{m}{m-1} / \tau\bigg),
\end{align*}
\begin{align*}
    \log\bigg(
    \frac{
        \alpha n - 1
    }{
        mn-1 -\alpha (m-1)n
    }
    \bigg)
    & >
    - \frac{m}{m-1} / \tau,
\end{align*}
\begin{align*}
    \tau
    & <
    \frac{m}{
    (m-1)\cdot
    \log\big(
    \frac{
        mn-1 -\alpha (m-1)n
    }{
        \alpha n - 1
    }
    \big)
    }
    =
    \left(0, 
    \frac{1}{(1-\frac{1}{m})\cdot
    \log\big( \frac{mn-1 -\alpha (m-1)n}{ \alpha n - 1}
    \big)}
    \right).
\end{align*}

Note that the minimum range of $\alpha$ is $\frac{1}{n}$, which is comes from
\begin{equation*}
    \frac{1}{n} = \lim_{\tau \to 0} 
    \frac{mn-1 + \exp\big( \frac{m}{m-1} / \tau\big)}{mn-n + n\cdot\exp\big( \frac{m}{m-1} / \tau\big)}.
\end{equation*}

\end{proof}

\vspace{20pt}

\subsection{The Optimality of Class-Conditional InfoNCE Loss}
\label{appendix:proof:optimality:other}

\begin{proposition}
Let the class-conditional InfoNCE loss $\loss_{\op{cNCE}}(\vU)$ be defined as
\begin{equation}
    \nonumber
    \loss_{\op{cNCE}}(\vU)
     = -\frac{1}{mnp^2} \sum_{i\in[m], j\in[n]}\sum_{\vu \in \vU_{i,j}}\sum_{\vv \in \vU_{i,j}}
     \log \frac{\exp(\vu^\top \vv / \tau)}{ \sum_{\vw \in \vU_i}\exp(\vu^\top \vw / \tau)}.
\end{equation}
Suppose the embedding dimension satisfies $d \geq mn - 1$ for given $m, n \in \mathbb{N}$ with $mn \geq 2$. Then, for any embedding set $\vU$,
\begin{equation}
    \nonumber
    \loss_{\op{cNCE}}(\vU) \geq \loss_{\op{cNCE}}(\vU^\delta)
\end{equation}
holds for $\delta = \sqrt{\frac{mn - 1}{m(n - 1)}}$. \end{proposition}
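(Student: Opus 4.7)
The plan is to exploit the fact that $\loss_{\op{cNCE}}(\vU)$ decomposes as an average of per-class losses, each depending only on the embeddings of a single class. Explicitly,
\begin{equation*}
\loss_{\op{cNCE}}(\vU) = \frac{1}{m}\sum_{i\in[m]} L_i(\vU_i), \quad L_i(\vU_i) := -\frac{1}{np^2}\sum_{j\in[n]}\sum_{\vu\in\vU_{i,j}}\sum_{\vv\in\vU_{i,j}}\log\frac{\exp(\vu^\top\vv/\tau)}{\sum_{\vw\in\vU_i}\exp(\vu^\top\vw/\tau)}.
\end{equation*}
Each $L_i$ is exactly a self-supervised contrastive loss on the $np$ vectors of class $i$, and two different class indices never appear together in any term; hence the per-class losses can be minimized independently over $\vU_1,\dots,\vU_m$.

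First I would minimize a single $L_i$. By prior results on the optimum of self-supervised contrastive loss \citep{lu2022neural, lee2024analysis}, or directly by replaying the Jensen-style bounding argument of the $\loss_{\op{Self}}$ portion in the proof of Theorem~\ref{thm:app:optimal:ssem} with the denominator $\sum_{\vw\in\vU}$ replaced by $\sum_{\vw\in\vU_i}$ (so that only the augmentation and different-instance-within-class blocks remain), the minimum of $L_i$ is attained precisely when (i) the $p$ augmentations of every instance collapse to a single common unit vector, and (ii) the $n$ resulting instance-level representatives form an $(n-1)$-simplex ETF, \ie they satisfy $\vu^\top\vv=-1/(n-1)$ for $\vu,\vv$ from different instances within class $i$. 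The hypothesis $d\geq mn-1\geq n-1$ guarantees that such a configuration exists in $\sR^d$.

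Next I would verify that SSEM at $\delta^\star:=\sqrt{(mn-1)/(m(n-1))}$ realizes this within-class structure in every class simultaneously. Substituting $\delta^\star$ into \eqref{eq:model:same:class:general} yields $1-(\delta^\star)^2\cdot\frac{mn}{mn-1}=1-\frac{n}{n-1}=-\frac{1}{n-1}$, while \eqref{eq:model:same:general} forces augmentations of each instance to coincide. Hence $\vU_i^{\delta^\star}$ is an $(n-1)$-simplex ETF of the required form for every $i$, and existence of $\vU^{\delta^\star}$ in $\sR^d$ under $d\geq mn-1$ is supplied by Proposition~\ref{thm:exist:model}. Since each $L_i$ is individually minimized at $\vU_i^{\delta^\star}$, their average $\loss_{\op{cNCE}}$ is minimized at $\vU^{\delta^\star}$, giving $\loss_{\op{cNCE}}(\vU)\geq\loss_{\op{cNCE}}(\vU^{\delta^\star})$ for every $\vU$.

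The main obstacle is step two: rigorously establishing that $L_i$ is minimized by the collapsed-instance simplex ETF configuration. One cannot cite Theorem~\ref{thm:app:optimal:ssem} as a black box, because it uses the global denominator $\sum_{\vw\in\vU}$ rather than $\sum_{\vw\in\vU_i}$; however, the same two-block Jensen decomposition (into an augmentation block and a different-instance block) carries through, with the different-class block simply absent, and the equality conditions reduce to exactly those defining an $(n-1)$-simplex ETF with collapsed augmentations. A secondary subtlety worth noting is that $\loss_{\op{cNCE}}$ imposes no constraint on between-class inner products, so the minimizer is highly non-unique; SSEM at $\delta^\star$ is merely one canonical representative (which happens to yield orthogonal between-class vectors), consistent with the proposition stating an inequality rather than a uniqueness claim.
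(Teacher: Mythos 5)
Your proof is correct, but it takes a genuinely different route from the paper's. The paper never decomposes by class: it replays its Jensen machinery (Proposition~\ref{thm:multivariate:jensen} together with the block decomposition of the denominator into $\vU_i\setminus\vU_{i,j}$ and $\vU_{i,j}$) directly on $\loss_{\op{cNCE}}$, arrives at a lower bound depending only on $\sum_{i,j\ne j'}\|\E[\vU_{i,j}]-\E[\vU_{i,j'}]\|_2^2$, observes that \ours{} attains equality and sweeps all values of that quantity, and then computes $\loss_{\op{cNCE}}(\vU^\delta)=\log\big((n-1)p\exp(-\delta^2\tfrac{mn}{mn-1}/\tau)+p\big)$ explicitly, which is monotone decreasing in $\delta$ and hence minimized at the right endpoint of the admissible range. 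Your argument instead exploits the complete decoupling of classes in $\loss_{\op{cNCE}}$, identifies each per-class term $L_i$ as the self-supervised contrastive loss on $n$ instances with $p$ augmentations, invokes (or re-derives) the known characterization of its minimizer as a collapsed-augmentation $(n-1)$-simplex ETF, and checks that $\vU^{\delta}$ at $\delta=\sqrt{(mn-1)/(m(n-1))}$ realizes exactly this in every class since $1-\delta^2\tfrac{mn}{mn-1}=-\tfrac{1}{n-1}$. Your route is more modular and makes the value of $\delta$ transparent (it is the point where within-class similarities reach the ETF value), and it even shows the bound is attainable under the weaker condition $d\ge n-1$, though $d\ge mn-1$ is still needed for $\vU^\delta$ itself to exist via Proposition~\ref{thm:exist:model}; the cost is that you must either import the ETF-optimality result from \citet{lu2022neural,lee2024analysis} or redo the two-block Jensen argument per class, which you correctly flag as the one step that cannot be cited from Theorem~\ref{thm:app:optimal:ssem} as a black box because of the changed denominator. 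Your closing remarks --- non-uniqueness of the global minimizer and orthogonality of between-class vectors at this $\delta$ --- are both accurate.
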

\begin{proof}

We minimize $\loss_{\op{cNCE}}(\vU)$ using a similar approach as in the proofs of Theorem~\ref{thm:app:optimal:ssem}.

\begin{align}
     \loss_{\op{cNCE}}(\vU)
     &= \nonumber
     \frac{1}{mnp^2} \sum_{i\in[m], j\in[n]}\sum_{\vu \in \vU_{i,j}}\sum_{\vv \in \vU_{i,j}}
     \log
     \left(\sum_{\vw \in \vU_i}\exp(\vu^\top (\vw-\vv) / \tau)\right)
    \\ & \geq \nonumber 
    \frac{1}{mnp^2}
    \sum_{i\in[m], j\in[n]}\sum_{\vu \in \vU_{i,j}}\sum_{\vv \in \vU_{i,j}}
    \log
    \Bigg(
         (n-1)p\cdot\exp\bigg(\frac{1}{(n-1)p}\sum_{\vw \in \vU_i \setminus \vU_{i,j}}\vu^\top \vw / \tau - \vu^\top\vv / \tau\bigg)
        \\& \label{eq:proof:cnce:ineq:1}
        \qquad\qquad\qquad\qquad\qquad\qquad\qquad\qquad
         + p\cdot \exp\bigg(\frac{1}{p}\sum_{\vw \in \vU_{i,j}}\vu^\top \vw/\tau - \vu^\top \vv / \tau\bigg)
    \Bigg)
\end{align}
where the inequality in \eqref{eq:proof:cnce:ineq:1} comes from using Jensen's inequality two times. The equality in \eqref{eq:proof:cnce:ineq:1} is achieved if there exist some constants $c_1, c_2 \in \sR$ such that the following conditions hold for all $i \in [m], j\in [n]$, $\vu, \vv\in \vU_{i,j}$:
\begin{align}
    \vu^\top (\vw-\vv) &= c_{1}
    \qquad\qquad \forall \vw \in \vU_i\setminus \vU_{i,j},
    \label{eq:proof:cnce:condition:1}
    \\
    \vu^\top (\vw-\vv) &= c_{2}
    \qquad\qquad \forall \vw \in \vU_{i,j}.
    \label{eq:proof:cnce:condition:2}
\end{align}

Then, the following holds,
\begin{align}
    \loss_{\op{cNCE}}(\vU)
    & \geq \nonumber 
    \log
    \Bigg(
        (n-1) p
        \exp\bigg(
            \frac{1}{mn(n-1)p^2}\sum_{i\in[m], j\in[n]}\sum_{\vu \in \vU_{i,j}}\sum_{\vw \in \vU_i \setminus \vU_{i,j}}\vu^\top \vw / \tau
            \\& \label{eq:proof:cnce:ineq:2} \qquad\qquad\qquad\qquad\qquad
            - \frac{1}{mnp^2}\sum_{i\in[m], j\in[n]}\sum_{\vu \in \vU_{i,j}}\sum_{\vv \in \vU_{i,j}}\vu^\top\vv / \tau\bigg)
            +p
    \Bigg)
    \\
     & = \nonumber 
    \log
    \Bigg(
        (n-1)p
        \exp\bigg(
            - \frac{1}{2mn(n-1)} \sum_{i\in[m],j \ne j'\in[n]} \big\|\E[\vU_{i,j}] - \E[\vU_{i,j'}] \big\|_2^2 / \tau\bigg)
         + p
    \Bigg)
    \end{align}
    where the inequality in \eqref{eq:proof:cnce:ineq:2} holds from Proposition~\ref{thm:multivariate:jensen}, and the equality holds if \eqref{eq:proof:cnce:condition:1} and \eqref{eq:proof:cnce:condition:2} are satisfied.

Note that the \ours{} $\vU^\delta$ in Def.~\ref{def:model:general} satisfies all of equality conditions in \eqref{eq:proof:cnce:condition:1} and \eqref{eq:proof:cnce:condition:2}. Moreover, the \ours{} $\vU^\delta$ in Def.~\ref{def:model:general} can attain all possible values of $\sum_{i\in[m],j \ne j'\in[n]} \big\|\E[\vU_{i,j}] - \E[\vU_{i,j'}] \big\|_2^2$.
As a result, we only need to find $\delta \in \left[0, \sqrt{\frac{mn-1}{m(n-1)}}\right]$ that minimizes the loss, as below.
\begin{align*}
    \loss_{\op{cNCE}}(\vU)
    &\geq
    \min_{\delta}\loss_{\op{cNCE}}(\vU^\delta).
\end{align*}

Since the \ours{} $\vU^\delta$ satisfy the equality conditions of \eqref{eq:proof:cnce:ineq:1} and \eqref{eq:proof:cnce:ineq:2}, $\loss_{\op{cNCE}}(\vU^\delta)$ is rewritten as follows:
\begin{align*}
    \loss_{\op{cNCE}} (\vU^{\delta})
    &=
     \frac{1}{mnp^2} \sum_{i\in[m], j\in[n]}\sum_{\vu^{\delta} \in \vU_{i,j}^{\delta}}\sum_{\vv^{\delta} \in \vU_{i,j}^{\delta}}
     \log
     \left(\sum_{\vw^{\delta} \in \vU_i^{\delta}}\exp\left(\left(\vu^{\delta}\right)^\top \left(\vw^{\delta}-\vv^{\delta}\right) / \tau\right)\right)
    \\ 
    &= %
    \log
    \Bigg(
        (n-1) p\cdot
        \exp\bigg(
            \frac{1}{mn(n-1)p^2}\sum_{i\in[m], j\in[n]}\sum_{\vu^{\delta} \in \vU_{i,j}^{\delta}}\sum_{\vw^{\delta} \in \vU_i^{\delta} \setminus \vU_{i,j}^{\delta}}(\vu^{\delta})^\top \vw^{\delta} / \tau
            \\&
            \qquad\qquad\qquad\qquad\qquad\quad
            - \frac{1}{mnp^2}\sum_{i\in[m], j\in[n]}\sum_{\vu^{\delta} \in \vU_{i,j}^{\delta}}\sum_{\vv^{\delta} \in \vU_{i,j}^{\delta}}(\vu^{\delta})^\top\vv^{\delta} / \tau\bigg)
            +p
    \Bigg)
    \\ 
    &= %
    \log
    \Bigg(
        (n-1) p\cdot
        \exp\bigg(
            \left(1 - \delta^2 \frac{mn}{mn-1}\right)
            / \tau
            -1 / \tau\bigg)
            +p
    \Bigg)
    \\ 
    &= %
    \log
    \Bigg(
        (n-1) p\cdot
        \exp\bigg(
            - \delta^2 \frac{mn}{mn-1} / \tau
        \bigg)
        +p
    \Bigg)
    ,
\end{align*}
which is a monotonic decreasing function of $\delta$.
Since $\delta$ lies within the range of $\left[ 0, \sqrt{\frac{mn-1}{m(n-1)}} \right]$, the minimum loss is attained when $\delta=\frac{mn-1}{m(n-1)}$.
Therefore, for any embedding set $\vU$,
\begin{equation}
    \nonumber
    \loss_{\op{cNCE}}(\vU) \geq \loss_{\op{cNCE}}(\vU^\delta)
\end{equation}
holds for $\delta = \sqrt{\frac{mn - 1}{m(n - 1)}}$. 

\end{proof}

\newpage

\section{ADDITIONAL EXPERIMENTS ON SYNTHETIC DATA}
\label{sec:appendix:additional:synthetic}

In this section, we present additional experiments on synthetic data. We follow the training setup in Sec.~\ref{sec:experiment:synthetic}, except for using the lower embedding dimension where $d=50$.

Note that Theorem~\ref{thm:optimal:embedding} assumes $d\geq mn-1$, while these additional experiments do not strictly satisfy this condition, as $50=d < mn-1 = 99$. This suggests that even when the optimal embedding set of \ours{} cannot exist in a lower-dimensional embedding space by Proposition~\ref{thm:exist:model}, the average within-class variance of the learned embedding set still aligns with our theoretical analysis as shown in Fig.~\ref{fig:syn:additional}.

\begin{figure}[h]
    \centering
    \includegraphics[width=0.5\textwidth]{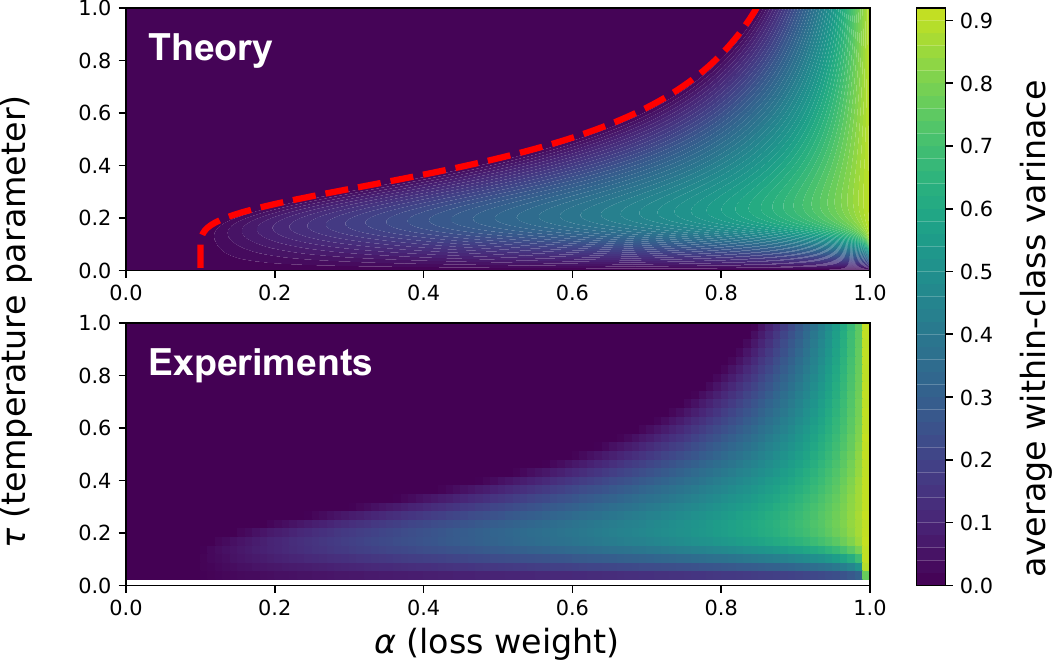}
    \caption{
    The within-class variance (averaged over different classes) of the learned embedding set $\vU$, for various loss-combining coefficient $\alpha$ and temperature $\tau$. (Top): Computed from theoretical results in Sec.~\ref{sec:emb:var}, (Bottom): Computed from the experiments on synthetic datasets in Appendix~\ref{sec:appendix:additional:synthetic}.
    }
    \label{fig:syn:additional}
\end{figure}

\section{EXPERIMENTS ON REAL DATA}
\label{appendix:real:exp}

\subsection{Details of Datasets and Augmentation}
\label{appendix:real:data}

We use the CIFAR-10 and ImageNet-100 datasets \citep{krizhevsky2009cifar, deng2009imagenet} for training the model. For CIFAR-10, we use the balanced mini-batch where the number of instances per class is equal. For ImageNet-100, we select a subset of 100 classes.

For data augmentation strategy, we follow prior works \citep{chen2020simple, he2020momentum}, including random cropping, color jitter, random grayscale, Gaussian blur, and random horizontal flipping.

To evaluate the performance of transfer learning, we use various downstream datasets, as described in Table~\ref{table:data}.

\begin{table}[h]
\caption{Real datasets used in experiments.}
\label{table:data}
\begin{center}
\resizebox{\columnwidth}{!}{%
\begin{tabular}{c|rrrrc}
\toprule
Name & \multicolumn{1}{c}{\# of classes} & \multicolumn{1}{c}{Training size} & \multicolumn{1}{c}{Validation size} & \multicolumn{1}{c}{Test size} & Evaluation metric \\
\midrule
CIFAR10 \citep{krizhevsky2009cifar}    & 10                                & 45000                              & 5000                               & 10000                             & Top-1 accuracy          \\
CIFAR100 \citep{krizhevsky2009cifar}    & 100                               & 45000                              & 5000                               & 10000                             & Top-1 accuracy          \\
ImageNet100 \citep{russakovsky2015imagenet}    & 1000                               & 126689                              & -                               & -                             & -  \\
MIT67 \citep{quattoni2009mit67}                & 67                                & 4690                               & 670                                & 1340                              & Top-1 accuracy          \\
DTD \citep{cimpoi2014dtd}                    & 47                                & 1880                               & 1880                               & 1880                              & Top-1 accuracy          \\
Food \citep{bossard14}                 & 101                               & 68175                              & 7575                               & 25250                             & Top-1 accuracy          \\
SUN397 \citep{xiao2010sun}         & 397                               & 15880                              & 3970                               & 19850                             & Top-1 accuracy          \\
Caltech101 \citep{fei2004learning}            & 101                               & 2525                               & 505                                & 5647                              & Mean per-class accuracy \\
CUB200 \citep{welinder2010caltech}           & 200                               & 4990                               & 1000                               & 5794                              & Mean per-class accuracy \\
Dogs \citep{dataset2011novel, deng2009imagenet} & 120                               & 10800                              & 1200                               & 8580                              & Mean per-class accuracy \\
Flowers \citep{nilsback2008data_flowers102}           & 102                               & 1020                         & 1020                               & 6149                              & Mean per-class accuracy \\
Pets \citep{parkhi2012pets}                  & 37                                & 2940                               & 740                                & 3669                              & Mean per-class accuracy \\
\bottomrule
\end{tabular}
}
\end{center}
\end{table}

\subsection{Details of Architecture and Training}

For training on the CIFAR-10 dataset, we use the modified ResNet-18 encoder \citep{he2016deep, chen2020simple} followed by the 2-layer MLP projector. Specifically, we replace the first convolutional layer with a 3x3 convolution at a stride of 1, removing the initial max pooling operation. For ImageNet-100, we use the ResNet-50 encoder, also followed by the 2-layer MLP projector.

The loss configurations for each dataset are summarized in Table~\ref{table:config}. When training on the CIFAR-10 dataset, we first use the loss in \eqref{eq:loss} where $\alpha$ was fixed at 0.5 and $\tau$ ranging from 0.05 to 1.00. Next, we use the loss in \eqref{eq:loss} where $\tau$ was fixed at $0.1$ and $\alpha$ ranging from 0.0 to 1.0. This entire training process using different loss hyperparameters is repeated for batch sizes of 100, 500, and 2000. 

The models are trained utilizing a single NVIDIA RTX 4090 GPU (for CIFAR-10) or two NVIDIA RTX A5000 GPUs (for ImageNet-100), employing the SGD optimizer with a learning rate of 0.05, a momentum of 0.9, and a weight decay of 1e-4. We apply the cosine learning rate schedule \citep{loshchilov2016sgdr}. Training runs for 1,000 epochs on CIFAR-10 and 200 epochs on ImageNet-100.

\begin{table}[h]
\caption{Training configuration.}
\label{table:config}
\begin{center}
\begin{tabular}{c|ccc}
\toprule
Training dataset  & Batch size      & $\alpha$ (loss-combining coefficient)   & $\tau$ (temperature parameter)          \\
\midrule
CIFAR-10 & 100, 500, 2000 & \begin{tabular}[c]{@{}c@{}}0.5\\ 0.0, 0.1, 0.2, $\cdots$, 1.0\end{tabular} & \begin{tabular}[c]{@{}c@{}}0.05 0.10 0.15, $\cdots$, 1.00\\ 0.1\end{tabular} \\
\cmidrule(lr){1-4}
ImageNet-100      & 256             & 0.0, 0.2, 0.4, 0.5, 0.6, 0.8, 1.0 & 0.1    
\\
\bottomrule
\end{tabular}
\end{center}
\end{table}

\subsection{Details of Linear Probing Evaluation}
\label{sec:appendix:transfer:linear}

To evaluate the average within-class variance of the learned embeddings, we begin by extracting the projector outputs of the training data, without applying any augmentations. These output vectors are subsequently normalized, after which the average within-class variance is computed across all classes.

Subsequently, we remove the projector head and evaluate the performance of the pretrained encoder on various downstream classification tasks by linear probing. Following prior works \citep{kornblith2019better, lee2021improving, oh2024effectiveness}, we evaluate the top-1 accuracy or mean per-class accuracy depending on the downstream dataset, as shown in Table~\ref{table:data}. To be specific, we train the linear classifier by minimizing the L2-regularized cross-entropy loss using limited-memory BFGS \citep{liu1989limited}. The best-performing classifier on the training data is subsequently used to predict the test data, followed by an evaluation of accuracy.

\subsection{Additional Experiments for Evaluating Transfer Learning Performance}
\label{sec:appendix:transfer}

We use the same ResNet-50 encoders from the linear probing evaluations in Table~\ref{table:data}, pretrained on ImageNet-100 with the SupCL loss $\mathcal{L}(\mathbf{U})$ in \eqref{eq:loss}, using $\tau=0.1$ and $\alpha$ ranging from $0$ to $1$. We then evaluate transfer learning performance on object detection and few-shot learning tasks, following related works \citep{he2020momentum, oh2024effectiveness}.

\begin{table}[h]
    \centering
    \caption{Transfer learning performance (\%) on VOC object detection task, using the metric of COCO-style AP on the VOC07 test dataset.}
    \begin{tabular}{c|c}
        \toprule
        $\alpha$ & AP \\
        \midrule
        0.0  & 53.21 \\
        0.2  & \textbf{53.27} \\
        0.5  & 53.09 \\
        0.8  & 51.47 \\
        1.0  & 50.72 \\
        \bottomrule
    \end{tabular}
    \label{table:object:detection}
\end{table}

Table~\ref{table:object:detection} presents the results for the object detection task. We follow the experimental setup of \citet{he2020momentum}, initializing a Faster R-CNN model with a ResNet-50 pre-trained on ImageNet-100 and fine-tuning it on the VOC07+12 training dataset \citep{everingham2010pascal}. Performance is evaluated using the metric of COCO-style average precision (AP) \citep{lin2014microsoft} on the VOC07 test dataset. As shown in Table~\ref{table:object:detection}, the best results are obtained when the loss-combining coefficient $\alpha$ is $0.2$, \ie when the pre-trained embeddings have a moderate amount of within-class variance.

\begin{table}[h]
    \centering
    \caption{Few-shot classification accuracy (\%) evaluated across various downstream datasets.}
    \resizebox{\textwidth}{!}{%
    \begin{tabular}{c|c|ccccc|c|ccccc}
        \toprule
        \multicolumn{1}{c|}{} & \multicolumn{6}{c|}{5-way 1-shot} & \multicolumn{6}{c}{5-way 5-shot} \\
        \cmidrule(lr){2-7} \cmidrule(lr){8-13}
        $\alpha$ & Avg. accuracy & Aircraft & CUB200 & FC100 & Flowers102 & DTD & Avg. accuracy & Aircraft & CUB200 & FC100 & Flowers102 & DTD \\
        \midrule
        0.0  & 51.98  & 31.78  & 49.09  & 43.69  & 78.47  & 56.88 & 68.48  & 45.18  & 66.30  & 62.81  & 93.45  & 74.66 \\
        0.2  & 51.53  & 30.85  & 47.90  & 45.88  & 75.65  & 57.35 & 67.95  & 44.19  & 64.21  & 65.12  & 91.61  & 74.61 \\
        0.5  & \textbf{52.33}  & 31.23  & 49.73  & 46.09  & 76.90  & 57.72 & \textbf{68.99}  & 44.21  & 66.39  & 65.24  & 92.76  & 76.35 \\
        0.8  & 49.84  & 30.42  & 45.08  & 41.03  & 75.00  & 57.66 & 66.54  & 43.40  & 61.11  & 60.60  & 91.52  & 76.07 \\
        1.0  & 46.16  & 29.67  & 40.96  & 35.21  & 69.49  & 55.45 & 61.82  & 39.96  & 55.61  & 51.64  & 88.04  & 73.87 \\
        \bottomrule
    \end{tabular}%
    }
    \label{table:fewshot}
\end{table}

Table~\ref{table:fewshot} shows empirical results for few-shot learning tasks. Following the linear probing protocol for few-shot learning \citep{lee2021improving, oh2024effectiveness}, we first extract representations of 224×224 images (without data augmentation) from the pre-trained model and train a classification head. We then evaluate the accuracy of 5-way 1-shot and 5-way 5-shot scenarios over 2000 episodes across five downstream datasets: Aircraft \citep{maji2013fine}, CUB200 \citep{welinder2010caltech}, FC100 \citep{oreshkin2018tadam}, Flowers \citep{nilsback2008data_flowers102}, and DTD \citep{cimpoi2014dtd}. As shown in Table~\ref{table:fewshot}, the model achieve optimal performance at $\alpha=0.5$, again corresponding to a moderate level of within-class variance.

These additional evaluations demonstrate that embeddings with a moderate amount of within-class variance achieve better performance across diverse transfer learning tasks.

\subsection{Comparisons with Related Methods}

The SupCL loss $\loss(\vU)$ in \eqref{eq:loss}, which we analyze, is formulated as a convex combination of the supervised contrastive loss and the self-supervised contrastive loss. We focus on this loss because it outperforms other existing CL losses \citep{islam2021broad, oh2024effectiveness}. To further demonstrate its effectiveness, we conducted additional experiments comparing the SupCL loss $\loss(\vU)$ in \eqref{eq:loss} with existing CL methods: 

\begin{itemize}
    \item SimCLR \citep{chen2020simple}: A widely adopted self-supervised CL method that does not utilize supervision.
    \item Vanilla SupCL \citep{khosla2020supervised}: The first method that integrates supervision into self-supervised CL.
    \item  $L_\text{spread}$ \citep{chen2022perfectly}: A variant of the SupCL loss designed to mitigate class collapse.
\end{itemize}

These methods are particularly relevant to our work, as our analysis focuses on a loss that is a convex combination of the supervised contrastive loss and the self-supervised contrastive loss. We also included $L_\text{spread}$  \citep{chen2022perfectly} in our comparison, since it was developed to address the class collapse problem.

For the comparison, we trained ResNet-50 on ImageNet-100 using a temperature of 0.1, a learning rate of 0.3, and a batch size of 256. The loss-combining weight $\alpha$ was set to 0.5 for both our method and $L_\text{spread}$. For consistency, the temperature parameter $\tau$ was fixed to 0.1 across all methods.

We evaluate transfer learning performance following Appendix~\ref{sec:appendix:transfer:linear}. As shown in Table~\ref{table:comparison:with:other:methods}, using the SupCL loss $\loss(\vU)$ in \eqref{eq:loss} outperforms other approaches, highlighting its superior effectiveness.

\begin{table}[h] 
    \centering
    \caption{Classification accuracy (\%) evaluated on various downstream datasets.}
    \label{table:comparison:with:other:methods}
    \resizebox{\textwidth}{!}{%
    \begin{tabular}{l | c | c c c c c c c c c c c}
        \toprule
        Method & Avg. accuracy & CIFAR10 & CIFAR100 & Caltech101 & CUB200 & Dog & DTD & Flowers102 & Food101 & MIT67 & Pets & SUN397 \\
        \midrule
        SimCLR & 63.06 & 84.15 & 63.17 & 78.64 & 30.40 & 46.19 & 65.00 & 85.71 & 62.02 & 62.91 & 67.60 & 47.89 \\
        Vanilla SupCL & 68.37 & 88.64 & 69.20 & 87.18 & 35.71 & 62.47 & 66.54 & 88.95 & 58.86 & 63.36 & 80.51 & 50.65 \\
        $L_\text{spread}$ & 68.84 & 89.61 & 69.95 & 87.55 & 37.59 & 62.21 & 65.90 & 89.05 & 60.98 & 63.88 & 79.22 & 51.25 \\
        SupCL in \eqref{eq:loss} & \textbf{69.06} & 89.43 & 69.45 & 88.35 & 38.48 & 62.78 & 66.33 & 89.49 & 60.36 & 63.43 & 80.68 & 50.88 \\
        \bottomrule
    \end{tabular}%
    }
\end{table}

\end{document}